\providecommand{\keywords}[1]
{
  \small	
  \textbf{\textit{Keywords---}} #1
}
\newtheorem{theorem}{Theorem}
\newtheorem{proposition}{Proposition}
\newtheorem{corollary}{Corollary}
\newtheorem{remark}{Remark}
\newtheorem{definition}{Definition}
\begin{document}

% TODO*: decide on the title:
% RWA
% Riemannian MultiResolution Analysis (RMRA)
% Riemlets :)
%\title{{Riemannian Wavelet Analysis: a Novel High-Dimensional Multivariate Time Series Analysis via Riemannian Composition of Diffusion Operators}}

%\title{{Riemannian Wavelet Analysis: a Novel Multivariate Time Series Analysis via Riemannian Composition}}

%\title{Riemannian Composition of Diffusion Operators for Spectral Component Analysis of {Multivariate Time Series from Multimodalities}}

% \title{Riemannian Wavelets: Operators on Symmetric Positive Definite Matrices for Characterizing Common Components in Multimodal Multiscale Data}

%\title{{Multivariate Time-Series Analysis Using Riemannian Composition of Diffusion Operators}}
\title{{Spatiotemporal Analysis Using Riemannian Composition of Diffusion Operators}}

\author[1]{Tal Shnitzer}
\author[2]{Hau-Tieng Wu}
\author[1]{Ronen Talmon}
\affil[1]{Viterbi Faculty of Electrical and Computer Engineering, Technion - Israel Institute of Technology, Haifa, Israel}
\affil[2]{Department of Mathematics and Department of Statistical Science, Duke University, Durham, NC, USA}
\date{}
\setcounter{Maxaffil}{0}
\renewcommand\Affilfont{\itshape\small}

\maketitle

\begin{abstract}
Multivariate time-series have become abundant in recent years, as many data-acquisition systems record information through multiple sensors simultaneously. 
In this paper, we assume the variables pertain to some geometry and present an operator-based approach for spatiotemporal analysis. Our approach combines three components that are often considered separately: (i) \emph{manifold learning} for building operators representing the geometry of the variables, (ii) \emph{Riemannian geometry of symmetric positive-definite matrices} for multiscale composition of operators corresponding to different time samples, and (iii) \emph{spectral analysis} of the composite operators for extracting different dynamic modes.
We propose a method that is analogous to the classical wavelet analysis, which we term Riemannian multi-resolution analysis (RMRA). 
We provide some theoretical results on the spectral analysis of the composite operators, and we demonstrate the proposed method on simulations and on real data.
\end{abstract}\hspace{10pt}

\keywords{manifold learning, diffusion maps, Riemannian geometry, symmetric positive-definite matrices}

%\keywords{Multimodal data, sensor fusion, diffusion maps, common component analysis, time-series analysis}

\section{Introduction}

{
%In recent years,  time series have become abundant as many data-acquisition systems record information through multiple sensors simultaneously.
%
Multivariate time-series (temporal signals) have been studied in the statistics and signal processing societies for many years (e.g., see \cite{tsay2013multivariate,gomez2016multivariate} for a non-exhaustive literature survey), and traditional analysis methods usually highly depend on some predefined models and do not consider the unknown nonlinear structure of the variables that often exists underlying the high-dimensional time samples.
In order to accommodate contemporary data acquisitions and collections, large research activity has been devoted to developing spatiotemporal analysis that is specifically-designed to infer this underlying structure and/or take it explicitly into account.
In the last decade, perhaps the most notable attempts to handle such signals with a geometry defined by graphs are graph signal processing \cite{shuman2013emerging,sandryhaila2013discrete,sandryhaila2014discrete,ortega2018graph}, graph neural networks \cite{kipf2016semi,scarselli2008graph}, and geometric deep learning \cite{bronstein2017geometric}.
Another prominent line of work is based on an operator-theoretic approach for dynamical systems analysis \cite{schmid2010dynamic,budivsic2012applied,tu2014dynamic,kutz2016dynamic}, where the time samples have a manifold structure.
Still, despite these recent efforts, to the best of our knowledge, when there is a time-varying manifold structure underlying the time samples, only a few works are available, e.g. \cite{froyland2015dynamic,banisch2017understanding,froyland2020dynamic}.

%For example, in remote sensing applications, data is often collected using several sensors of different types that record the same underlying scene.  Each such sensor typically captures general structural information about the scene as well as unique information, e.g. ground images captured by a LiDAR sensor describe the terrain height map, whereas images captured by hyper-spectral sensors describe the terrain material properties. {Such sensors could continuously sense the system of interest, and generate multivariate time series, where at each time the dataset is high-dimensional with nonlinear structure.}
 
In this work, we propose a new multi-resolution spatiotemporal analysis of multivariate time-series. In contrast to standard multi-resolution analysis using wavelets defined on Euclidean space \cite{daubechies1992ten,mallat1999wavelet}, we present an operator-based analysis approach combining manifold learning and Riemannian geometry, which we term {\em Riemannian multi-resolution analysis} (RMRA).
Concretely, consider a multivariate time-series $\{\mathbf{x}_t\}$. Suppose the temporal propagation of the time-series at time step $t$ can be modelled by two diffeomorphic manifolds $f_t:\mathcal{M}_t \rightarrow \mathcal{M}_{t+1}$, and suppose the corresponding pairs of time samples $(\mathbf{x}_t, \mathbf{x}_{t+1})$ are given by $\mathbf{x}_t[i] \in \mathcal{M}_t$ and $\mathbf{x}_{t+1}[i] = f_t(\mathbf{x}_t[i]) \in \mathcal{M}_{t+1}$, where $\mathbf{x}_t[i]$ is the $i$th entry of the sample $\mathbf{x}_t$ for $i=1,\ldots,N$. 
Note that the entries of the samples $\mathbf{x}_t$ lie on a manifold, and therefore, each entry is typically high-dimensional. In other words, at each time $t$, we have $N$ high-dimensional points that are distributed on the manifold $\mathcal{M}_t$.
Our RMRA consists of the following steps. First, we construct a diffusion operator for each time sample $\mathbf{x}_t$, characterizing its underlying manifold $\mathcal{M}_t$. This step is performed using a manifold learning technique, diffusion maps \cite{Coifman2006}, that facilitates a finite-dimensional matrix approximation of the Laplacian operator of the manifold based on the time sample. This approximation is informative because the Laplacian operator is known to bear the geometric information of the manifold \cite{berard1994embedding,jones2008manifold}. Then, for each pair of temporally consecutive time frames $(\mathbf{x}_t,\mathbf{x}_{t+1})$, we present two composite operators based on ``Riemannian combinations'' of the two respective diffusion operators. 
Typically, diffusion operators are not symmetric, but they are similar to symmetric positive-definite (SPD) matrices. We could thus define diffusion operators as SPD matrices, whose space is endowed with a Riemannian structure.
Therefore, taking into account this Riemannian manifold structure for the composition of the operators is natural. 
Indeed, we show, both theoretically and in practice, that one operator enhances common components that are expressed similarly in $\mathcal{M}_t$ and $\mathcal{M}_{t+1}$, while the other enhances common components that are expressed differently. These properties could be viewed as analogous to low-pass and high-pass filters in this setting, leading to a spatiotemporal decomposition of the multivariate time series into ``low frequency'' and ``high frequency'' components, by considering the common components expressed similarly (resp. differently) as the slowly (resp. rapidly) varying components.

To facilitate the multi-resolution analysis of the entire temporal sequence, the construction of the composite operators is recursively repeated at different time scales. Since the composite operators are viewed as low-pass and high-pass filters, the proposed framework can be viewed as analogous to the wavelet decomposition for time-varying manifolds in the following sense. At each iteration, the two consecutive time samples are ``fused'' using the composite operators, ``decomposing'' the multivariate time-series into two components: one that varies slowly and one that varies rapidly. The fast varying component is viewed as the ``spectral feature'' of the ``first layer'', and the slowly varying component is ``downsampled'', decomposed again in the next iteration using the composite operators into a slow component and a fast component. Again, the fast component leads to the ``spectral feature'' of the ``second layer''. By iterating this procedure, the multivariate time series is decomposed in multiple resolutions.

Broadly, the basic building block of our analysis, focusing on one time step, consists of two construction steps. First, given two consecutive time samples $(\mathbf{x}_t, \mathbf{x}_{t+1})$, we learn the underlying manifolds $\mathcal{M}_t$ and $\mathcal{M}_{t+1}$, and then, we study the (unknown) diffeomorphism $f_t$.
We posit that this building block can serve as an independent analysis module by itself.
Indeed, the setting of one time step we consider can be recast as a related multiview data analysis problem (see Section \ref{sec:related_work}). Consider two diffeomorphic manifolds $\mathcal{M}_1$ and $\mathcal{M}_2$ and the diffeomorphism $f:\mathcal{M}_1 \rightarrow \mathcal{M}_2$. Let $x \in \mathcal{M}_1$ and $y=f(x)\in \mathcal{M}_2$. The pair $(x,y)$ could be considered as two views of some object of interest, providing distinct and complementary information. Applying the proposed two-step procedure to this case first learns the manifold of each view, and then, studies the diffeomorphism representing the relationship between the two views.
In \cite{shnitzer2019recovering}, the diffeomorphism was analyzed in terms of common and unique components, which were represented by the eigenvectors of the diffusion operators.
Here, we further characterize these spectral common components. Roughly, the common components are classified into two kinds: components that are expressed similarly in the two manifolds in the sense that they have similar eigenvalues in both manifolds, and components that are expressed differently in the sense that they have different eigenvalues. Furthermore, we refine the analysis and in addition to considering
{\em strictly common components}, i.e., the same eigenvectors in both manifolds, we also consider {\em weakly common components}, i.e., similar but not the same eigenvectors. 
In contrast to the local analysis presented in \cite{shnitzer2019recovering}, we provide global and spectral analyses, showing that our method indeed extracts and identifies these different components.

%Therefore, our proposed method provides a novel comprehensive framework for the analysis of common components in multimodal data, allowing for a deeper understanding of the underlying phenomena in such applications.
%
We demonstrate the proposed RMRA on a dynamical system with a transitory double gyre configuration. We show that this framework is sensitive to the change rate of the dynamical system at different time-scales. Such a framework may be especially suitable for studying non-stationary multivariate time-series, particularly when there is a nontrivial geometric relationship among the multivariate coefficients.
In addition, for the purpose of multimodal data analysis, we demonstrate that the proposed Riemannian composite operators enhance common structures in remote sensing data captured using hyperspectral and LiDAR sensors.

The remainder of this paper is organized as follows. In Section \ref{sec:related_work}, we review related work. In Section \ref{sec:prelim}, we present preliminaries. In Section \ref{sec:rmra}, we present the proposed approach for multi-resolution spatiotemporal analysis using Riemannian composition of diffusion operators. Section \ref{sec:results} shows experimental results. In Section \ref{sec:analysis}, we present spectral analysis of the proposed composite operators. Finally, in Section \ref{sec:conc}, we conclude the paper.
}

%With the above preparation, we now describe our proposed algorithm aiming to study a given multivariate non-stationary time series. This nomination comes from the multiresolution analysis nature of the algorithm. 
%the proposed scheme, which includes two diffeomorphic manifolds representing multimodal data, to a sequence of manifolds that can be used to model a time-varying manifold. We propose a multi-resolution framework for the analysis of non-stationary high dimensional temporal data based on the proposed composite operators.
%In addition to the proposed scheme for multimodal data analysis, we propose a multi-scale framework for the analysis of temporal high dimensional data based on the proposed composite operators. 

%{To theoretically justify the proposed RWA algorithm, the manifold assumption is taken into account to model the nonlinear structure of the high-dimensional data at each time frame. The non-stationarity is modeled as the time-varying metric on the manifold. We show that the multivariate time series could be efficiently studied under the multiresolution framework.}
%

\section{Related work}
\label{sec:related_work}

\subsection{Manifold learning, diffusion maps, and diffusion wavelets}

{
Manifold learning is a family of methods that consider data lying on some inaccessible manifold and provide a new low-dimensional representation of the data based on intrinsic patterns and similarities in the data \cite{tenenbaum2000global,Roweis2000,Belkin2003,Coifman2006}.
From an algorithmic viewpoint, manifold learning techniques are broadly based on two stages. The first stage is the computation of a typically positive kernel that provides a notion of similarity between the data points. The second stage is the spectral analysis of the kernel, giving rise to an embedding of the data points into a low-dimensional space.
Such a two-stage procedure results in aggregation of multiple pairwise similarities of data points, facilitating the extraction of the underlying manifold structure. This procedure was shown to be especially useful when there are limited high-dimensional data, plausibly circumventing the curse of dimensionality.

While the spectral analysis of the kernels has been the dominant approach and well investigated, recent work explores different directions as well.
One prominent direction employs an operator-based analysis, which has led to the development of several key methods.
Arguably the first and most influential is diffusion maps \cite{Coifman2006}\footnote{Laplacian eigenmaps could also be considered if the diffusion time is not taken into account \cite{Belkin2003}.}, where a transition matrix is constructed based on the kernel, forming a random walk on the dataset; such transition matrix is viewed as a diffusion operator on the data.
There has been abundant theoretical support for diffusion maps. For example, it was shown in \cite{Belkin2003,hein2006uniform,singer2006graph} that the operator associated with diffusion maps converges point-wisely to the Laplace-Beltrami operator of the underlying manifold, which embodies the geometric properties of the manifold, and its eigenfunctions form a natural basis for square integrable functions on the manifold. The spectral convergence of the eigenvalues and eigenvectors of the operator associated with diffusion maps to the eigenvalues and eigenfunctions of the Laplace-Beltrami operator was first explored in \cite{belkin2007convergence}, and recently, the $L^\infty$ spectral convergence with convergence rate was reported in \cite{dunson2019spectral}. See \cite{dunson2019spectral} and references therein for additional related work in this direction. The robustness of the diffusion maps operator was studied in \cite{el2010information,el2016graph}, and recently, its behavior under different noise levels and kernel bandwidths was explored using random matrix theory \cite{ding2021impact}.

The propagation rules associated with this diffusion operator are in turn used for defining a new distance, the so-called diffusion distance, which was shown to be useful and informative in many domains and applications \cite{Lafon2006,li2017efficient,TalmonMagazine,wu2014assess}. 
This notion of diffusion promoted the development of well-designed and controlled anisotropic diffusions for various purposes, e.g., nonlinear independent component analysis \cite{singer2008non}, intrinsic representations \cite{talmon2013empirical}, reduction of stochastic dynamical systems \cite{singer2009detecting,dsilva2016data}, and time-series forecasting \cite{zhao2016analog} and filtering \cite{shnitzer2016manifold}, to name but a few.
In another line of work, the combination and composition of diffusion operators led to the development of new manifold learning techniques for learning multiple manifolds  \cite{lederman2015alternating,shnitzer2019recovering,lindenbaum2020multi} as well as for time-series analysis \cite{froyland2015dynamic}.

A related line of work that considers multivariate time-series (high-dimensional temporal signals) introduces ways to define wavelets on graphs and manifolds, e.g., \cite{coifman2006wavelets,hammond2011wavelets,ram2011generalized}. These techniques extend the classical wavelet analysis \cite{mallat1999wavelet} from one or two dimensional Euclidean space to high-dimensional non-Euclidean spaces represented by graphs and manifolds. Specifically, diffusion wavelets \cite{coifman2006wavelets} makes use of a hierarchy of diffusion operators with multiple well-designed diffusion scales organized in a dyadic tree.
Importantly, none of these methods addresses an underlying manifold with a time-varying metric, but rather a fixed metric that exhibits different characteristics in different scales.
}
% Such an approach was recently applied in various fields, e.g. computer graphics \cite{rustamov2013map}, signal processing on graphs \cite{shuman2013emerging} and dynamical systems \cite{Williams2015edmd}.

\subsection{Manifold learning for sensor fusion}

{
The basic building block of our RMRA is based on two diffeomorphic manifolds $\mathcal{M}_t$ and $\mathcal{M}_{t+1}$, which represent the temporal evolution at time $t$.
A similar setting consisting of two diffeomorphic manifolds, say $\mathcal{M}_1$ and $\mathcal{M}_2$, has recently been investigate in the context of multimodal data analysis and sensor fusion.

The sensor fusion problem typically refers to the problem of harvesting useful information from data collected from multiple, often heterogeneous, sensors. 
Sensor fusion is a gigantic field. One line of work focuses on the extraction, analysis and comparison of the components expressed by the different sensors for the purpose of gaining understanding of the underlying scene \cite{murphy2018diffusion,swatantran2011mapping,czaja2016fusion}. However, due to the complex nature of such data, finding informative representations and metrics of these components by combining the information from the different sensors is challenging.
Recently, several papers propose data fusion methods relying on manifold learning techniques and operator-based data analysis \cite{de2005spectral,eynard2015multimodal,lederman2015alternating,shnitzer2019recovering,talmon2019latent,katz2019alternating,lindenbaum2020multi}.
The basic idea is that data from different modalities or views are fused by constructing kernels that represent the data from each view and operators that combine those kernels.
Different approaches are considered for the combination of kernels. 
Perhaps the most relevant to the present work is the alternating diffusion operator, which was introduced in \cite{lederman2015alternating,talmon2019latent} and shown to recover the common latent variables from multiple modalities.
This operator is defined based on a product of two diffusion operators and then used for extracting a low dimensional representation of the common components shared by the different sensors.
Other related approaches include different combinations of graph Laplacians  \cite{de2005spectral,eynard2015multimodal}, product of kernel density estimators and their transpose for nonparametric extension of canonical correlation analysis \cite{michaeli2016nonparametric}, and various other combinations of diffusion operators \cite{katz2019alternating,shnitzer2019recovering,lindenbaum2020multi}. For a more comprehensive review of the different approaches, see \cite{shnitzer2019recovering} and references therein.

Largely, most existing sensor fusion algorithms, and particularly those based on kernel and manifold learning approaches, focus on the extraction and representation of the common components, in a broad sense. The sensor fusion framework proposed in \cite{shnitzer2019recovering} extends this scope and considers both the common components and the unique components of each sensor. 
Therefore, in the context of the present work, it could be used for the analysis of the basic building block consisting of two diffeomorphic manifolds. 
However, similarly to the other methods described above, the kernel combination in \cite{shnitzer2019recovering} is achieved through linear operations, thereby ignoring the prototypical geometry of the kernels.
Conversely, in this work, by taking the Riemannian structure of SPD matrices into account, we propose a new geometry-driven combination of kernels and a systematic analysis.
This aspect of our work could be viewed as an extension of \cite{shnitzer2019recovering} for the purpose of sensor fusion and multimodal data analysis, in addition to the new utility for multivariate time-series analysis.
}

\section{Preliminaries}\label{sec:prelim}
In this section we briefly present the required background for our method. 
For further details on the theory and derivations we refer the readers to \cite{bhatia2009positive} and \cite{Coifman2006}.

\subsection{Riemannian {Structure} of SPD Matrices\label{sub:bg_spd}}

{
In many recent studies, representing the raw data using SPD matrices and taking into account their specific Riemannian geometry have shown promising results, e.g. in computer vision \cite{bergmann2018priors,tuzel2008pedestrian}, for domain adaptation \cite{yair2019parallel}, on medical data \cite{barachant2013classification} and in recognition tasks \cite{harandi2014manifold}.
For example, Barachant et al. \cite{barachant2013classification} proposed a support-vector-machine (SVM) classifier that takes into account the Riemannian geometry of the features, which are SPD covariance matrices, representing Electroencephalogram (EEG) recordings. They showed that their ``geometry-aware'' classifier obtains significantly better results compared with a classifier that simply vectorizes the covariance matrices.

Here, we consider the space of SPD matrices endowed with the so-called affine-invariant metric \cite{pennec2006riemannian}. Using this particular Riemannian geometry results in closed-form expressions for useful properties and operations, such as the geodesic path connecting two points on the manifold \cite{bhatia2009positive} and the logarithmic map and the exponential map \cite{pennec2006riemannian}, which locally project SPD matrices onto the tangent space and back. 
While the focus is on the affine-invariant metric, which is arguably the most widely used, we remark that other geometries of SPD matrices exist, e.g., the log-Euclidean \cite{arsigny2007geometric,quang2014log}, the log-det \cite{sra2012new,chebbi2012means}, the log-Cholesky \cite{lin2019riemannian}, and the Bures-Wasserstein \cite{malago2018wasserstein,bhatia2019bures}, which could be considered as well.
In the context of this work, since diffusion operators are strictly positive in principal but in practice often have negligible eigenvalues, one particular advantage of the affine-invariant geometry is its existing extensions to symmetric positive semi-definite (SPSD) matrices (see Section \ref{subsec:SPSD}).
}

Consider the set of symmetric matrices in $\mathbb{R}^{N\times N}$, denoted by $\mathcal{S}_N$.
A symmetric matrix $\mathbf{W}\in\mathcal{S}_N$ is an SPD matrix if it has strictly positive eigenvalues.
Let $\mathcal{P}_N$ denote the set of all $N\times N$ SPD matrices.
The tangent space at any point in this set is the space of symmetric matrices $\mathcal{S}_N$. 
We denote the tangent space at $\mathbf{W}\in\mathcal{P}_N$ by $\mathcal{T}_\mathbf{W}\mathcal{P}_N$.
In this work we consider the following affine-invariant metric in the tangent space at each matrix $\mathbf{W}\in\mathcal{P}_N$, which forms a differentiable Riemannian manifold \cite{moakher2005differential}:
\begin{equation}
    \left\langle \mathbf{D}_1,\mathbf{D}_2\right\rangle_\mathbf{W} = \left\langle \mathbf{W}^{-1/2}\mathbf{D}_1\mathbf{W}^{-1/2}, \mathbf{W}^{-1/2}\mathbf{D}_2\mathbf{W}^{-1/2} \right\rangle\label{eq:riemann_spd_metric}
\end{equation}
where $\mathbf{D}_1,\mathbf{D}_2\in\mathcal{T}_\mathbf{W}\mathcal{P}_N$ denote matrices in the tangent space at $\mathbf{W}\in\mathcal{P}_N$ and $\left\langle\cdot,\cdot\right\rangle$ is given by the standard Frobenius inner product $\left\langle \mathbf{A},\mathbf{B}\right\rangle=\mathrm{Tr}\left(\mathbf{A}^T\mathbf{B}\right)$.
Using this metric, there is a unique geodesic path connecting any two matrices $\mathbf{W}_1,\mathbf{W}_2\in\mathcal{P}_N$ \cite{bhatia2009positive}, which is explicitly given by:
\begin{equation}
\gamma_{\mathbf{W}_1\rightarrow\mathbf{W}_2}(p)=\mathbf{W}_1^{1/2}\left(\mathbf{W}_1^{-1/2}\mathbf{W}_2\mathbf{W}_1^{-1/2}\right)^p\mathbf{W}_1^{1/2}, \ \ p\in[0,1],\label{eq:riemann_spd_geodesic}
\end{equation}
{The arc-length of this geodesic path defines the Riemannian distance on the manifold, 
\begin{equation}
d^2_R\left(\mathbf{W}_1,\mathbf{W}_2\right)=\left\Vert \log\left(\mathbf{W}_1^{-1/2}\mathbf{W}_2\mathbf{W}_1^{-1/2}\right)\right\Vert^2_F.  
\end{equation}
Using the Fr\'echet mean, we define the Riemannian mean of a set of matrices, {$\mathbf{W}_1,\ldots,\mathbf{W}_n$,} by $\arg\min_{\mathbf{W}\in\mathcal{P}_N}{\sum_{i=1}^n} d^2_R\left(\mathbf{W},\mathbf{W}_i\right)$. The Riemannian mean of two matrices is a special case, which coincides with the mid-point of the geodesic path connecting them and has the following closed form: }
\begin{equation}
    \gamma_{\mathbf{W}_1\rightarrow\mathbf{W}_2}(1/2)=\mathbf{W}_1^{1/2}\left(\mathbf{W}_1^{-1/2}\mathbf{W}_2\mathbf{W}_1^{-1/2}\right)^{1/2}\mathbf{W}_1^{1/2}
\end{equation}

The mapping between the Riemannian manifold of SPD matrices and its tangent space is given by the exponential map and the logarithmic map.
Each matrix $\mathbf{D}$ in the tangent space at $\mathbf{W}\in\mathcal{P}_N$ 
can be seen as the derivative of the geodesic connecting $\mathbf{W}$ and $\tilde{\mathbf{W}}=\mathrm{Exp}_\mathbf{W}(\mathbf{D})$, i.e., $\gamma_{\mathbf{W}\rightarrow\tilde{\mathbf{W}}}(p)$, at $p=0$.
The exponential map in this setting has a known closed-form given by \cite{moakher2005differential}:
\begin{equation}
    \mathrm{Exp}_{\mathbf{W}}\left(\mathbf{D}\right)=\mathbf{W}^{1/2}\exp\left(\mathbf{W}^{-1/2}\mathbf{D}\mathbf{W}^{-1/2}\right)\mathbf{W}^{1/2}\label{eq:spd_expmap},
\end{equation}
where $\mathrm{Exp}_{\mathbf{W}}(\mathbf{D})\in\mathcal{P}_N$ and $\exp(\cdot)$ is applied to the eigenvalues.
The inverse of the exponential map is the logarithmic map, which is explicitly given by:
\begin{equation}
    \mathrm{Log}_{\mathbf{W}}(\tilde{\mathbf{W}})=\mathbf{W}^{1/2}\log\left(\mathbf{W}^{-1/2}\tilde{\mathbf{W}}\mathbf{W}^{-1/2}\right)\mathbf{W}^{1/2},\label{eq:spd_logmap}
\end{equation}
where $\log(\cdot)$ is applied to the eigenvalues, $\tilde{\mathbf{W}}\in\mathcal{P}_N$, and $\mathrm{Log}_{\mathbf{W}}(\tilde{\mathbf{W}})\in\mathcal{T}_\mathbf{W}\mathcal{P}_N$.

\subsection{An extension to SPSD Matrices\label{subsec:SPSD}}

In practice, the matrices of interest are often not strictly positive, but rather symmetric positive \emph{semi}-definite (SPSD) matrices, that is symmetric matrices with non-negative eigenvalues.
Below is a summary of a Riemannian geometry introduced in \cite{bonnabel2010riemannian} that extends the affine-invariant metric. We remark that the existence of such an extension serves as an additional motivation to particularly consider the affine-invariant metric over the alternatives.

Let $\mathcal{S}^+(r,N)$ denote the set of $N\times N$ SPSD matrices of rank $r<N$, and let $\mathcal{V}_{N,r}$ denote the set of $N\times r$ matrices with orthonormal columns, i.e. $\mathbf{V}^T\mathbf{V}=\mathrm{I}_{r\times r}, \ \forall\mathbf{V}\in\mathcal{V}_{N,r}$.
Given an SPSD matrix $\mathbf{W}\in\mathcal{S}^+(r,N)$, we consider the following decomposition:
\begin{equation}
    \mathbf{W} = \mathbf{V}\mathbf{\Lambda}\mathbf{V}^T\,,
\end{equation}
where $\mathbf{V}\in\mathcal{V}_{N,r}$ and $\mathbf{\Lambda}\in\mathcal{P}_r$.
Consequently, the pair $\left(\mathbf{V},\mathbf{\Lambda}\right)\in\mathcal{V}_{N,r}\times\mathcal{P}_r$ can be considered as a representation of the matrix $\mathbf{W}$.
Based on this representation, the authors of \cite{bonnabel2010riemannian} proposed to present vectors in the tangent space of $\mathcal{S}^+\left(r,N\right)$ at $\mathbf{W}$, denoted by $\mathcal{T}_\mathbf{W}\mathcal{S}^+\left(r,N\right)$, by the infinitesimal variation $\left(\mathbf{\Delta},\mathbf{D}\right)$, where $\mathbf{\Delta}=\mathbf{V}_\perp\mathbf{B}$, $\mathbf{V}_\perp\in\mathcal{V}_{N,N-r}$, $\mathbf{V}_\perp\mathbf{V}=0$, $\mathbf{B}\in\mathbb{R}^{(N-r)\times r}$ and $\mathbf{D}\in\mathcal{T}_{\mathbf{\Lambda}}\mathcal{P}_r=\mathcal{S}_r$.
Using these tangent vectors, the metric on the tangent space of the SPSD manifold can be defined as a generalization of the metric on the manifold of SPD matrices:
\begin{equation}
    \left\langle\left(\mathbf{\Delta}_1,\mathbf{D}_1\right),\left(\mathbf{\Delta}_2,\mathbf{D}_2\right)\right\rangle_{\left(\mathbf{V},\mathbf{\Lambda}\right)} = \mathrm{Tr}\left(\mathbf{\Delta}^T_1\mathbf{\Delta}_2\right) + k\left\langle\mathbf{D}_1,\mathbf{D}_2\right\rangle_\mathbf{\Lambda}\label{eq:SPSDmetric}
\end{equation}
for some $k>0$, where $\left(\mathbf{\Delta}_1,\mathbf{D}_1\right),\left(\mathbf{\Delta}_2,\mathbf{D}_2\right)\in\mathcal{T}_\mathbf{W}\mathcal{S}^+\left(r,N\right)$ and $\left\langle\cdot,\cdot\right\rangle_\mathbf{\Lambda}$ denotes the metric defined in \eqref{eq:riemann_spd_metric} for SPD matrices.

Let $\mathbf{W}_1=\mathbf{V}_1\mathbf{\Lambda}_1\mathbf{V}_1^T$ and $\mathbf{W}_2=\mathbf{V}_2\mathbf{\Lambda}_2\mathbf{V}_2^T$ denote the decompositions of two SPSD matrices $\mathbf{W}_1,\mathbf{W}_2\in\mathcal{S}^+\left(r,N\right)$, where $\mathbf{V}_1,\mathbf{V}_2\in\mathcal{V}_{N,r}$ and $\mathbf{\Lambda}_1,\mathbf{\Lambda}_2\in\mathcal{P}_r$.
The closed-form expression of the geodesic path connecting any two such matrices in $\mathcal{S}^+\left(r,N\right)$ using the metric in \eqref{eq:SPSDmetric} is unknown. 
However, the following approximation of it was proposed in \cite{bonnabel2010riemannian}.
Denote the singular value decomposition (SVD) of $\mathbf{V}_2^T\mathbf{V}_1$ by $\mathbf{O}_2\mathbf{\Sigma}\mathbf{O}_1^T$, where $\mathbf{O}_1,\mathbf{O}_2\in\mathbb{R}^{r\times r}$ and $\mathrm{diag}(\mathbf{\Sigma})$ are the cosines of the principal angles between $\mathrm{range}(\mathbf{W}_1)$ and $\mathrm{range}(\mathbf{W}_2)$, where $\mathrm{range}(\mathbf{W})$ denotes the column space of $\mathbf{W}$.
Define $\mathbf{\Theta}=\arccos\left(\mathbf{\Sigma}\right)$, which is a diagonal matrix of size $r\times r$ with the principal angles between the two subspaces on its diagonal.
The approximation of the geodesic path connecting two points in $\mathcal{S}^+\left(r,N\right)$ is then given by:
\begin{equation}
    \tilde{\gamma}_{\mathbf{W}_1\rightarrow\mathbf{W}_2}(p)=\mathbf{U}_{\mathbf{W}_1\rightarrow\mathbf{W}_2}(p)\mathbf{R}_{\mathbf{W}_1\rightarrow\mathbf{W}_2}(p)\mathbf{U}^T_{\mathbf{W}_1\rightarrow\mathbf{W}_2}(p), \ \ p\in[0,1]\label{eq:riemann_spsd_geodesic}
\end{equation}
where $\mathbf{R}_{\mathbf{W}_1\rightarrow\mathbf{W}_2}(p)$ is the geodesic path connecting SPD matrices as defined in \eqref{eq:riemann_spd_geodesic} calculated between the matrices $\mathbf{R}_1=\mathbf{O}_1^T\mathbf{\Lambda}_1\mathbf{O}_1$ and $\mathbf{R}_2=\mathbf{O}_2^T\mathbf{\Lambda}_2\mathbf{O}_2$, i.e. $\mathbf{R}_{\mathbf{W}_1\rightarrow\mathbf{W}_2}(p)=\gamma_{\mathbf{R}_1\rightarrow\mathbf{R}_2}(p)$, and $\mathbf{U}_{\mathbf{W}_1\rightarrow\mathbf{W}_2}(p)$ is the geodesic connecting $\mathrm{range}(\mathbf{W}_1)$ and $\mathrm{range}(\mathbf{W}_2)$ on the Grassman manifold (the set of $r$ dimensional subspaces of $\mathbb{R}^N$), defined by:
\begin{equation}
    \mathbf{U}_{\mathbf{W}_1\rightarrow\mathbf{W}_2}(p) = \mathbf{U}_1\cos\left(\mathbf{\Theta}p\right)+\mathbf{X}\sin\left(\mathbf{\Theta}p\right)\label{eq:grassman_geodesic}
\end{equation}
where $\mathbf{U}_1=\mathbf{V}_1\mathbf{O}_1$, $\mathbf{U}_2=\mathbf{V}_2\mathbf{O}_2$ and $\mathbf{X}=\left(\mathrm{I}-\mathbf{U}_1\mathbf{U}_1^T\right)\mathbf{U}_2\left(\sin\left(\mathbf{\Theta}\right)\right)^{\dagger}$, with $(\cdot)^\dagger$ denoting the Moore-Penrose pseudo-inverse.

%%%%%%%%%%%%%%%%%%%%%%%%%%%%%%%%%%%%%%%%%%%%%%%%%%%%%%%%%%%%%%

\subsection{Diffusion Operator\label{sub:meas_rep}}
{As described in Section \ref{sec:related_work}, most manifold learning methods, and particularly diffusion maps \cite{Coifman2006}, are based on positive kernel matrices. Here, we briefly present the construction of such a kernel, which we term the diffusion operator, as proposed in \cite{Coifman2006}. In the sequel, we employ this diffusion operator in our framework {to recover the geometry} underlying each modality.}

Given a set of $N$ points, $\{\mathbf{x}[i]\}_{i=1}^N$, which are sampled from some hidden manifold $\mathcal{M}$ embedded in $\mathbb{R}^n$, consider the following affinity kernel matrix $\mathbf{K}\in\mathbb{R}^{N\times N}$, whose $(i,j)$th entry is given by:
\begin{equation}
\mathrm{K}[i,j] = \exp\left(-\frac{\left\Vert\mathbf{x}[i]-\mathbf{x}[j]\right\Vert_2^2}{\sigma^2}\right),\label{eq:SPDkernK}
\end{equation}
where $\left\Vert\cdot\right\Vert_2$ denotes the $\ell_2$ norm and $\sigma$ denotes the kernel scale, typically set to the median of the Euclidean distances between the sample points multiplied by some scalar. {By Bochner's theorem, $\mathbf{K}$ is an SPD matrix.}
The kernel is normalized twice according to:
\begin{eqnarray}
\widehat{\mathbf{W}} & = & \widehat{\mathbf{D}}^{-1}\ \mathbf{K}\ \widehat{\mathbf{D}}^{-1}\nonumber\\
\mathbf{W} & = & \mathbf{D}^{-1/2}\ \widehat{\mathbf{W}}\ \mathbf{D}^{-1/2},\label{eq:SPDKern}
\end{eqnarray}
where $\widehat{\mathbf{D}}$ and $\mathbf{D}$ are diagonal matrices with $\widehat{\mathrm{D}}[i,i]=\sum_{j=1}^N \mathrm{K}[i,j]$ and $\mathrm{D}[i,i]=\sum_{j=1}^N \widehat{\mathrm{W}}[i,j]$, respectively.

The matrix $\mathbf{W}$ defined in \eqref{eq:SPDKern} is similar to the diffusion {operator considered in} \cite{Coifman2006}{, which we call the diffusion maps operator for simplicity,} with a normalization that removes the point density influence, given by $\mathbf{W}_{\texttt{DM}}=\mathbf{D}^{-1}\widehat{\mathbf{W}}$.
Due to this similarity, the matrix $\mathbf{W}$ and the diffusion maps operator $\mathbf{W}_{\texttt{DM}}$ share the same eigenvalues and their eigenvectors are related by $\psi_{\texttt{DM}}=\mathbf{D}^{-1/2}\psi$ and $\tilde{\psi}_{\texttt{DM}}=\mathbf{D}^{1/2}\psi$, where $\psi$ denotes an eigenvector of $\mathbf{W}$ and $\psi_{DM}$ and $\tilde{\psi}_{\texttt{DM}}$ denote the right and left eigenvectors of $\mathbf{W}_{\texttt{DM}}$, respectively.

% \begin{equation}
%     \lim_{\sigma\rightarrow 0}\lim_{N\rightarrow\infty}\frac{\left(\mathbf{I}_{N\times N}-\mathbf{W}_{DM}\right)}{\sigma}=\Delta,
% \end{equation}
% where $\mathbf{I}_{N\times N}$ denotes an $N\times N$ identity matrix and $\Delta$ denotes the Laplace-Beltrami operator of $\mathcal{M}$.
% Moreover, using this point-wise convergence of the operators, it was shown in \cite{Belkin2008} that for a uniformly sampled manifold, the diffusion maps eigenvectors converge to the eigenfunctions of the Laplace-Beltrami operator and their eigenvalues are related by $-\mu=\frac{1}{\sigma}\log(\lambda)$ \cite{lafon2004diffusion}, where $-\mu$ and $\lambda$ denote the eigenvalues of the Laplace-Beltrami operator and the diffusion maps operator, respectively.

%%%%%%%%%%%%%%%%%%%%%%%%%%%%%%%%%%%%%%%%%%%%%%%%%%%%%%%%%%%%%%
%%%%%%%%%%%%%%%%%%%%%%%%%%%%%%%%%%%%%%%%%%%%%%%%%%%%%%%%%%%%%%

%\section{Proposed Methods\label{sec:algorithms}}

\section{Riemannian Multi-resolution Analysis}\label{sec:rmra}

We are ready to present our multi-resolution framework for multivariate time-series analysis from a manifold learning perspective. In this section, we focus on the algorithmic aspect, and in Section \ref{sec:analysis} we present the theoretical justification. We start by introducing the Riemannian composition of two operators that capture the relationship between two datasets sampled from two underlying diffeomorphic manifolds. Then, we generalize the setting to a sequence of datasets in time by presenting a wavelet-like analysis using the composite operators. Finally, we conclude this section with important implementation remarks.

\subsection{Riemannian composition of two operators\label{sub:op_def}}

{Consider two datasets of $N$ points denoted by $\{\mathbf{x}_1[i]\}_{i=1}^N,\{\mathbf{x}_2[i]\}_{i=1}^N$. Suppose there is some correspondence between the datasets and that they are ordered according to this correspondence, i.e., the two points $\mathbf{x}_1[i]$ and $\mathbf{x}_2[i]$ correspond. Such a correspondence could be the result of simultaneous recording from two, possibly different, sensors.
We aim to recover the common structures in these datasets and characterize their expression in each dataset. Specifically, we consider two types of common components: common components that are expressed similarly in the two datasets and common components that are expressed differently.}

To this end, we propose a two-step method.
First, we assume that each dataset lies on some manifold and we characterize its underlying geometry using a diffusion operator constructed according to \eqref{eq:SPDKern}. 
This results in two SPD matrices denoted by $\mathbf{W}_1$ and $\mathbf{W}_2$ (see Section \ref{sub:bg_spd}).
Then, we propose to ``fuse'' the two datasets by considering compositions of $\mathbf{W}_1$ and $\mathbf{W}_2$ based on Riemannian geometry.
In contrast to previous studies that consider linear combinations involving addition, subtraction, and multiplication, e.g., \cite{lederman2015alternating,shnitzer2019recovering,lindenbaum2020multi}, which often results in non-symmetric or non-positive matrices violating the fundamental geometric structure of diffusion operators, our Riemannian compositions yield symmetric and SPD matrices. 
{{Specifically, we define} two new operators by:
\begin{eqnarray}
\mathbf{S}_{p} & = & \mathbf{W}_1\#_{p}\mathbf{W}_2=\mathbf{W}_1^{1/2}\left(\mathbf{W}_1^{-1/2}\mathbf{W}_2\mathbf{W}_1^{-1/2}\right)^{p}\mathbf{W}_1^{1/2},\label{eq:Sr}\\
\mathbf{F}_{p} & = & \mathrm{Log}_{\mathbf{S}_{p}}\left(\mathbf{W}_1\right)=\mathbf{S}_{p}^{1/2}\log\left(\mathbf{S}_{p}^{-1/2}\mathbf{W}_1\mathbf{S}_{p}^{-1/2}\right)\mathbf{S}_{p}^{1/2},\label{eq:Ar}
\end{eqnarray}
where $0 \le p \le 1$ denotes the position along the geodesic path connecting $\mathbf{W}_1$ and $\mathbf{W}_2$ on the SPD manifold, $\mathbf{W}_1\#_{p}\mathbf{W}_2$ with $p=1/2$ denotes the midpoint on this geodesic, and $\mathrm{Log}_{\mathbf{S}_{p}}\left(\mathbf{W}_1\right)$ denotes the logarithmic map, projecting the matrix $\mathbf{W}_1$ onto the tangent space of the SPD manifold at point $\mathbf{S}_{p}=\mathbf{W}_1\#_{p}\mathbf{W}_2$. Figure \ref{fig:SA_vis} presents an illustration of the definitions of the operators on the Riemannian manifold of SPD matrices and its tangent space.

Intuitively, {$\mathbf{S}$} describes the mean of the two matrices, so it enhances the components that are expressed similarly; that is, common eigenvectors with similar eigenvalues. Conversely, $\mathbf{F}_{p}$ can be seen as the difference of $\mathbf{S}_{p}$ and $\mathbf{W}_1$ along the geodesic connecting {them}, and therefore, it is related to the components expressed differently; that is, the common eigenvectors with different eigenvalues. In Section \ref{sec:analysis} we {provide a theoretical justification for the above statements}.

We remark that $\mathbf{F}_{p}$ is symmetric but not positive-definite, since it is defined as a projection of an SPD matrix onto the tangent space, and that using $\mathbf{W}_2$ instead of $\mathbf{W}_1$ in the projection leads only to a change of sign.
In addition, given $\mathbf{S}_p$ and $\mathbf{F}_p$, both SPD matrices $\mathbf{W}_1$ and $\mathbf{W}_2$ can be reconstructed using the exponential map $\mathrm{Exp}_{\mathbf{S}_p}\left(\pm\mathbf{F}_p\right)$ (as defined in \eqref{eq:spd_expmap}).}
For simplicity of notations, we focus in the following on $\mathbf{F}_p$ and $\mathbf{S}_p$ with $p=0.5$ and omit the notation of $p$. The extension to other values of $p$ is straightforward.

\begin{figure}[t]
\centering
\subfloat[]{\includegraphics[trim=30 10 30 20,clip,width=0.5\textwidth]{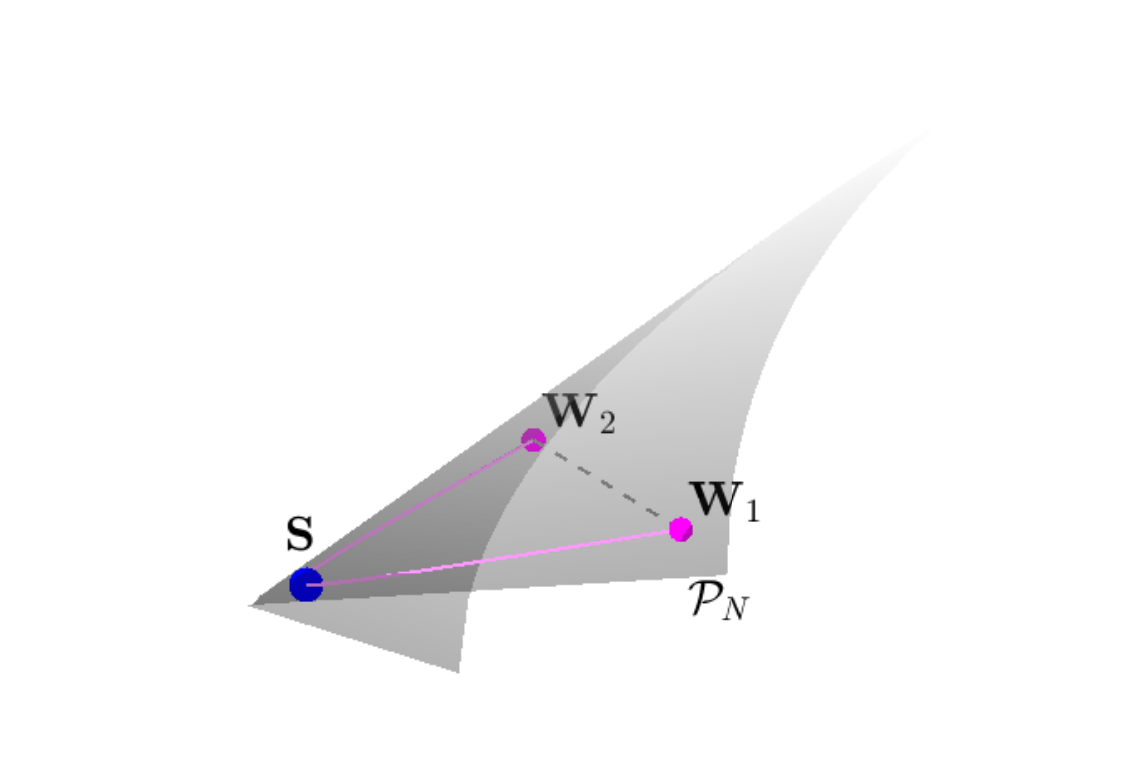}}
\subfloat[]{\includegraphics[trim=30 60 30 70,clip,width=0.5\textwidth]{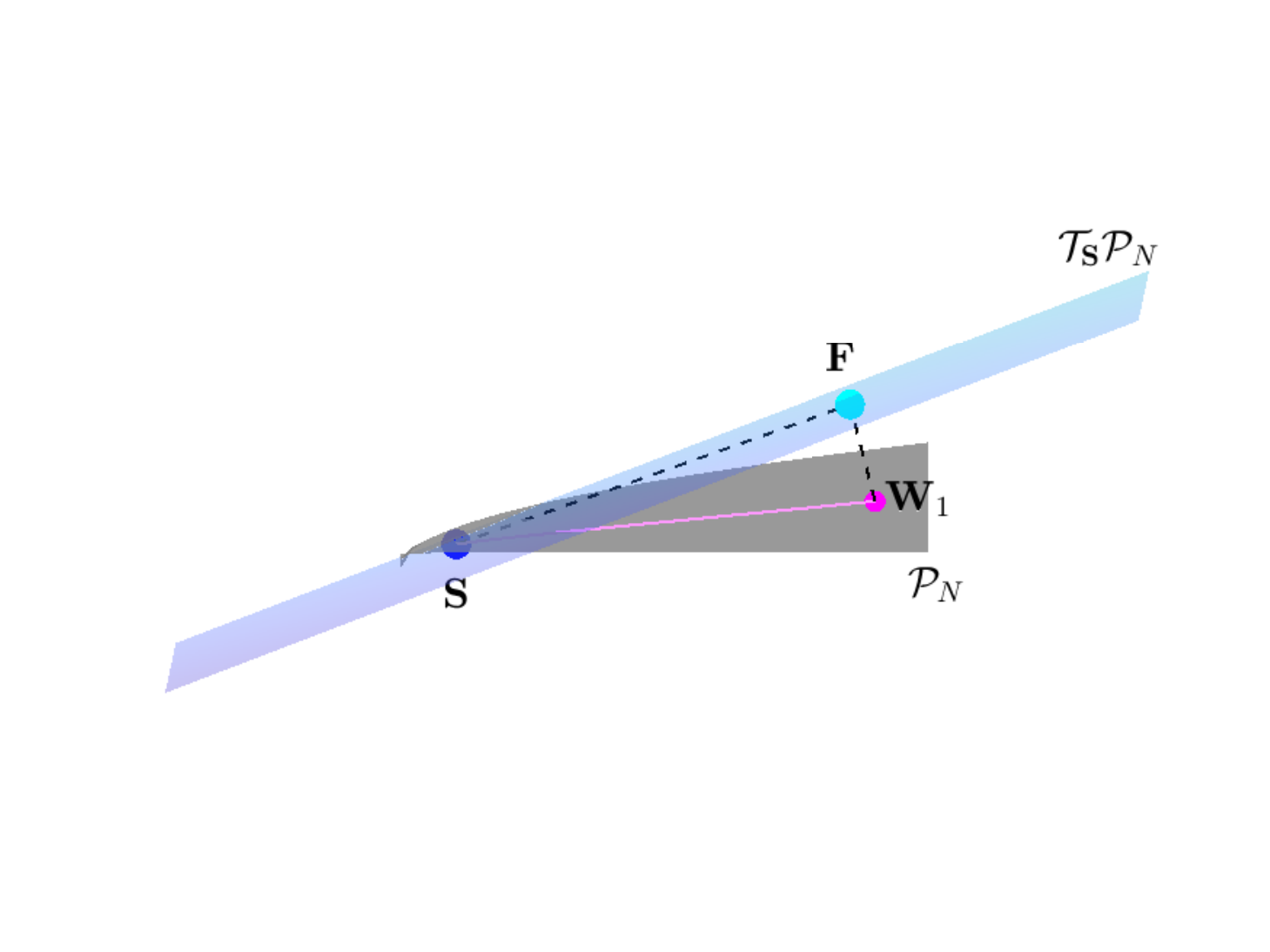}}
\caption{Illustration of the definitions of the operators $\mathbf{S}$ and $\mathbf{F}$. (a) Illustration of the operator $\mathbf{S}$ (blue point) on the geodesic path (solid line in magenta) connecting $\mathbf{W}_1$ and $\mathbf{W}_2$ (magenta points) on the manifold of SPD matrices (gray surface represents one level-set of the Riemannian manifold of SPD matrices). The dashed gray line denotes the shortest Euclidean path connecting the two matrices. (b) Illustration of the operator $\mathbf{F}$ (cyan point) on the tangent space at $\mathbf{S}$ (colored plane). Both plots present the same region in different orientations. 
\label{fig:SA_vis}}
\end{figure}

% \begin{remark}
% {The above definitions are different from the} simple definitions {depending on} the mean and the difference {of $\mathbf{W}_1$ and $\mathbf{W}_2$} using Euclidean operations of addition and subtraction, as in \cite{shnitzer2019recovering} for example. {In the definitions in \cite{shnitzer2019recovering}, the Riemannian structure of SPD matrices is} not {properly preserved}, because the shortest path between two points is not necessarily a straight line and the set of SPD matrices is not closed under subtraction.
% \end{remark}

{In the second step, we} propose new embeddings of the data points, representing the common components between the two datasets based on the operators $\mathbf{S}$ and $\mathbf{F}$.
Since both operators are symmetric, their eigenvalues and eigenvectors are real, and the eigenvectors are orthogonal. 
Denote the eigenvalues and eigenvectors of the operator $\mathbf{S}$ by $\lambda_n^{(\mathbf{S})}$ and $\psi_n^{(\mathbf{S})}$, respectively, and the eigenvalues and eigenvectors of the operator $\mathbf{F}$ by $\lambda_n^{(\mathbf{F})}$ and $\psi_n^{(\mathbf{F})}$, respectively, where $n=1,\dots,N$.
The new embeddings are constructed based on the eigenvectors of $\mathbf{S}$ and $\mathbf{F}$ by taking the $M\leq N$ leading eigenvectors, i.e. eigenvectors that correspond to the $M$ largest eigenvalues (in absolute value for $\mathbf{F}$), which are organized in decreasing order $\lambda^{(\mathbf{S})}_1\geq\lambda^{(\mathbf{S})}_2\geq\dots\geq\lambda^{(\mathbf{S})}_M$ and $\lambda^{(\mathbf{F})}_1\geq\lambda^{(\mathbf{F})}_2\geq\dots\geq\lambda^{(\mathbf{F})}_M$. %$|\lambda^{(\mathbf{F})}_1|\geq|\lambda^{(\mathbf{F})}_2|\geq\dots\geq|\lambda^{(\mathbf{F})}_M|$. 
The new embeddings are defined by:
\begin{eqnarray}
(\mathbf{x}_1[i],\mathbf{x}_2[i]) & \mapsto & \Psi^{(\mathbf{S})}[i]=\left\lbrace\psi^{(\mathbf{S})}_1[i],\dots,\psi^{(\mathbf{S})}_M[i]\right\rbrace\\
(\mathbf{x}_1[i],\mathbf{x}_2[i]) & \mapsto & \Psi^{(\mathbf{F})}[i]=\left\lbrace\psi^{(\mathbf{F})}_1[i],\dots,\psi^{(\mathbf{F})}_M[i]\right\rbrace.
\end{eqnarray}

{Algorithm \ref{alg:twomod} summarizes the above two-step operator and embedding construction.}
In Section \ref{subsub:ills_exmp}, we demonstrate the properties of the operators $\mathbf{S}$ and $\mathbf{F}$ and the proposed embeddings on an illustrative toy example. In Section \ref{sec:analysis}, we present some analysis.

{As a final remark, we note that} other SPD kernels and matrices {may be considered} instead of the proposed diffusion operators, e.g. covariance or correlation matrices, which can simply substitute $\mathbf{W}_1$ and $\mathbf{W}_2$ in the above definitions.

\begin{algorithm}[hbt!]
\caption{{Operator composition and spectral embedding based on Riemannian geometry}}
\label{alg:twomod}
\textbf{Input:} Two datasets $\{\mathbf{x}_1[i]\}_{i=1}^N,\{\mathbf{x}_2[i]\}_{i=1}^N$; {the embedding dimension $M$}\\
\textbf{Output:} Operators $\mathbf{S}$ and $\mathbf{F}$ and new embeddings
$\mathbf{\Psi}^{(\mathbf{S})}$ and $\mathbf{\Psi}^{(\mathbf{F})}$
\begin{algorithmic}[1]
\Statex
\State{Construct a diffusion operator for each dataset, $\mathbf{W}_\ell\in\mathbb{R}^{N\times N}$, $\ell=1,2$, according to \eqref{eq:SPDkernK} and \eqref{eq:SPDKern}}
\Statex
\State{Build operators $\mathbf{S}$ and $\mathbf{F}$:}
\State{$\ \ \ \ \ \mathbf{S}=\mathbf{W}_1^{1/2}\left(\mathbf{W}_1^{-1/2}\mathbf{W}_2\mathbf{W}_1^{-1/2}\right)^{1/2}\mathbf{W}_1^{1/2}$}
\State{$\ \ \ \ \ \mathbf{F}=\mathbf{S}^{1/2}\log\left(\mathbf{S}^{-1/2}\mathbf{W}_1\mathbf{S}^{-1/2}\right)\mathbf{S}^{1/2}$}
\Statex
\State{Compute the eigenvalue decomposition of the operators $\mathbf{S}$ and $\mathbf{F}$}
\Statex
\State{Take the $M$ largest eigenvalues (in absolute value) and order them such that $\lambda^{(\mathbf{S})}_1\geq\lambda^{(\mathbf{S})}_2\geq\dots\geq\lambda^{(\mathbf{S})}_M$ and $\lambda^{(\mathbf{F})}_1\geq\lambda^{(\mathbf{F})}_2\geq\dots\geq\lambda^{(\mathbf{F})}_M$}
\Statex
\State{Take the corresponding $M$ eigenvectors of $\mathbf{S}$ and define:\\
$\mathbf{\Psi}^{(\mathbf{S})}=\left\lbrace\psi^{(\mathbf{S})}_1,\dots,\psi^{(\mathbf{S})}_M\right\rbrace\in\mathbb{R}^{N\times M}$}
\Comment{{Capture} similarly expressed common components}
\Statex
\State{Take the corresponding $M$ eigenvectors of $\mathbf{F}$ and define:\\
$\mathbf{\Psi}^{(\mathbf{F})}=\left\lbrace\psi^{(\mathbf{F})}_1,\dots,\psi^{(\mathbf{F})}_M\right\rbrace\in\mathbb{R}^{N\times M}$}
\Comment{{Capture} differently expressed common components}
\end{algorithmic}
\end{algorithm}

%As a final remark, we note that {while} the diffusion operator defined in \eqref{eq:SPDKern} {is} positive-definite for the purpose of operator definition and analysis, 
%however, in many real-world examples such a construction may lead to a SPD matrix {that is very close to a} positive-\emph{semi}-definite matrix; {for example, there might exist some positive eigenvalues that are smaller than the machine precision}. 
%We address this issue in Subsection \ref{sec:implement} where we discuss implementation details and describe the approximation used for symmetric positive-semi-definite matrices in the experimental results section.

%%%%%%%%%%%%%%%%%%%%%%%%%%%%%%%%%%%%%%%%%%%%%%%%%%%%%%%%%%%%%%

\subsection{Operator-based analysis of a sequence of datasets\label{sub:rmra}}

{Let $\{\mathrm{x}_t[i]\}_{i=1}^{N}$ denote a temporal sequence of datasets, where $t=1,\dots,T=2^m$, $m\in\mathbb{N}$, denotes time, and $\mathrm{x}_t[i]\in\mathbb{R}^d$ is the $i$-th point sampled at time $t$.
Considering first just two consecutive datasets $\{\mathrm{x}_t[i]\}_{i=1}^{N}$ and $\{\mathrm{x}_{t+1}[i]\}_{i=1}^{N}$ is analogous to the setting presented in Section \ref{sub:op_def}. Applying the same analysis gives rise to the operators $\mathbf{S}$ and $\mathbf{F}$ corresponding to 
$\{\mathrm{x}_t[i]\}_{i=1}^{N}$ and $\{\mathrm{x}_{t+1}[i]\}_{i=1}^{N}$, which facilitate the extraction of the two types of underlying common components. Unlike the general setting in Section \ref{sub:op_def}, the temporal order of the two datasets considered here allows us to view the common components that are expressed similarly and extracted by $\mathbf{S}$ as the slowly changing components. Similarly, the common components that are expressed differently and extracted by $\mathbf{F}$ are considered as rapidly changing components.}

%We present a natural extension of the proposed method to such a setting and devise a multi-resolution framework based on the operators $\mathbf{S}$ and $\mathbf{F}$ for time-series analysis from a manifold learning perspective.
%
%In this temporal setting, we show that the proposed operators enhance common components that relate to slow and fast changes in time.
% 

The above description constitutes the basic building block of our analysis. 
With that in mind, we proceed to the construction of the proposed multi-resolution analysis of the entire sequence.
At the first step, we build a diffusion operator according to \eqref{eq:SPDKern} for the dataset $\{\mathrm{x}_t[i]\}_{i=1}^{N}$ at each time point $t$, resulting in $T$ kernels $\mathbf{W}_t\in\mathbb{R}^{N\times N}$, $t=1,\dots,T$.
Then, for every pair of consecutive time-points, $(2t-1,2t)$, $t=1,\dots,T/2$, we construct the two operators\footnote{Note that the operator underscore notation now denotes the time index rather than the geodesic curve parameter $p$ as in Section \ref{sub:op_def}.} 
$\mathbf{S}_t^{(1)}$ and $\mathbf{F}_t^{(1)}$ according to \eqref{eq:Sr} and \eqref{eq:Ar} with $p=0.5$. 
% TODO HT: maybe we can consider notation like $\mathbf{S}_t^{[1]}$ and $\mathbf{F}_t^{[1]}$ to avoid notation confusion]}. %
These $2\times T/2$ operators represent the fine level, denoted $\ell=1$, of the multi-resolution framework and recover components which are common to consecutive time-frames.
At coarser (higher) levels, i.e. for $\ell>1$, the operators are constructed according to \eqref{eq:Sr} and \eqref{eq:Ar} (with $p=0.5$) using the operators from the previous level as input. Specifically, at level $\ell>1$, the operators are given by
\begin{eqnarray}
\mathbf{S}^{(\ell)}_t&=&\mathbf{S}_{2t-1}^{(\ell-1)}\#\mathbf{S}_{2t}^{(\ell-1)}\label{eq:Swav}\\
\mathbf{F}_t^{(\ell)}&=&\mathrm{Log}_{\mathbf{S}_t^{(\ell)}}\left(\mathbf{S}_{2t-1}^{(\ell-1)}\right),\label{eq:Awav}
\end{eqnarray}
where $t=1,\dots,T/2^\ell$ and $\ell=2,\dots,\log_2T$.
At each level, only the operator $\mathbf{S}$ is used to construct the operators of the next level. The reason for this construction choice is that $\mathbf{S}$ enhances similarly expressed common components. 
In the present setting, the similarly expressed common components of consecutive time frames are in fact components that change slowly in time.
Therefore, using the operators $\mathbf{S}^{(\ell-1)}_t$, $t=1,\dots,T/2^{\ell-1}$, to construct the operators of the coarser level, $\ell$, has a smoothing effect.
This is analogous to the construction of the ordinary wavelet decomposition, where the outputs of the low-pass filters at each level are used as inputs to the coarser level.
In this analogy, the operator $\mathbf{F}^{(\ell)}_t$ can be viewed as a high-pass filter, since it enhances components that are expressed significantly different in consecutive time frames, i.e. rapidly changing components.

Similarly to the embedding defined based on $\mathbf{S}$ and $\mathbf{F}$ in Subsection \ref{sub:op_def}, we define an embedding based on the eigenvectors of the operators $\mathbf{S}^{(\ell)}_t$ and $\mathbf{F}^{(\ell)}_t$ at different levels and time-frames.
Denote the eigenvalues and eigenvectors of the operator $\mathbf{S}^{(\ell)}_t$ by $\lambda_n^{(\mathbf{S}^{(\ell)}_t)}$ and $\psi_n^{(\mathbf{S}^{(\ell)}_t)}$, respectively, and the eigenvalues and eigenvectors of the operator $\mathbf{F}^{(\ell)}_t$ by $\lambda_n^{(\mathbf{F}^{(\ell)}_t)}$ and $\psi_n^{(\mathbf{F}^{(\ell)}_t)}$, respectively, where $n=1,\dots,N$ and the eigenvalues are ordered in decreasing magnitude.
The embedding at each level and each time frame is then defined by taking the $M\leq N$ leading eigenvectors of the operators as follows:
\begin{eqnarray}
\mathbf{x}_t[i] & \rightarrow & \Psi^{(\mathbf{S}^{(\ell)}_t)}=\left\lbrace\psi^{(\mathbf{S}^{(\ell)}_t)}_1[i],\dots,\psi^{(\mathbf{S}^{(\ell)}_t)}_M[i]\right\rbrace\\
\mathbf{x}_t[i] & \rightarrow & \Psi^{(\mathbf{F}^{(\ell)}_t)}=\left\lbrace\psi^{(\mathbf{F}^{(\ell)}_t)}_1[i],\dots,\psi^{(\mathbf{F}^{(\ell)}_t)}_M[i]\right\rbrace.
\end{eqnarray}
where $t=1,\dots,T/2^\ell$ and $\ell=2,\dots,\log_2T$.
These embedding coordinates capture the slowly varying components ($\Psi^{(\mathbf{S}^{(\ell)}_t)}$) and fast varying components ($\Psi^{(\mathbf{F}^{(\ell)}_t)}$) at each time frame $t$ and each level $\ell$.

The proposed algorithm is summarized in Algorithm \ref{alg:wavelet}.

\begin{algorithm}[hbt!]
\caption{Riemannian multi-resolution analysis algorithm}
\label{alg:wavelet}
\textbf{Input:} A time-varying dataset $\{\mathrm{x}_t[i]\}_{i=1}^N$, $\mathrm{x}_t[i]\in\mathbb{R}^d$, $t=1,\dots,T$.\\
\textbf{Output:} Operators $\{\mathbf{S}_t^{(\ell)},\mathbf{F}_t^{(\ell)}\}_{t=1}^{T/2^\ell}$ and new representations for each level\\
$\{\mathbf{\Psi}^{(\mathbf{S}_t^{(\ell)})},\mathbf{\Psi}^{(\mathbf{F}_t^{(\ell)})}\}_{t=1}^{T/2^\ell}$, where $\ell=1,\dots,\log_2T$
\begin{algorithmic}[1]
\Statex
\State{Construct an SPD kernel representing each time point $t$, denoted by $\{\mathbf{W}_t\}_{t=1}^T$, according to \eqref{eq:SPDKern}.}
\Statex
\For{$t=1:T/2$} \Comment{Construct the operators for level $1$}
	\State{$\mathbf{S}_t^{(1)}=\mathbf{W}_{2t-1}\#_{0.5}\mathbf{W}_{2t}$}
	\State{$\mathbf{F}_t^{(1)}=\mathrm{Log}_{\mathbf{S}_t^{(1)}}\left(\mathbf{W}_{2t-1}\right)$}
\EndFor
\For{$\ell=2:\log_2T$} \Comment{Construct the operators for level $\ell$}
\For{$t=1:T/2^\ell$}
	\State{$\mathbf{S}_t^{(\ell)}=\mathbf{S}_{2t-1}^{(\ell-1)}\#_{0.5}\mathbf{S}_{2t}^{(\ell-1)}$}
	\State{$\mathbf{F}_t^{(\ell)}=\mathrm{Log}_{\mathbf{S}_t^{(\ell)}}\left(\mathbf{S}_{2t-1}^{(\ell-1)}\right)$}
\EndFor
\EndFor
\Statex
\For{$\ell=1:\log_2T$} \Comment{Construct the new representations}
\For{$t=1:T/2^\ell$}
	\State{$\Psi^{(\mathbf{S}_t^{(\ell)})}=\left[\psi^{(\mathbf{S}_t^{(\ell)})}_1,\dots,\psi^{(\mathbf{S}_t^{(\ell)})}_M\right]$}
	\State{$\Psi^{(\mathbf{F}_t^{(\ell)})}=\left[\psi^{(\mathbf{F}_t^{(\ell)})}_1,\dots,\psi^{(\mathbf{F}_t^{(\ell)})}_M\right]$}
	\State{where $\psi^{(\mathbf{S}_t^{(\ell)})}$ and $\psi^{(\mathbf{F}_t^{(\ell)})}$ are the eigenvectors of $\mathbf{S}_t^{(\ell)}$ and $\mathbf{F}_t^{(\ell)}$.}
\EndFor
\EndFor
\end{algorithmic}
\end{algorithm}

%%%%%%%%%%%%%%%%%%%%%%%%%%%%%%%%%%%%%%%%%%%%%%%%%%%%%%%%%%%%%%

\subsection{Implementation Remarks\label{sec:implement}}

{The numerical implementation of the proposed algorithm, particularly the Riemannian composition of operators, needs some elaboration. While the diffusion operators we consider in \eqref{eq:SPDKern} are  SPD matrices by definition,} in practice, some of their eigenvalues could be {close to zero numerically,} forming in effect SPSD matrices instead.
In order to address this issue, we propose an equivalent definition of the operators $\mathbf{S}$ and $\mathbf{F}$ for SPSD matrices of fixed rank, based on the Riemannian metric and mean that were introduced in \cite{bonnabel2010riemannian}.

Based on the approximated geodesic path in \eqref{eq:riemann_spsd_geodesic}, we define the operators $\mathbf{S}$ and $\mathbf{F}$ for SPSD matrices as follows. First, define
\begin{eqnarray}
    \mathbf{S} & = & \tilde{\gamma}_{\mathbf{W}_1\rightarrow\mathbf{W}_2}(0.5)=\mathbf{U}_{\mathbf{W}_1\rightarrow\mathbf{W}_2}(0.5)\left(\mathbf{R}_1\#_{0.5}\mathbf{R}_2\right)\mathbf{U}^T_{\mathbf{W}_1\rightarrow\mathbf{W}_2}(0.5)\,.
\end{eqnarray}
{Next, evaluate $\mathbf{V}_1^T\mathbf{V}_\mathbf{S}=\mathbf{O}_\mathbf{S}\tilde{\mathbf{\Sigma}}\tilde{\mathbf{O}}^T_1$ by SVD, where $\mathbf{\Lambda}_\mathbf{S}$ and $\mathbf{V}_\mathbf{S}$ denote the eigenvalues and eigenvectors of $\mathbf{S}$, respectively. Also, define $\mathbf{R}_\mathbf{S}:=\mathbf{O}_\mathbf{S}^T\mathbf{\Lambda}_\mathbf{S}\mathbf{O}_\mathbf{S}$.
Then, define}
\begin{eqnarray}
    \mathbf{F} & = & \mathbf{U}_{\mathbf{S}\rightarrow\mathbf{W}_1}(1)\mathrm{Log}_{\mathbf{R}_\mathbf{S}}\left(\tilde{\mathbf{O}}_1^T\mathbf{\Lambda}_1\tilde{\mathbf{O}}_1\right)\mathbf{U}^T_{\mathbf{S}\rightarrow\mathbf{W}_1}(1)
\end{eqnarray}
where $\mathbf{U}_{\mathbf{S}\rightarrow\mathbf{W}_1}(p)$ is defined for matrices $\mathbf{S}$ and $\mathbf{W}_1$ correspondingly to \eqref{eq:grassman_geodesic} and the derivations leading to it.
Intuitively, this can be viewed as applying the operators $\mathbf{S}$ and $\mathbf{F}$ to matrices expressed in the bases associated with the non-trivial SPD matrices of rank $r$, and then projecting the resulting operators back to the original space of SPSD matrices by applying $\mathbf{U}_{\mathbf{W}_1\rightarrow\mathbf{W}_2}(p)$.

A summary of this derivation is presented in Algorithm \ref{alg:SA_implementation}.
A demonstration of the properties of these new operators for SPSD matrices using a simple simulation of $4\times 4$ matrices, similar to the one presented in Subsection \ref{subsub:ills_exmp} but with matrices of rank $3$, could be found in Section \ref{subsub:mat4x4_exmp_fxd_rnk}.

\begin{algorithm}
\caption{Implementation of the operators for SPSD matrices}
\textbf{Input:} Two datasets with point correspondence $\{\mathbf{x}_1[i],\mathbf{x}_2[i]\}_{i=1}^N$\\
\textbf{Output:} Operators $\mathbf{S}$ and $\mathbf{F}$ and their eigenvectors:  $\mathbf{\Psi}^{(\mathbf{S})},\mathbf{\Psi}^{(\mathbf{F})}$\\
\begin{algorithmic}[1]
\Statex
\Function{SPSD-Geodesics}{$\mathbf{G}_1,\mathbf{G}_2,p$}
\Comment{As defined in \cite{bonnabel2010riemannian}}
\State{Set $r=\min\left\lbrace\textrm{rank}\left(\mathbf{G}_1\right),\textrm{rank}\left(\mathbf{G}_2\right)\right\rbrace$.}
\For{$i\in[1,2]$}
\State{Set $\mathbf{G}_i=\mathbf{V}_i\mathbf{\Lambda}_i\mathbf{V}_i^T$} \Comment{Eigenvalue decomposition}
\State{Set $\tilde{\mathbf{V}}_i=\mathbf{V}_i(:,1:r)$}
\State{Set $\tilde{\mathbf{\Lambda}}_i=\mathbf{\Lambda}_i(1:r,1:r)$}
\EndFor
\State{Set $[\mathbf{O}_1,\mathbf{\Sigma},\mathbf{O}_2]=\mathrm{SVD}\left(\tilde{\mathbf{V}}_2^T\tilde{\mathbf{V}}_1\right)$}
\State{Set $\mathbf{\Theta}=\arccos(\mathbf{\Sigma})$}
\For{$i\in[1,2]$}
\State{Set $\mathbf{U}_i=\tilde{\mathbf{V}}_i\mathbf{O}_i$}
\State{Set $\mathbf{R}_{\mathbf{G}_i}=\mathbf{O}_i^T\tilde{\mathbf{\Lambda}}_i\mathbf{O}_i$}
\EndFor
\State{Compute $\mathbf{U}_{\mathbf{G}_1\rightarrow\mathbf{G}_2}(p)$} \Comment{According to \eqref{eq:grassman_geodesic}}
\State{Compute $\mathbf{R}_{\mathbf{G}_1\rightarrow\mathbf{G}_2}(p)=\mathbf{R}_1^{1/2}\left(\mathbf{R}_1^{-1/2}\mathbf{R}_2\mathbf{R}_1^{-1/2}\right)^{p}\mathbf{R}_1^{1/2}$}
\State{\textbf{return} $\mathbf{U}_{\mathbf{G}_1\rightarrow\mathbf{G}_2}(p)$, $\mathbf{R}_{\mathbf{G}_1\rightarrow\mathbf{G}_2}(p)$, $\mathbf{R}_{\mathbf{G}_1}$, $\mathbf{R}_{\mathbf{G}_2}$}
\EndFunction
\Statex
\Function{Main}{}
\State{Construct SPSD matrices for the two datasets $\mathbf{W}_1$ and $\mathbf{W}_2$} \Comment{According to \eqref{eq:SPDKern}}
\Statex
\State{\Call{SPSD-Geodesics}{$\mathbf{W}_1,\mathbf{W}_2,0.5$}}
\State{$\mathbf{S}=\mathbf{U}_{\mathbf{W}_1\rightarrow\mathbf{W}_2}(0.5)\mathbf{R}_{\mathbf{W}_1\rightarrow\mathbf{W}_2}(0.5)\mathbf{U}^T_{\mathbf{W}_1\rightarrow\mathbf{W}_2}(0.5)$}
\Statex
\State{\Call{SPSD-Geodesics}{$\mathbf{S},\mathbf{W}_1,1$}}
\State{$\mathbf{F}=\mathbf{U}_{\mathbf{S}\rightarrow\mathbf{W}_1}(1)\mathrm{Log}_{\mathbf{R}_{\mathbf{S}}}\left(\mathbf{R}_{\mathbf{W}_1}\right)\mathbf{U}^T_{\mathbf{S}\rightarrow\mathbf{W}_1}(1)$} \Comment{$\mathrm{Log}_\cdot(\cdot)$ is defined as in \eqref{eq:spd_logmap}}
\EndFunction
\end{algorithmic}
\label{alg:SA_implementation}
\end{algorithm}

%%%%%%%%%%%%%%%%%%%%%%%%%%%%%%%%%%%%%%%%%%%%%%%%%%%%%%%%%%%%%%

%%%%%%%%%%%%%%%%%%%%%%%%%%%%%%%%%%%%%%%%%%%%%%%%%%%%%%%%%%%%%%
%%%%%%%%%%%%%%%%%%%%%%%%%%%%%%%%%%%%%%%%%%%%%%%%%%%%%%%%%%%%%%

\section{Experimental Results\label{sec:results}}

\subsection{Illustrative Toy Example: SPD Case\label{subsub:ills_exmp}}
We demonstrate the properties of the composite operators $\mathbf{S}$ and $\mathbf{F}$, constructed in Algorithm \ref{alg:twomod}, using a simple simulation of $4\times4$ matrices.
Define two matrices, $\mathbf{M}_1=\mathbf{\Psi}\mathbf{\Lambda}^{(1)}\mathbf{\Psi}^T$ and $\mathbf{M}_2=\mathbf{\Psi}\mathbf{\Lambda}^{(2)}\mathbf{\Psi}^T$, with the following common eigenvectors:
\begin{eqnarray}\label{toy example eigenvector matrix}
\mathbf{\Psi} = \left[
\begin{matrix}
\psi_1, \psi_2, \psi_3, \psi_4
\end{matrix}
\right] = \frac{1}{2}\left[
\begin{matrix}
1,\ \ 1,\ \ 1,\ \ 1\\
1,\ \ 1, -1, -1\\
1, -1, -1,\ \ 1\\
1, -1,\ \ 1, -1
\end{matrix}
\right]
\end{eqnarray}
and the following eigenvalues:
\begin{eqnarray}
\mathbf{\Lambda}^{(1)} = \mathrm{diag}\left(\left[\lambda^{(1)}_1,\lambda^{(1)}_2,\lambda^{(1)}_3,\lambda^{(1)}_4\right]\right) = \mathrm{diag}(\left[ 0.5,\ \ 1,\ 0.01,\ 0.2 \right])\\
\mathbf{\Lambda}^{(2)} = \mathrm{diag}\left(\left[\lambda^{(2)}_1,\lambda^{(2)}_2,\lambda^{(2)}_3,\lambda^{(2)}_4\right]\right) = \mathrm{diag}(\left[ 0.01, \ 1,\ 0.5,\ \ 0.2\right])
\end{eqnarray}
In this example, $\psi_1$ is a common eigenvector that is dominant in $\mathbf{M}_1$ and weak in $\mathbf{M}_2$, $\psi_3$ is a common eigenvector that is dominant in $\mathbf{M}_2$ and weak in $\mathbf{M}_1$ and $\psi_2$ and $\psi_4$ are common eigenvectors that are similarly expressed in both $\mathbf{M}_1$ and $\mathbf{M}_2$.

We construct the operators $\mathbf{S}=\mathbf{M}_1\#\mathbf{M}_2$ and $\mathbf{F}=\mathrm{Log}_{\mathbf{S}}\left(\mathbf{M}_1\right)$ and compute their eigenvalues and eigenvectors. 
Figure \ref{fig:4x4mat_full} presents the $4$ eigenvalues of $\mathbf{M}_1$, $\mathbf{M}_2$, $\mathbf{S}$ and $\mathbf{F}$, denoted by $\{\lambda^{(1)}_n\}_{n=1}^4$, $\{\lambda^{(2)}_n\}_{n=1}^4$, $\{\lambda^{(\mathbf{S})}_n\}_{n=1}^4$ and $\{\lambda^{(\mathbf{F})}_n\}_{n=1}^4$, respectively, in the left plots, and the corresponding eigenvectors in the right plots. 
This figure depicts that the two matrices $\mathbf{M}_1$ and $\mathbf{M}_2$ share the same $4$ eigenvectors (as defined) and that the resulting eigenvectors of $\mathbf{S}$ and $\mathbf{F}$ are similar to these $4$ eigenvectors.
Note that eigenvectors $2$ and $4$ of operator $\mathbf{F}$ are not identical to the eigenvectors of $\mathbf{M}_1$ and $\mathbf{M}_2$ due to numerical issues, which arise since these eigenvectors in $\mathbf{F}$ correspond to negligible eigenvalues.
The left plots show that the eigenvalues of $\mathbf{S}$ and $\mathbf{F}$ capture the similarities and differences in the expression of the spectral components of $\mathbf{M}_1$ and $\mathbf{M}_2$.
Specifically, since $\lambda^{(1)}_2=\lambda^{(2)}_2$ and $\lambda^{(1)}_4=\lambda^{(2)}_4$, the corresponding eigenvalues of $\mathbf{S}$ assume the same magnitude.
In contrast, due to this equality, these eigenvalues correspond to negligible eigenvalues of $\mathbf{F}$.
The two other eigenvectors, $\psi_1$ and $\psi_3$, correspond to eigenvalues that differ by an order of magnitude in the two matrices and are therefore the most dominant components in $\mathbf{F}$.
In addition, note the opposite sign of eigenvalues $\lambda^{(\mathbf{F})}_1$ and $\lambda^{(\mathbf{F})}_3$, which indicates the source of the more dominant component, i.e. whether $\lambda^{(1)}_n>\lambda^{(2)}_n$ or $\lambda^{(1)}_n{<}\lambda^{(2)}_n$.
These properties are proved and explained in more detail in Section \ref{sec:analysis}.

\begin{figure}[bhtp!]
\centering
\includegraphics[width=1\textwidth]{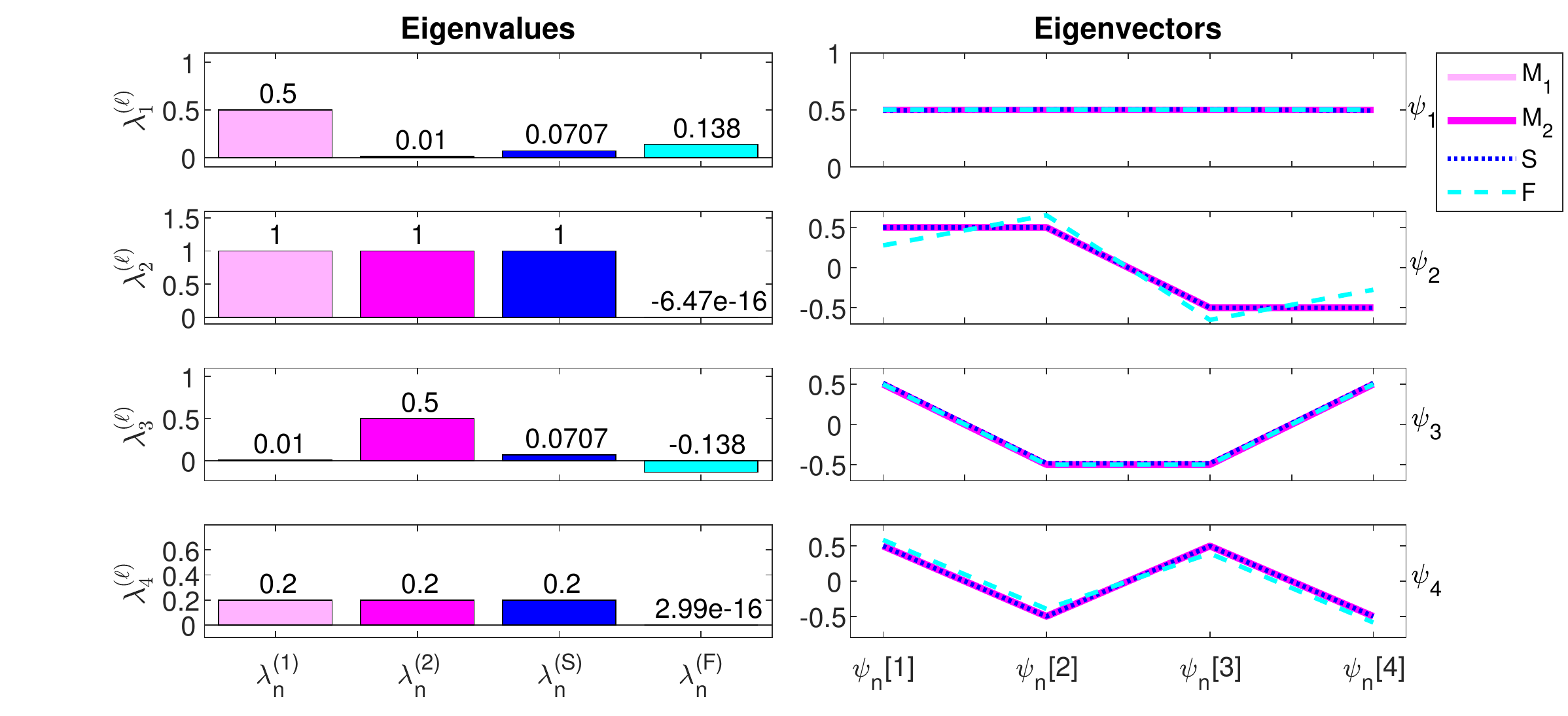}
\caption{Application of $\mathbf{S}$ and $\mathbf{F}$ to two $4\times 4$ matrices with identical eigenvectors.\label{fig:4x4mat_full}}
\end{figure}

\subsection{Illustrative Toy Example: SPSD case\label{subsub:mat4x4_exmp_fxd_rnk}}

Consider two matrices, $\mathbf{M}_1=\mathbf{\Psi}\mathbf{\Lambda}^{(1)}\mathbf{\Psi}^T$ and $\mathbf{M}_2=\mathbf{\Psi}\mathbf{\Lambda}^{(2)}\mathbf{\Psi}^T$, {with $\mathbf{\Psi}$ defined in \eqref{toy example eigenvector matrix}}
and the following eigenvalues:
\begin{eqnarray}
% \mathbf{\Lambda}^{(1)} & = & \mathrm{diag}(\left[ 0.5,\ \ 0.01,\ 0, \ 1 \right])\\
% \mathbf{\Lambda}^{(2)} & = & \mathrm{diag}(\left[ 0.01,\ 0.5,\ \ 0, \ 1 \right])\\
\mathbf{\Lambda}^{(1)} = \mathrm{diag}\left(\left[\lambda^{(1)}_1,\lambda^{(1)}_2,\lambda^{(1)}_3,\lambda^{(1)}_4\right]\right) = \mathrm{diag}(\left[ 0.5,\ \ 1,\ 0.01,\ 0 \right])\\
\mathbf{\Lambda}^{(2)} = \mathrm{diag}\left(\left[\lambda^{(2)}_1,\lambda^{(2)}_2,\lambda^{(2)}_3,\lambda^{(2)}_4\right]\right) = \mathrm{diag}(\left[ 0.01, \ 1,\ 0.5,\ \ 0\right])
\end{eqnarray}
Note that the $4$th eigenvalue is zero in both matrices resulting in SPSD matrices of rank $3$.

We construct the operators $\mathbf{S}$ and $\mathbf{F}$ according to Algorithm \ref{alg:SA_implementation} and compute their eigenvalues and eigenvectors. The results are presented in Figure \ref{fig:4x4mat_rank3}. Same as Figure \ref{fig:4x4mat_full},
Figure \ref{fig:4x4mat_rank3} presents in the left plots the $4$ eigenvalues of $\mathbf{M}_1$, $\mathbf{M}_2$, $\mathbf{S}$ and $\mathbf{F}$, denoted by $\{\lambda^{(1)}_n\}_{n=1}^4$, $\{\lambda^{(2)}_n\}_{n=1}^4$, $\{\lambda^{(\mathbf{S})}_n\}_{n=1}^4$ and $\{\lambda^{(\mathbf{F})}_n\}_{n=1}^4$, respectively, and the corresponding eigenvectors in the right plots.
Both matrices, $\mathbf{M}_1$ and $\mathbf{M}_2$, share the same $4$ eigenvectors as depicted in the right plots, and the resulting eigenvectors of $\mathbf{S}$ and $\mathbf{F}$ are similar to these $4$ eigenvectors.
In this example, $\psi_2$ is a dominant component in both $\mathbf{M}_1$ and $\mathbf{M}_2$ with the same large eigenvalue. 
Therefore, similarly to the SPD case, the eigenvalue of $\mathbf{S}$ associated with this eigenvector remains large, whereas the eigenvalue of $\mathbf{F}$ associated with this eigenvector is negligible, as expected.
In contrast, $\psi_1$ and $\psi_3$ are eigenvectors that are differently expressed in the two matrices (corresponding to eigenvalues $0.5$ and $0.01$), and therefore, they correspond to dominant eigenvalues in $\mathbf{F}$.
The left plot demonstrates that this behavior is indeed captured by the operators $\mathbf{S}$ and $\mathbf{F}$ for SPSD matrices.
Moreover, the eigenvalues of the operators $\mathbf{S}$ and $\mathbf{F}$ for SPSD matrices are equal to the eigenvalues that were obtained by the operators for SPD matrices in a corresponding toy example, presented in Figure \ref{fig:4x4mat_full}.
Note that all eigenvalues that correspond to $\psi_4$ are very close to zero, as expected due to the definition of the matrices $\mathbf{M}_1$ and $\mathbf{M}_2$.
% expected values based on Theorem \ref{theo:S_eigs} and Theorem \ref{theo:A_eigs} that were proved for the operators defined on SPD matrices.
% These theorems state that $\lambda^{(\mathbf{S})}=\sqrt{\lambda^{(1)}\lambda^{(2)}}$ and $\lambda^{(\mathbf{A})}=0.5\sqrt{\lambda^{(1)}\lambda^{(2)}}\left(\log\lambda^{(1)}-\log\lambda^{(2)}\right)$.
% Specifically, note that when $\lambda^{(1)}=\lambda^{(2)}$ then $\lambda^{(\mathbf{A})}\approx 0$ and $\lambda^{(\mathbf{S})}=\lambda^{(1)}=\lambda^{(2)}$, and that the sign of the eigenvalues of $\mathbf{A}$ indeed indicates in which matrix the eigenvector is more dominant.

\begin{figure}[bhtp!]
\centering
\includegraphics[width=1\textwidth]{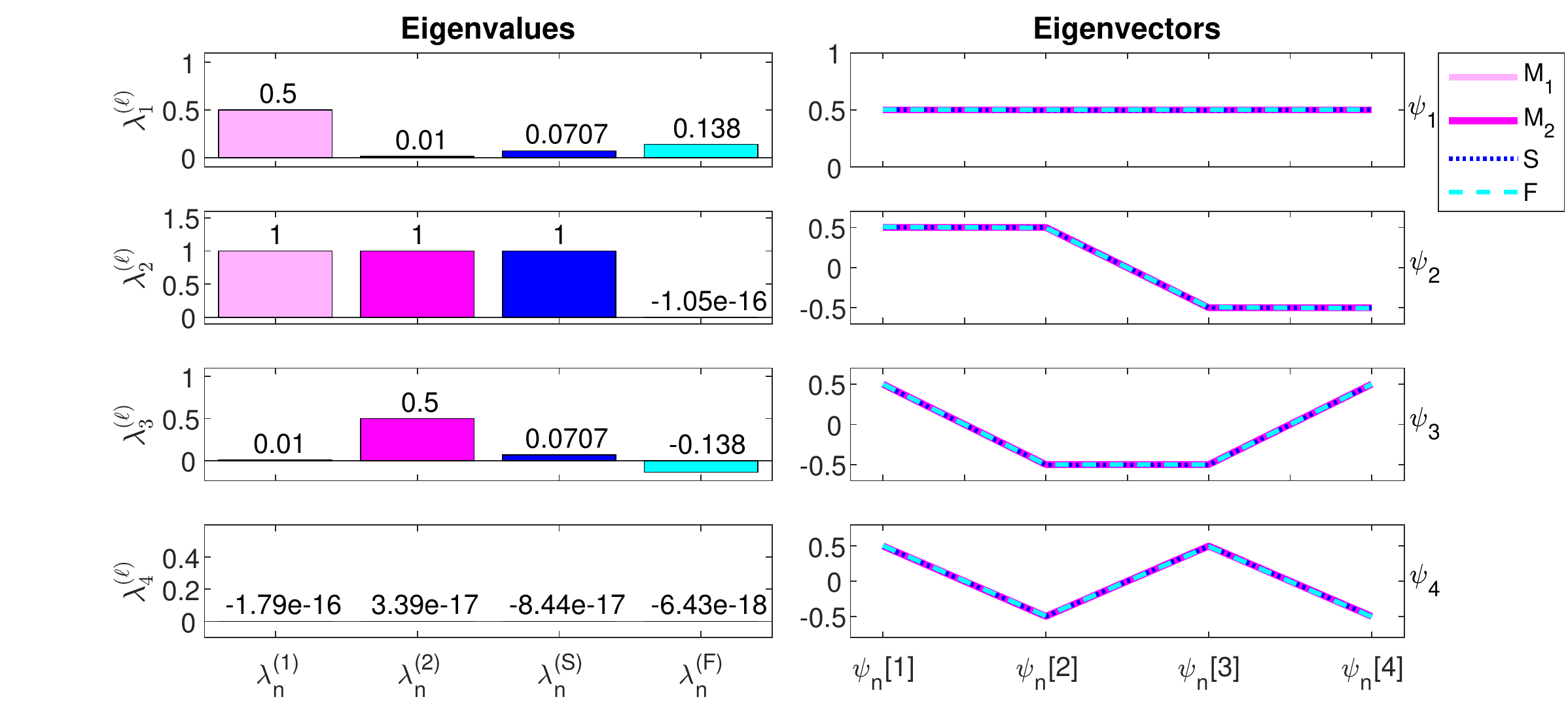}
\caption{Application of the operators $\mathbf{S}$ and $\mathbf{F}$ for SPSD matrices to two $4\times 4$ matrices of rank $3$ with identical eigenvectors.\label{fig:4x4mat_rank3}}
\end{figure}

%%%%%%%%%%%%%%%%%%%%%%%%% 3rd example %%%%%%%%%%%%%%%%%%%%%%%%

\subsection{Transitory Double Gyre Flow\label{sub:2gyre}}

To demonstrate the proposed Riemannian multi-resolution analysis described in Section \ref{sec:rmra}, we consider a variation of the transitory double gyre flow presented in \cite{mosovsky2011transport,froyland2014almost}.

We simulate a 2D dynamical system with coordinates $(x_t,y_t)$ using the following equations:
\begin{eqnarray}
\dot{x}_t & = & -\frac{\partial}{\partial y_t}H\left(x_t,y_t,t\right)\\
\dot{y}_t & = & \frac{\partial}{\partial x_t}H\left(x_t,y_t,t\right)
\end{eqnarray}
with the function:
\begin{eqnarray}
H\left(x_t,y_t,t\right) & = & (1-g(t))H_1\left(x_t,y_t\right) + g(t)H_2\left(x_t,y_t\right)\\
H_1\left(x_t,y_t\right) & = & c_1\sin(2\pi x_t)\sin(\pi y_t)\\
H_2\left(x_t,y_t\right) & = & c_2\sin(\pi x_t)\sin(2\pi y_t)\\
g(t) & = & t^2(3-2t),
\end{eqnarray}
where $c_1=2$, $c_2=10$, $i=1,...,N$ and $t\in[0,1]$.

These equations describe a double gyre pattern, which is horizontal at $t=0$ and transitions into a vertical double gyre pattern at $t=1$.
Note that in our simulations we add the parameters $c_1$ and $c_2$ to the dynamics, which lead to a change in rate between the (slower) horizontal and (faster) vertical double gyre patterns. 
These parameters are added in order to demonstrate the time-varying multi-resolution properties of our analysis.

We generate $N=2500$ trajectories with initial values uniformly distributed in $\left(x_0,y_0\right)\in [0,1]\times[0,1]$, where each trajectory has $T=256$ time points on a discrete uniform time-grid with a step size of $\Delta t =1/256$.
We denote each of these trajectories with an index $i=1,\ldots, N$ by a matrix $\mathbf{x}[i] \in \mathbb{R}^{2 \times T}$, whose columns are the pair of time samples $(x_t[i],y_t[i])^T$.
A short GIF file demonstrating the resulting trajectories is available on \href{https://github.com/shnitzer/Manifold-based-temporal-analysis/blob/main/Transitory_Double_Gyre_Flow_Data.gif}{GitHub}, where each point is colored according to its initial location along the x-axis to illustrate the point movement in time.
The point movement demonstrated in this GIF exhibits two main structures: (i) points that rotate in two circular structures (transitioning from a horizontal setting into a vertical setting), which can be described as almost-invariant (coherent) sets as defined and captured by \cite{froyland2014almost}, and (ii) points that are located on the boundary of these almost-invariant sets and their movement changes significantly over time.
Our goal in this example is to analyze these two movement types and to recover their different trajectories over time.

For this purpose, we construct an SPD kernel for each time frame $t$, denoted by $\mathbf{W}_t\in\mathbb{R}^{N\times N}$, according to \eqref{eq:SPDKern} based on the distances between the points in that time frame, i.e., $\left\Vert(x_t[i]-x_t[j],y_t[i]-y_t[j])\right\Vert_2^2$, $i,j=1,\dots,N$, with $\sigma$ set to $0.5$ times the median of these distances.
We then apply Algorithm \ref{alg:wavelet} and obtain the multi-resolution representation and $\ell=\log_2(T)=8$ levels of operators.
We denote the operators of different levels and different time frames by $\mathbf{S}_r^{(\ell)}\in\mathbb{R}^{N\times N}$ and $\mathbf{F}_r^{(\ell)}\in\mathbb{R}^{N\times N}$, where $\ell=1,\dots,8$ and $r=1,\dots,T/2^\ell$. Note that $r$ is associated with the time-frame indices, e.g., at level $\ell=6$, $r=\lceil t/2^\ell\rceil=3$ corresponds to time points $t=129,\dots,192$.

In the following, we focus on the second eigenvector of $\mathbf{S}_r^{(\ell)}$ and show that it indeed captures the common components and common trajectory behavior at the different operator levels.
Figure \ref{fig:DG_S} presents the data-points colored according to the second eigenvector of $\mathbf{S}_r^{(\ell)}$, denoted by $\psi_2^{(\mathbf{S}_r^{(\ell)})}$ at levels $\ell=8$, $\ell=4$ and $\ell=3$ and with $r$ values corresponding to different time frames.
More specifically, the locations of all $N=2500$ points are presented at $8$ different time-instances along the trajectory. 
Each point is colored according to its value in: (a) $\psi^{(\mathbf{S}_1^{(8)})}_2$, (b) $\psi^{(\mathbf{S}_4^{(4)})}_2$, (c) $\psi^{(\mathbf{S}_{10}^{(4)})}_2$, (d) $\psi_2^{(\mathbf{S}_7^{(3)})}$ and (e) $\psi_2^{(\mathbf{S}_8^{(3)})}$.
Note that Figure \ref{fig:DG_S} (d) and Figure \ref{fig:DG_S} (e) present the preceding time frames of Figure \ref{fig:DG_S} (b).
We maintained a consistent color coding in all time-instances, i.e., each point has the same color throughout its trajectory in time, and the most significant values (largest in absolute value) are colored in either yellow or blue.

%%%%%%%%%%%%%%%%%%%%%%%%%%%%%%%%%%%%%%%%%%%%%%%%%%%%%%%%%%%%%%%%%%%%%%%%%%%%%%%%%%%%%%%%%%%%%%%%%%%%%%
\begin{landscape}
\begin{figure}[bhtp!]
\centering
\includegraphics[height=0.8\textheight]{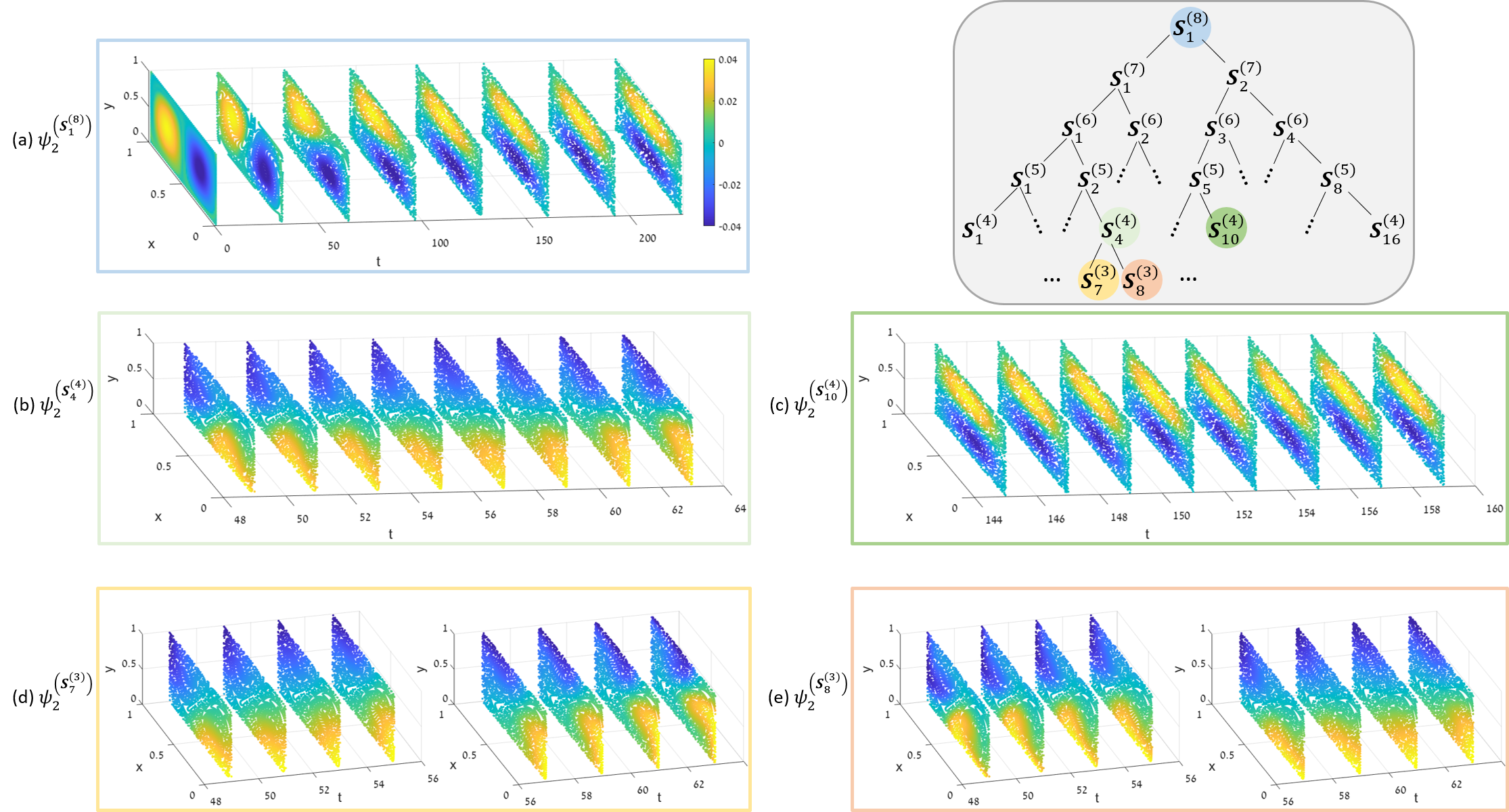}
\caption{Data points colored according to the second eigenvector of the operator $\mathbf{S}_r^{(\ell)}$ at different levels and time frames: (a) $\mathbf{S}_1^{(8)}$, (b) $\mathbf{S}_4^{(4)}$, (c) $\mathbf{S}_{10}^{(4)}$, (d) $\mathbf{S}_7^{(3)}$ and (e) $\mathbf{S}_8^{(3)}$. Plots (d) and (e) present the same $8$ time points as in plot (b), where the points are colored according to a different eigenvector in each plot.
    \label{fig:DG_S}}
\end{figure}
\end{landscape}

%%%%%%%%%%%%%%%%%%%%%%%%%%%%%%%%%%%%%%%%%%%%%%%%%%%%%%%%%%%%%%%%%%%%%%%%%%%%%%%%%%%%%%%%%%%%%%%%%%%%%%

In the figure we see the multi-resolution properties of the proposed framework.
At the highest level, $\ell=8$ in plot (a), the eigenvector of $\mathbf{S}_1^{(8)}$ captures the coherent circular structures, i.e. the almost-invariant sets, which change from a horizontal orientation at the beginning of the trajectory to a vertical orientation at the end. These structures are consistent with the ones described by \cite{froyland2014almost}. % described and captured by \cite{froyland2014almost}.
In contrast, in plots (b)-(c) (level $\ell=4$), the effect of the velocity change over time is apparent, demonstrating that our framework is capable of detecting such properties. 
These plots present two equal-length sub-segments of the trajectory: $t\in\{49,\ldots,64\}$ in plot (b) and $t\in\{145,\ldots,160\}$ in plot (c).
Plot (c), which corresponds to the faster regime closer to the end of the trajectory, depicts that the circular structures are captured by the eigenvector, whereas in plot (b), which corresponds to the slower regime, these structures are not visible. 
Due to the increase in point movement velocity over time, the components that are similarly expressed over time in the sub-segment that is closer to the end of the trajectory (plot (c)) are mainly the almost-invariant sets, as captured by the operator from the highest level in plot (a). 
Conversely, in the slower regime, there are other components that are similarly expressed over short sub-segments in time, as captured by the eigenvector presented in plot (b).
Plots (d) and (e) correspond to the two sub-segments $t\in\{49,\ldots,56\}$ and $t\in\{57,\ldots,64\}$, respectively, whose union is the sub-segment presented in plot (b). 
Note the similarity between the captured point dynamics in the two sub-segments of plots (d) and (e). This similarity explains the component emphasized by the eigenvector in plot (b), which is constructed based on these two sub-segments.

Note that the leading eigenvector of the operator $\mathbf{S}_r^{(\ell)}$ was omitted throughout this example since it mostly captures the common point distribution at the different time-frames.  
The point distribution is of less interest in this example since it provides a general geometric description of the problem setting rather than the common trajectory properties.

In the following we present the eigenvectors of the operators $\mathbf{F}$ and show that they indeed capture the time-varying trajectory behavior in consecutive time-frames.

Figure \ref{fig:DG_A} presents the data-points colored according to the leading eigenvectors of the respective operators $\mathbf{F}$ corresponding to the largest positive and negative (in absolute value) eigenvalues. 
In this setting, the eigenvectors corresponding to negative eigenvalues describe the components that are significantly more dominant in the first half of the time segment and the eigenvectors corresponding to positive eigenvalues describe components that are significantly more dominant in the second half of the segment, as we will show in Section \ref{sec:analysis} (Theorem \ref{theo:A_eigs}).

Figure \ref{fig:DG_A} (a) and Figure \ref{fig:DG_A} (b) present the eigenvectors of the operator $\mathbf{F}_1^{(8)}$ corresponding to the smallest negative eigenvalue in plot (a) and to the largest positive eigenvalue in plot (b).
These plots depict that $\mathbf{F}_1^{(8)}$ captures the differences between the slower point movement in $t\in\{1,\ldots,128\}$ and the faster point movement in $t\in\{129,\ldots,256\}$. 
%due to Theorem \ref{theo:A_eigs} in , which showed that the operator $\mathbf{A}$ recovers differently expressed common components and that the sign of the eigenvalues of $\mathbf{A}$ indicates in which source the component is more dominant.
Due to the change in point movement velocity over time, the component describing the circular structures (the almost-invariant sets) is significantly more dominant in the sub-segment from the faster regime ($t\in\{129,\ldots,256\}$) than the slower regime ($t\in\{1,\ldots,128\}$), as captured by the eigenvector in plot (b). In contrast, in the slower regime other components are dominant (as demonstrated also by Figure \ref{fig:DG_S} (b)), leading to different structures being emphasized in Figure \ref{fig:DG_A} (a), which mostly captures the boundary points.
%Figure \ref{fig:DG_A} (a), which describes the components that are significantly more dominant in the first slower half of the time frame, highlights mostly the boundary points that significantly change their location in the transition between the horizontal and vertical double gyres. 
%In contrast, in Figure \ref{fig:DG_A} (b), which describes the second half of the time frame, a different structure is captured due to the increase in the point movement velocity over time.
% identifies the two circular structures that are part of the almost-invariant sets.
% This difference is due to the increase in the point movement velocity over time.
%Specifically, in the faster regime, the almost-invariant sets are the dominant component in the operator of the preceding level, i.e. $\mathbf{S}_2^{(7)}$, but they are much less dominant in the operator describing the slower regime, i.e. $\mathbf{S}_1^{(7)}$.
%Therefore, the eigenvector of $\mathbf{A}$ that corresponds to the largest (positive) eigenvalue identifies these two almost-invariant sets in Figure \ref{fig:DG_A} (b).
% Conversely, the boundary point movement will be expressed much more dominantly by $\mathbf{S}_1^{(7)}$, compared with by $\mathbf{S}_2^{(7)}$.
Figure \ref{fig:DG_A} (c) and Figure \ref{fig:DG_A} (d) correspond to the time segment $t\in\{129,\ldots,256\}$.
Plot (c) presents the leading eigenvector of the operator $\mathbf{F}_2^{(7)}$ with a negative eigenvalue, describing the components that are more dominant in the slower regime ($t\in\{129,\ldots,192\}$) and plot (d) presents the leading eigenvector with a positive eigenvalue, describing the components that are more dominant in the faster regime ($t\in\{193,\ldots,256\}$).
Note that both plots (c) and (d) emphasize circular structures, however, the structures in plot (d) are smaller than the ones in plot (b) and are approximately complemented by the structures in plot (c).
This behavior implies that our framework decomposes the almost-invariant sets into smaller components in short sub-segments (at lower operator-tree levels), and therefore, indicates that the proposed method indeed captures meaningful dynamical information in different time-scales.
Figure \ref{fig:DG_A} (e) presents the eigenvector of $\mathbf{F}_1^{(6)}$ (describing $t\in[\{1,\ldots,64\}$) with the largest negative eigenvalue.
This plot depicts that in the slower regime (at the beginning of the trajectory) the operator $\mathbf{F}$ highlights high-resolution fine components of the point movement dynamics.

\begin{landscape}
\begin{figure}[bhtp!]
\centering
\includegraphics[height=0.7\textheight]{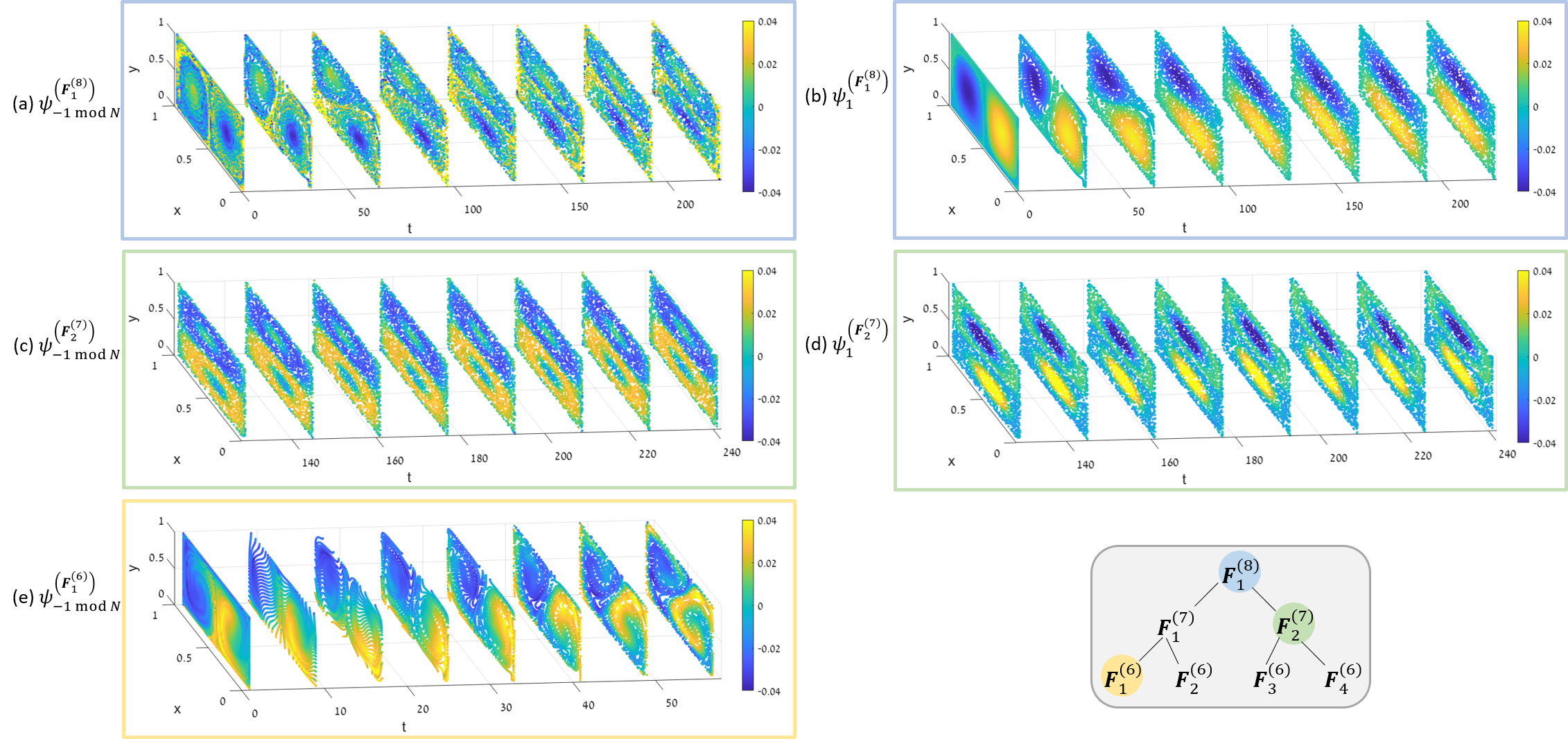}
\caption{Data points colored according to the eigenvector of (a-b) $\mathbf{F}_1^{(8)}$ with the smallest negative and largest positive eigenvalues, (c-d) $\mathbf{F}_2^{(7)}$ with the smallest negative and largest positive eigenvalues, and (e) $\mathbf{F}_1^{(6)}$ with the smallest negative eigenvalue.
\label{fig:DG_A}}
\end{figure}

\end{landscape}

%%%%%%%%%%%%%%%%%%%%%%%%%%%%%%%%%%%%%%%%%%%%%%%%%%%%%%%%%%%%%%%%%%%%%%%%%%%%%%%%%%%%%%%%%%%%%%%%%%%%%%

We remark that different choices of the kernel scale in \eqref{eq:SPDKern} lead to different resolutions. 
For example, taking a smaller kernel scale leads to a slower ``convergence'' to the almost-invariant sets of the representations obtained at the different levels of the operator $\mathbf{S}_r^{(\ell)}$, as well as an enhancement of finer structures captured by the operator $\mathbf{F}_r^{(\ell)}$.

In order to evaluate our framework with respect to previous work, we compare the operators $\mathbf{S}$ and $\mathbf{F}$, which serve as the building blocks of our algorithm, with related operators: (i) the dynamic Laplacian \cite{froyland2015dynamic}, which was shown to recover coherent sets from multiple time-frames of dynamical systems, and (ii) symmetric and anti-symmetric diffusion operators that were shown to recover similar and different components in multimodal data \cite{shnitzer2019recovering}. 
In \cite{froyland2015dynamic}, with a slight abuse of notation and in analogy to \eqref{eq:Sr}, the dynamic Laplacian is defined by $\mathbf{L}^\top\mathbf{L}$, where $\mathbf{L}=\mathbf{W}_{1}\mathbf{W}_{2}$. The common and difference operators in \cite{shnitzer2019recovering} are defined by $\mathbf{\hat{S}}=\mathbf{W}_{1}\mathbf{W}_{2}^\top+\mathbf{W}_{2}\mathbf{W}_{1}^\top$ and $\mathbf{\hat{A}}=\mathbf{W}_{1}\mathbf{W}_{2}^\top-\mathbf{W}_{2}\mathbf{W}_{1}^\top$, respectively, which are analogous to the operators $\mathbf{S}$ and $\mathbf{F}$ in \eqref{eq:Sr} and \eqref{eq:Ar}.

Figure \ref{fig:op_compare_S} presents point clustering using k-means applied to the second eigenvector of the $3$ operators used for recovering similarities: the proposed operator $\mathbf{S}$ in plot (a), the operator $\mathbf{\hat{S}}$ from \cite{shnitzer2019recovering} in plot (b) and the operator $\mathbf{L}^\top\mathbf{L}$ from \cite{froyland2015dynamic} in plot (c).
All $3$ operators were constructed from time frames $t=250$ and $t=256$.
We see in this figure that the proposed formulation of the operator is significantly better at capturing the almost-invariant sets (the circular structures). 
Note that the results presented here for $\mathbf{L}^\top\mathbf{L}$ are different than those in \cite{froyland2015dynamic}, since we take into account only two close time frames, whereas in \cite{froyland2015dynamic} the operator is constructed using all the points along the trajectory.
%Therefore, this comparison demonstrates the power of the proposed operator, which recovers these meaningful structures given only two time points.

\begin{figure}[t]
    \centering
    \subfloat[$\psi^{(\mathbf{S})}_2$]{\includegraphics[width=0.33\textwidth]{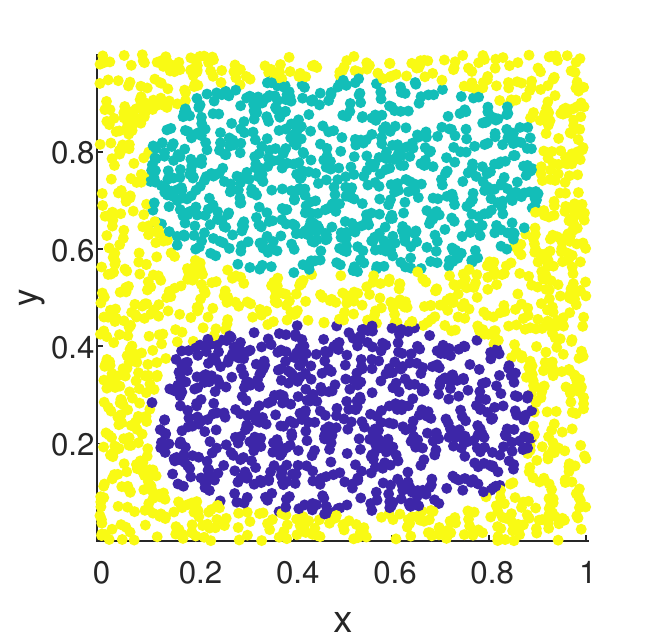}}
    \subfloat[$\psi^{(\mathbf{\hat{S}})}_2$ \cite{shnitzer2019recovering}]{\includegraphics[width=0.33\textwidth]{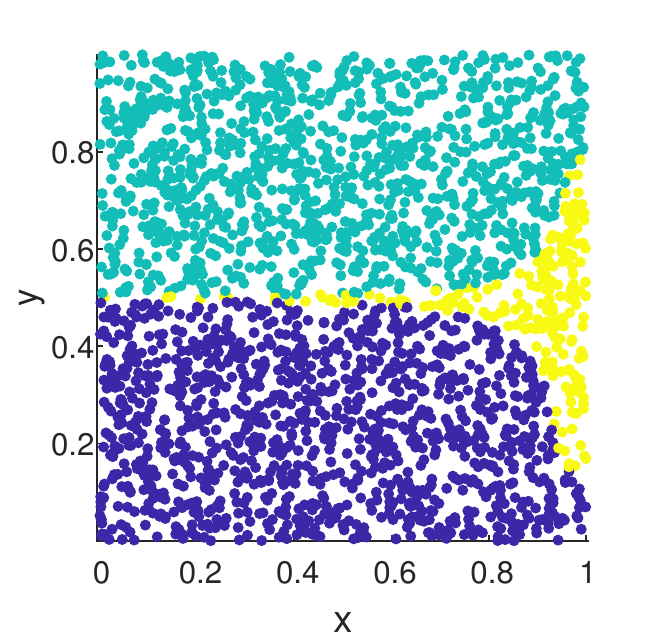}}
    \subfloat[$\psi^{(\mathbf{L}^T\mathbf{L})}_2$ \cite{froyland2015dynamic}]{\includegraphics[width=0.33\textwidth]{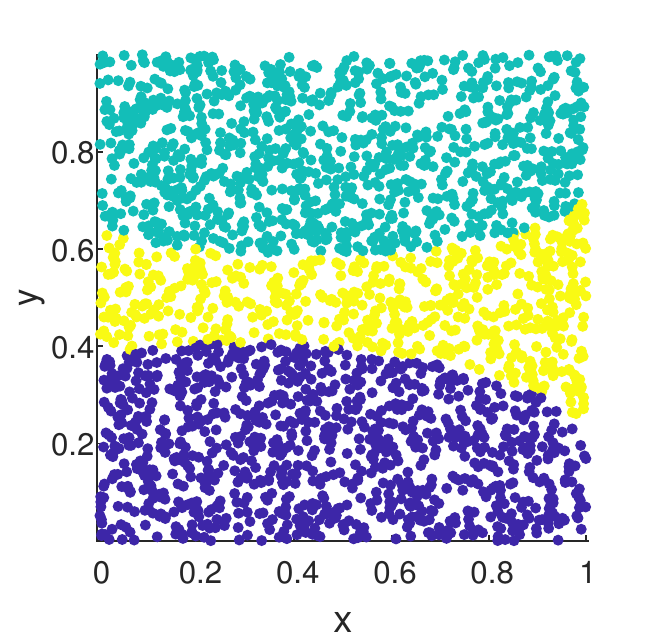}}
    \caption{Clustering of trajectory points based on eigenvectors of the proposed operator $\mathbf{S}$ in plot (a), and the operators from \cite{shnitzer2019recovering} in plot (b) and from \cite{froyland2015dynamic} in plot (c). All operators are constructed by combining two time frames at $t=250$ and $t=256$.}
    \label{fig:op_compare_S}
\end{figure}

In Figure \ref{fig:op_compare_A}, we present the point clustering obtained by applying k-means to the second eigenvector of the $2$ operators used for recovering differences: the proposed operator $\mathbf{F}$ in plot (a) and the operator $\mathbf{\hat{A}}$ from \cite{shnitzer2019recovering} in plot (b). In contrast to plot (b), in plot (a) we clearly see the swirl of the flow from and to the invariant sets (outward and inward).

\begin{figure}[bhtp!]
    \centering
    \subfloat[$\psi^{(\mathbf{F})}_2$]{\includegraphics[width=0.33\textwidth]{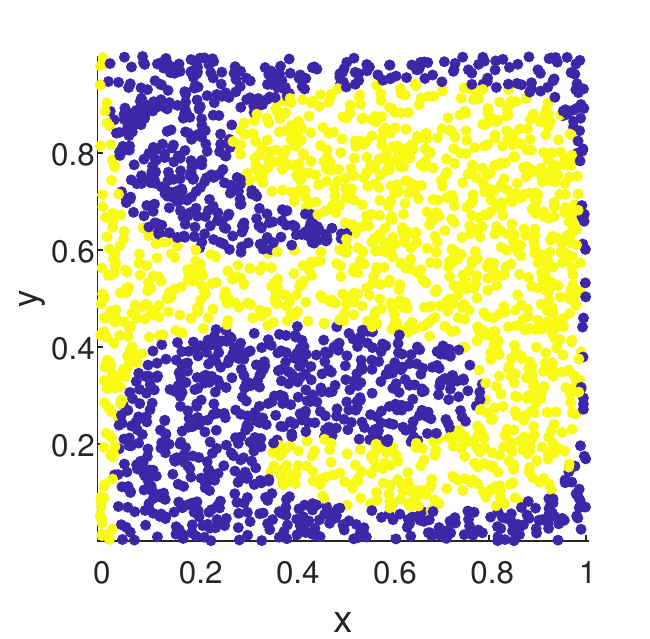}}
    \subfloat[$\psi^{(\mathbf{\hat{A}})}_2$ \cite{shnitzer2019recovering}]{\includegraphics[width=0.33\textwidth]{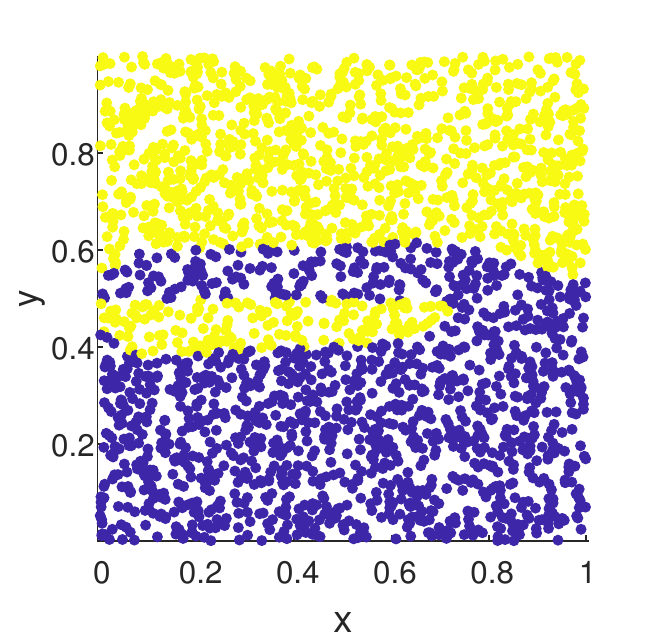}}
    
%    \subfloat[$\psi^{(\mathbf{F})}_2$]{\includegraphics[width=0.33\textwidth]{figures/DoubleGyre/operator_comparison/A2_rie_250_256_evec.eps}}
%    \subfloat[$\psi^{(\mathbf{\hat{A}})}_2$ \cite{shnitzer2019recovering}]{\includegraphics[width=0.33\textwidth]{figures/DoubleGyre/operator_comparison/A2_euc_250_256_evec.eps}}
    \caption{Clustering of trajectory points based on eigenvectors of the proposed operator $\mathbf{F}$ in plot (a) and operator $\mathbf{\hat{A}}$ from \cite{shnitzer2019recovering} in plot (b).}
    \label{fig:op_compare_A}
\end{figure}

In addition to the differences demonstrated in Figure \ref{fig:op_compare_S} and Figure \ref{fig:op_compare_A}, another crucial advantage of our formulation relates to the construction of the multi-resolution framework and its theoretical justification presented in Section \ref{sec:analysis}.
The operators in \cite{shnitzer2019recovering} and \cite{froyland2015dynamic} do not have any theoretical guarantees in such an operator-tree construction and may not be suitable to such a setting. 
Indeed, we report that a similar operator-tree constructed using the operators from \cite{shnitzer2019recovering} did not exhibit the expected behavior and no meaningful representations were obtained.

We conclude by noting that such a multi-resolution analysis of the dynamics may be especially useful in applications where the parameters of interest are inaccessible, e.g., for oceanic current analysis based on ocean drifters data \cite{froyland2015rough,banisch2017understanding}, since the data is represented using non-linear kernels.

%%%%%%%%%%%%%%%%%%%%%%%%%%%%%%%%%%%%%%%%%%%%%%%%%%%%%%%%%%%%%%
%%%%%%%%%%%%%%%%%%%%%%%%% 2nd example %%%%%%%%%%%%%%%%%%%%%%%%

\subsection{Hyperspectral and LiDAR Imagery\label{sec:HSI_LIDAR}}

In Section \ref{sub:op_def}, we consider two datasets and present the two Riemannian composite operators $\mathbf{S}$ and $\mathbf{F}$.
Later, in Section \ref{sub:rmra}, these two datasets are considered as two consecutive sets in a temporal sequence of datasets, and the two Riemannian composite operators are used as a basic building block for our Riemannian multi-resolution analysis. 
Alternatively, similarly to the setting in \cite{lederman2015alternating,shnitzer2019recovering}, the two datasets could arise from simultaneous observations from two views or modalities. 
Here, we demonstrate the properties of the Riemannian composite operators $\mathbf{S}$ and $\mathbf{F}$ on real remote sensing data obtained by two different modalities. 

We consider data from the 2013 IEEE GRSS data fusion contest\footnote{\url{http://www.grss-ieee.org/community/technical-committees/data-fusion/2013-ieee-grss-data-fusion-contest/}}, which includes a hyperspectral image (HSI) with 144 spectral bands ($380-1050$nm range) and a LiDAR Digital Surface Model of the University of Houston campus and its neighboring urban area in Houston, Texas. 
The data from both modalities have the same spatial resolution of $2.5$m.
This data was previously considered in the context of manifold learning in \cite{murphy2018diffusion}.

We focus on two $60\times 90$ image patches from the full dataset, in order to reduce computation time of the operators and their eigenvalue decomposition.
We first preprocess the $60\times 90$ LiDAR image and each image in the $60\times 90\times 144$ HSI data.
The preprocessing stage includes dividing each $60\times 90$ image by its standard deviation and removing outliers: for the LiDAR image, pixel values larger than the $99$th percentile were removed, and for the HSI data, in each image, pixel values larger than the $95$th percentile or smaller than the $5$th percentile were removed. In both modalities the outliers were replaced by their nearest non-outlier values.
Figure \ref{fig:HSI_LDR_1_all} (a) and Figure \ref{fig:HSI_LDR_2_allEV} (a) present the two LiDAR image patches after preprocessing, and Figure \ref{fig:HSI_LDR_1_all} (b) and Figure \ref{fig:HSI_LDR_2_allEV} (b) present the two average HSI image patches after preprocessing.

We apply the operators $\mathbf{S}$ and $\mathbf{F}$ to this data in order to analyze the scene properties captured by both the LiDAR and the HSI sensors and extract the similarities and differences between them.
This can be viewed as a manifold-driven component analysis.

We construct the operators according to Algorithm \ref{alg:SA_implementation}, where the LiDAR image and the HSI images are defined as two datasets with point correspondence between them given by the pixel location. 
We reshape both datasets such that $\mathbf{x}_1\in\mathbb{R}^{N\times 1}$ is the reshaped LiDAR image and $\mathbf{x}_2\in\mathbb{R}^{N\times 144}$ is the reshaped HSI images, where $N=5400$. 
The resulting kernels, $\mathbf{W}_1$ and $\mathbf{W}_2$, and operators, $\mathbf{S}$ and $\mathbf{F}$, are matrices of size $N\times N$.

%%%%%%%%%%%%%%%%%%%%%%%%% 2.1 example %%%%%%%%%%%%%%%%%%%%%%%%
We begin with an analysis of the first chosen image patch, shown in Figure \ref{fig:HSI_LDR_1_all} (a) and (b).
To depict the advantages of applying the proposed operators, we visually compare the eigenvectors of the kernels, $\mathbf{W}_1$ and $\mathbf{W}_2$, with the eigenvectors of the operators $\mathbf{S}$ and $\mathbf{F}$.

Figure \ref{fig:HSI_LDR_1_all} (c-k) presents the absolute values of the leading eigenvectors of $\mathbf{S}$ in (c), $\mathbf{W}_1$ in (d-e), $\mathbf{W}_2$ in (h-i), and of $\mathbf{F}$ that correspond to the largest positive eigenvalues in (f-g) and largest negative (in absolute value) eigenvalues in (j-k).
All eigenvectors are reshaped into images of size $60\times 90$.
The absolute value of the eigenvectors is presented in order to emphasize the dominant structures in the images and the differences between the leading eigenvectors of the two kernels and the leading eigenvectors of the operators $\mathbf{S}$ and $\mathbf{F}$.

Figure \ref{fig:HSI_LDR_1_all} (c) presents the absolute values of the leading eigenvector of $\mathbf{S}$ and depicts that the operator $\mathbf{S}$ indeed recovers common structures strongly expressed in both images.
Specifically, this figure mostly highlights an `L'-shaped building at the top of the image, which is the most dominant structure (represented by the high pixel values) in both modalities.
Figure \ref{fig:HSI_LDR_1_all} (d-k) depicts that the eigenvectors of the operator $\mathbf{F}$ capture and enhance differently expressed common structures.
Consider for example the most dominant structures (with highest absolute values) in the LiDAR image presented in Figure \ref{fig:HSI_LDR_1_all} (a).
These structures include the `L'-shaped building at the top of the image and trees at the bottom.
Both structures are represented by high values in the eigenvectors of $\mathbf{W}_1$ in Figure \ref{fig:HSI_LDR_1_all} (d-e).
However, in Figure \ref{fig:HSI_LDR_1_all} (f-g), which presents the leading eigenvectors of $\mathbf{F}$ with positive eigenvalues, only the trees are significantly highlighted, whereas the `L'-shaped building is significantly attenuated.
This is due to the differences between the two modalities, since the HSI image highlights this `L' shaped building but not the trees.
Other structures exhibiting such properties are marked by black arrows in Figure \ref{fig:HSI_LDR_1_all} (f-g).
In addition, Figure \ref{fig:HSI_LDR_1_all} (h-k) depicts that the structures which are dominant only in the HSI images are emphasized by the eigenvectors of $\mathbf{F}$ corresponding to negative eigenvalues, whereas structures that are dominant in both modalities are significantly attenuated.
Examples for such structures are marked by black arrows in Figure \ref{fig:HSI_LDR_1_all} (j-k).

\begin{figure}[bhtp!]
\centering
\subfloat[]{\includegraphics[width=0.29\textwidth]{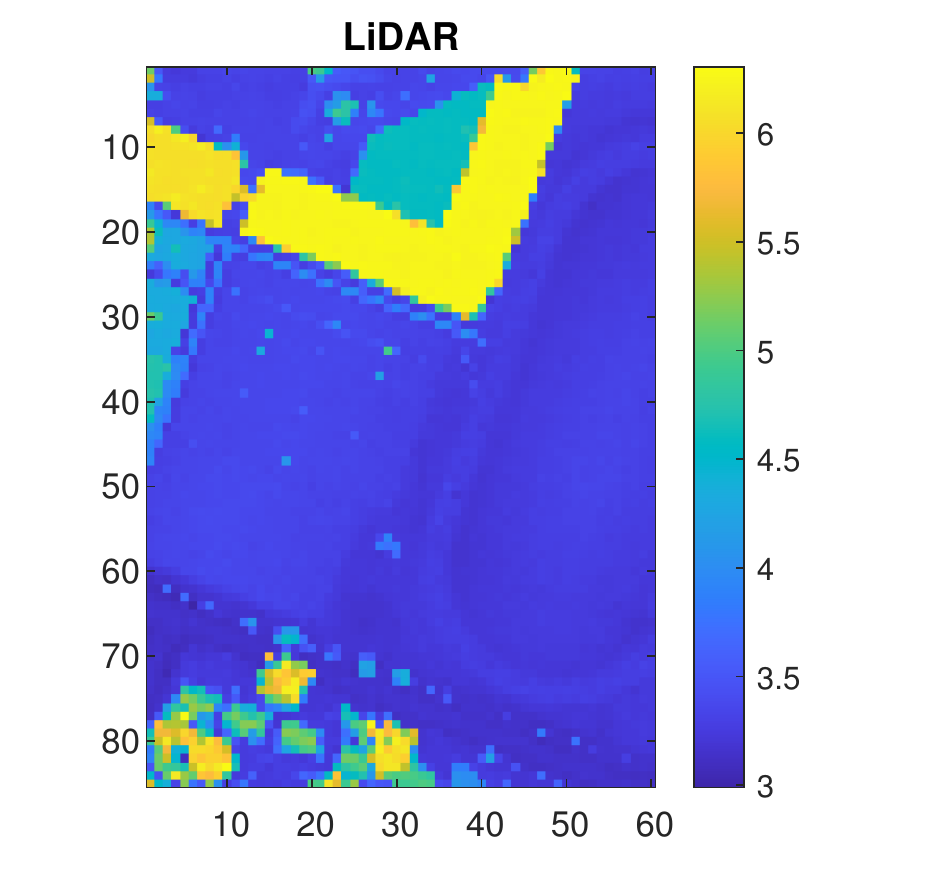}}
\subfloat[]{\includegraphics[width=0.29\textwidth]{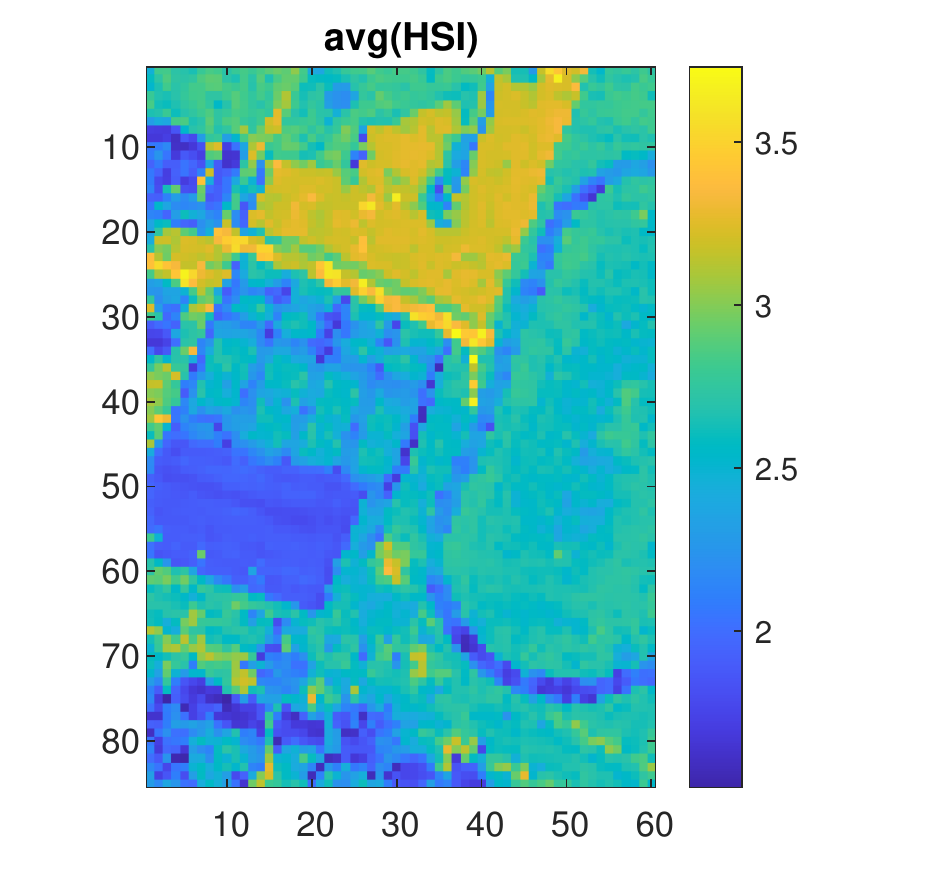}}
\subfloat[]{\includegraphics[width=0.25\textwidth]{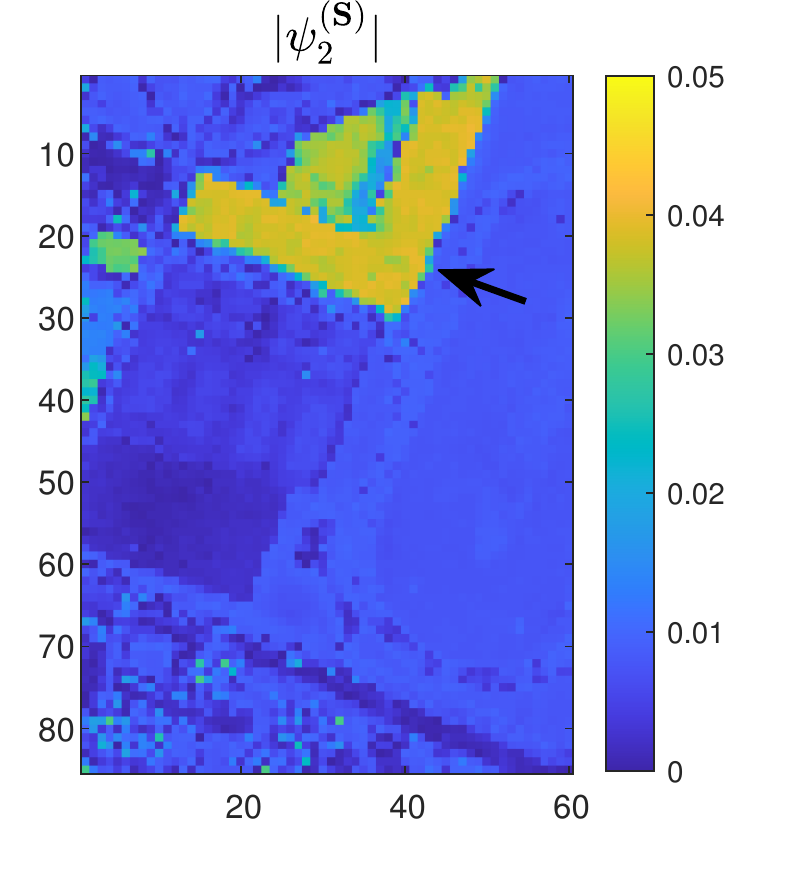}}

\subfloat[]{\includegraphics[width=0.25\textwidth]{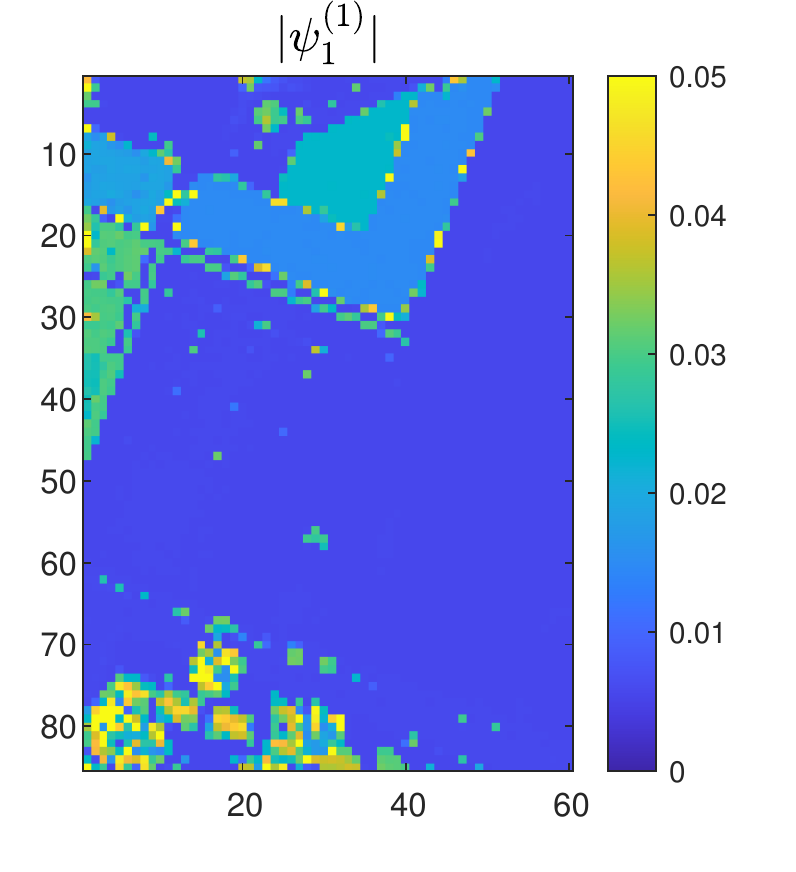}}
\subfloat[]{\includegraphics[width=0.25\textwidth]{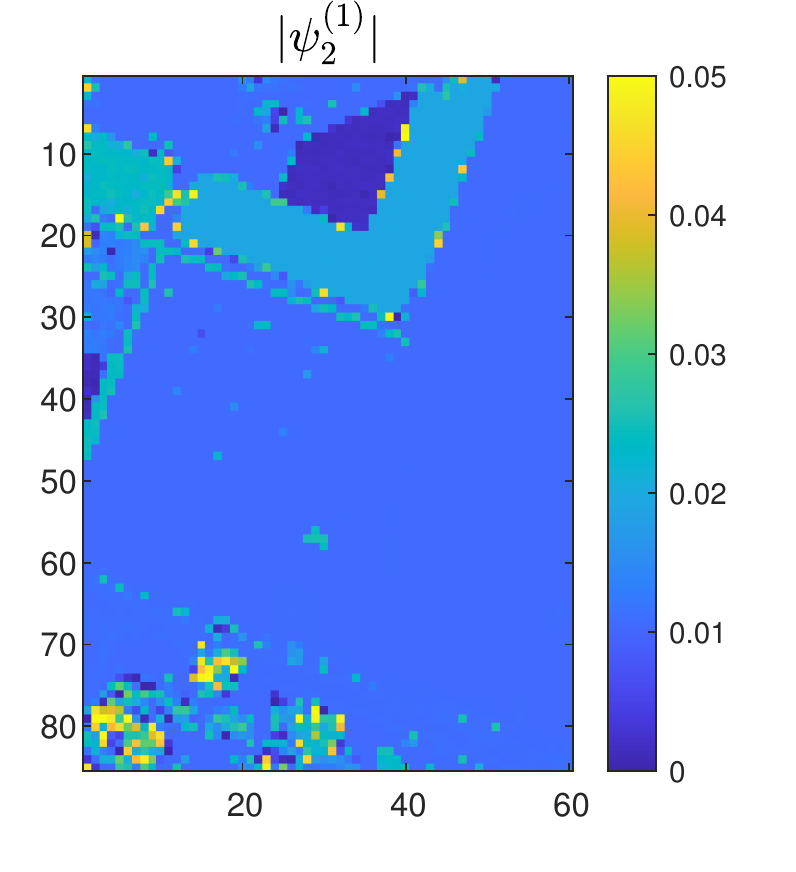}}
%\subfloat[]{\includegraphics[width=0.2\textwidth]{figures/HSI_LDR/example1/W2_psi_3.eps}}
\subfloat[]{\includegraphics[width=0.25\textwidth]{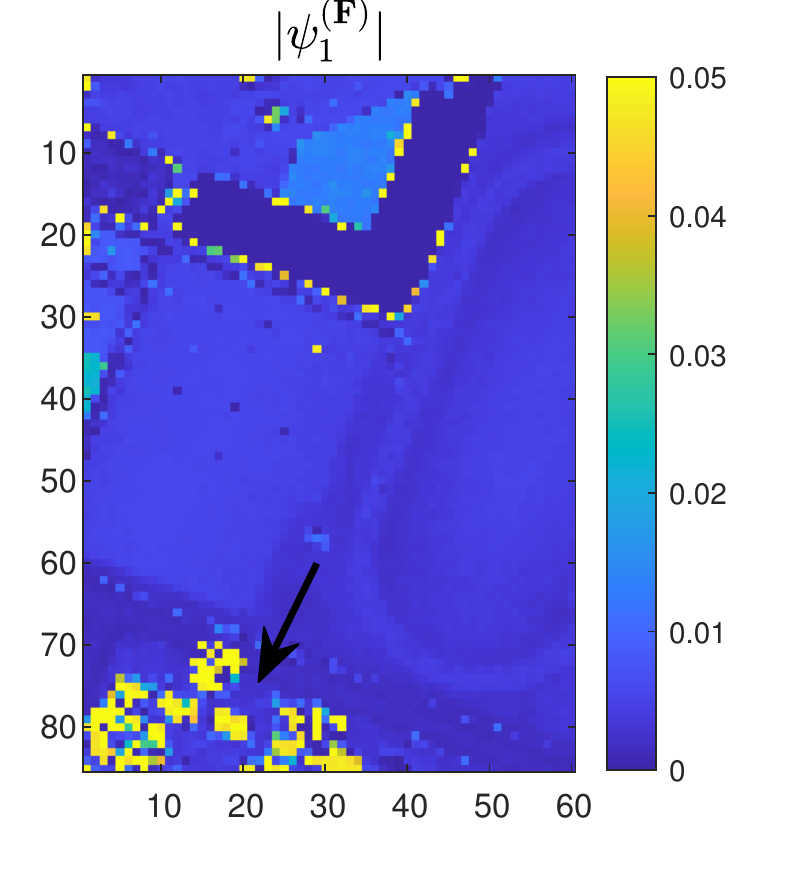}}
\subfloat[]{\includegraphics[width=0.25\textwidth]{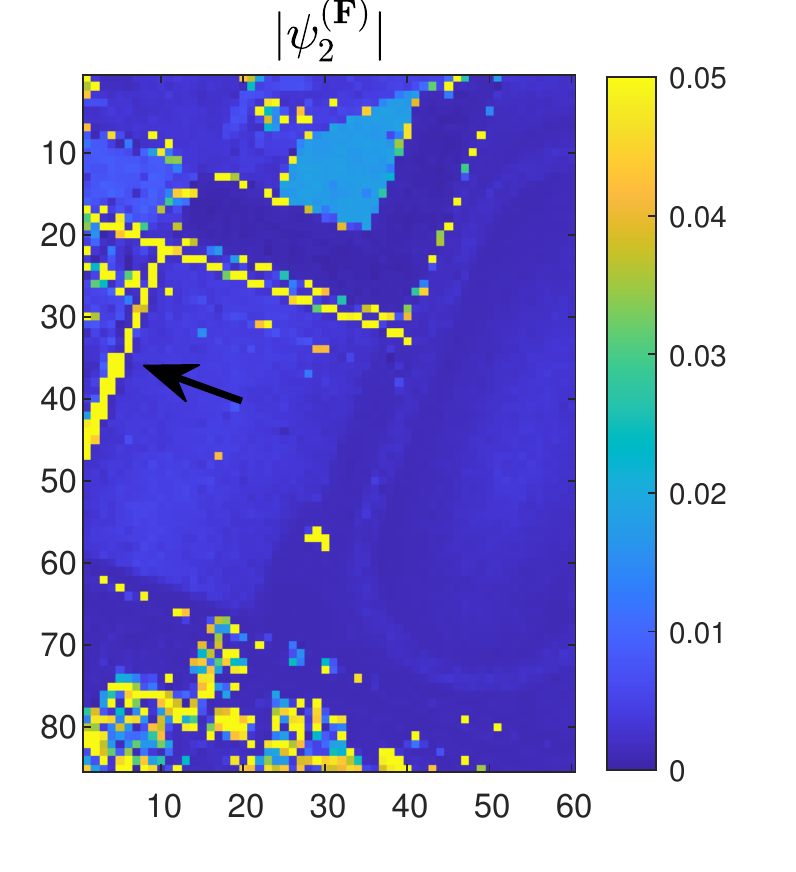}}
%\subfloat[]{\includegraphics[width=0.2\textwidth]{figures/HSI_LDR/example1/Apos_psi_3_arrow.eps}}

\subfloat[]{\includegraphics[width=0.25\textwidth]{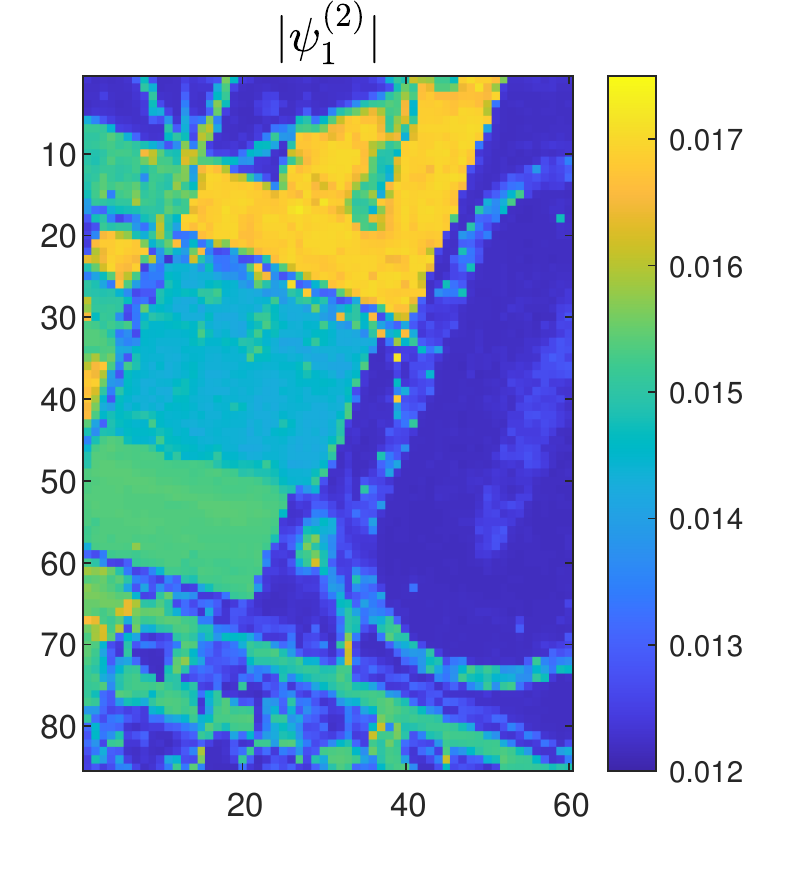}}
\subfloat[]{\includegraphics[width=0.25\textwidth]{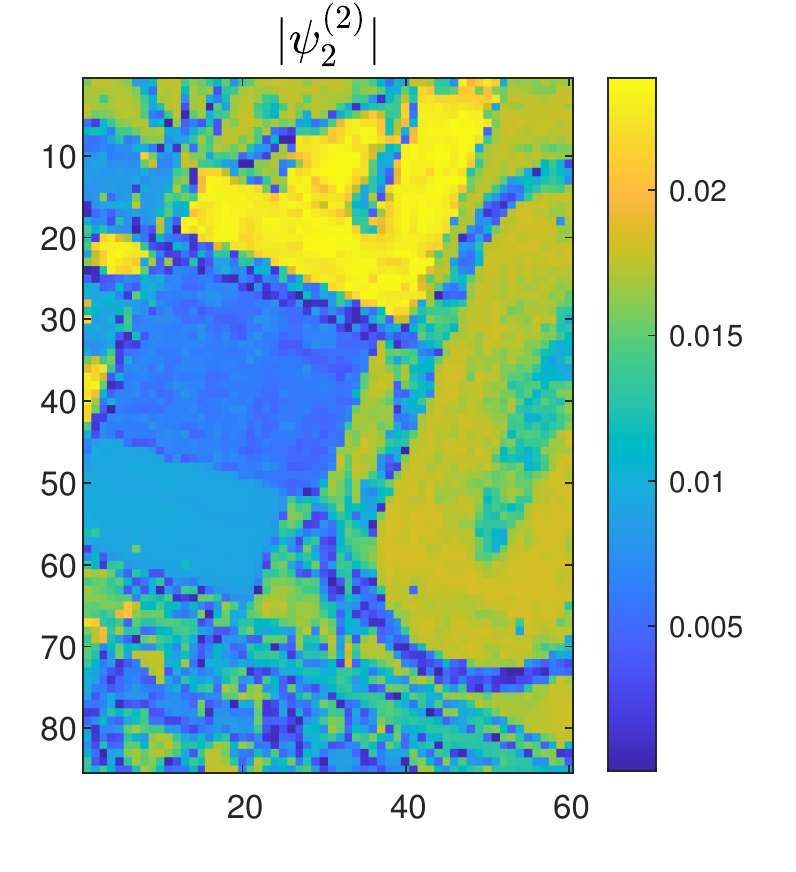}}
%\subfloat[]{\includegraphics[width=0.2\textwidth]{figures/HSI_LDR/example1/W1_psi_3.eps}}
\subfloat[]{\includegraphics[width=0.25\textwidth]{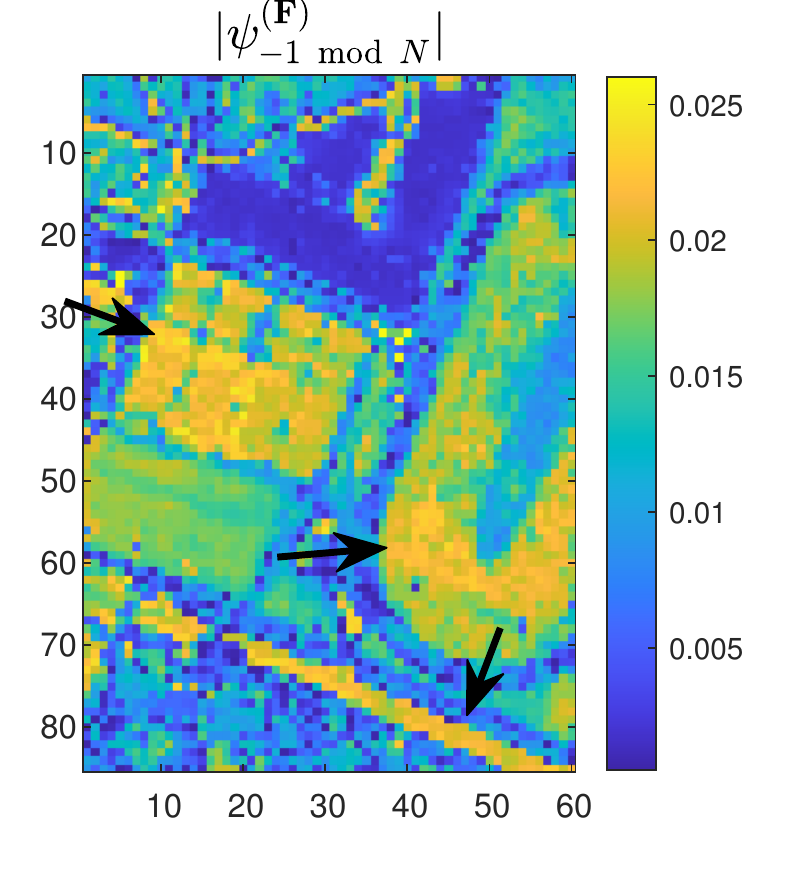}}
\subfloat[]{\includegraphics[width=0.25\textwidth]{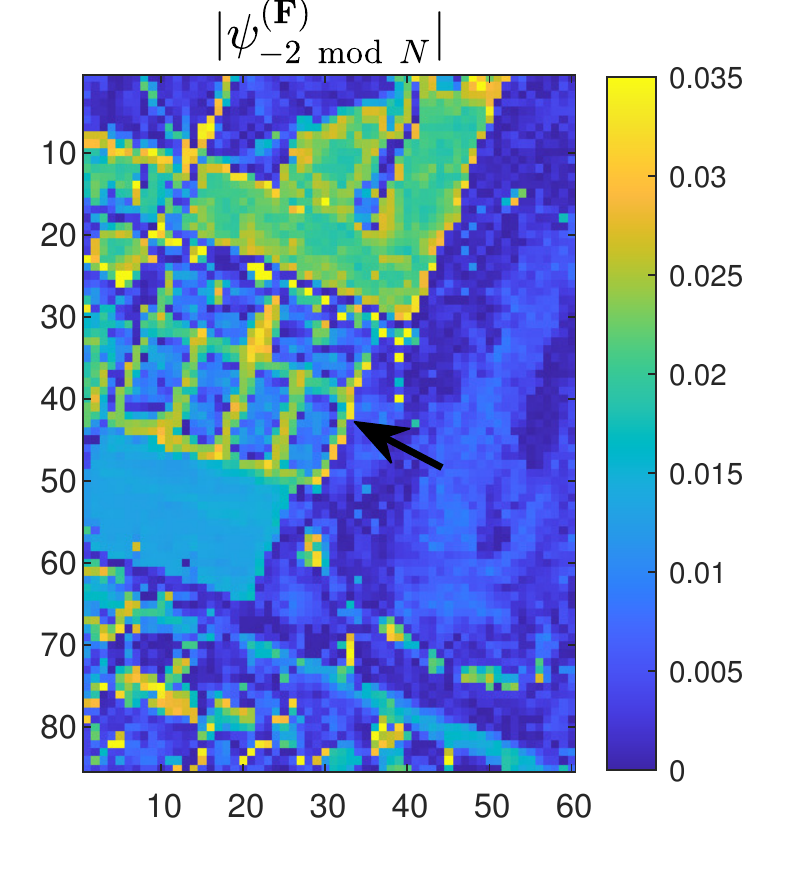}}
%\subfloat[]{\includegraphics[width=0.2\textwidth]{figures/HSI_LDR/example1/Aneg_psi_3_arrow.eps}}
\caption{The two chosen image patches of (a) the LiDAR image and (b) the HSI image after preprocessing, along with the leading eigenvectors of (c) $\mathbf{S}$, (d-e) $\mathbf{W}_1$ (LiDAR), (f-g) $\mathbf{F}$ corresponding to its $2$ largest positive eigenvalues,
(h-i) $\mathbf{W}_2$ (HSI), (j-k) $\mathbf{F}$ corresponding to its two smallest negative eigenvalues.
\label{fig:HSI_LDR_1_all}}
\end{figure}

%%%%%%%%%%%%%%%%%%%%%%%%% 2.2 example %%%%%%%%%%%%%%%%%%%%%%%%
We repeat the presentation for the second image patch, shown in Figure \ref{fig:HSI_LDR_2_allEV} (a) and (b).
Figure \ref{fig:HSI_LDR_2_allEV} (c-g) presents the absolute value of the leading eigenvector of $\mathbf{W}_1$ in plot (c), of $\mathbf{W}_2$ in plot (d), of $\mathbf{F}$ with a positive eigenvalue in plot (e), of $\mathbf{F}$ with a negative eigenvalue in plot (f) and of $\mathbf{S}$ in plot (g).
\begin{figure}[bhtp!]
\centering
\subfloat[]{\includegraphics[width=0.33\textwidth]{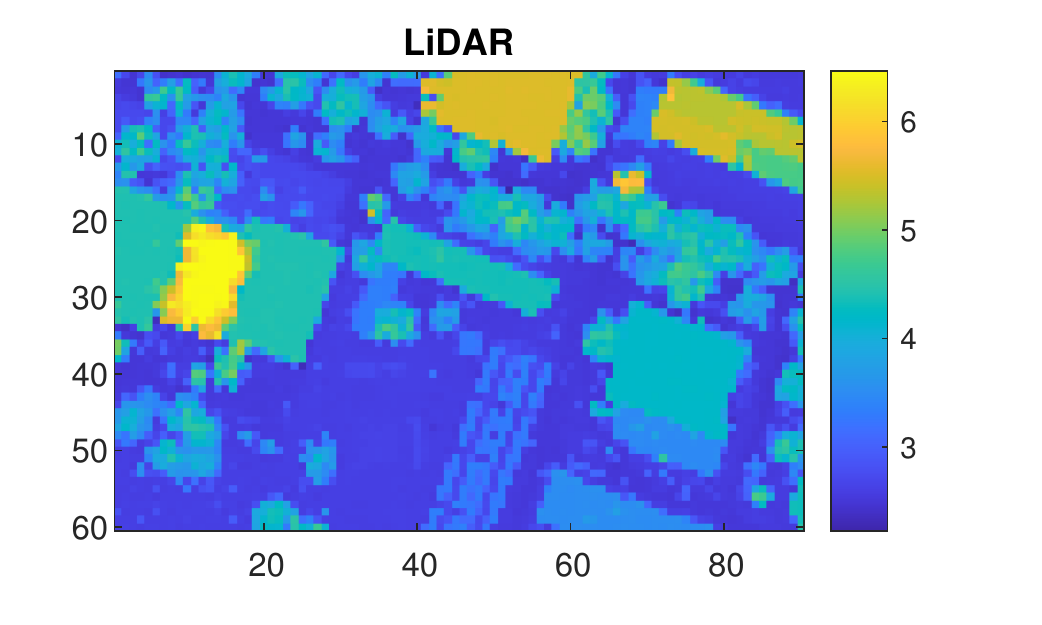}}
\subfloat[]{\includegraphics[width=0.33\textwidth]{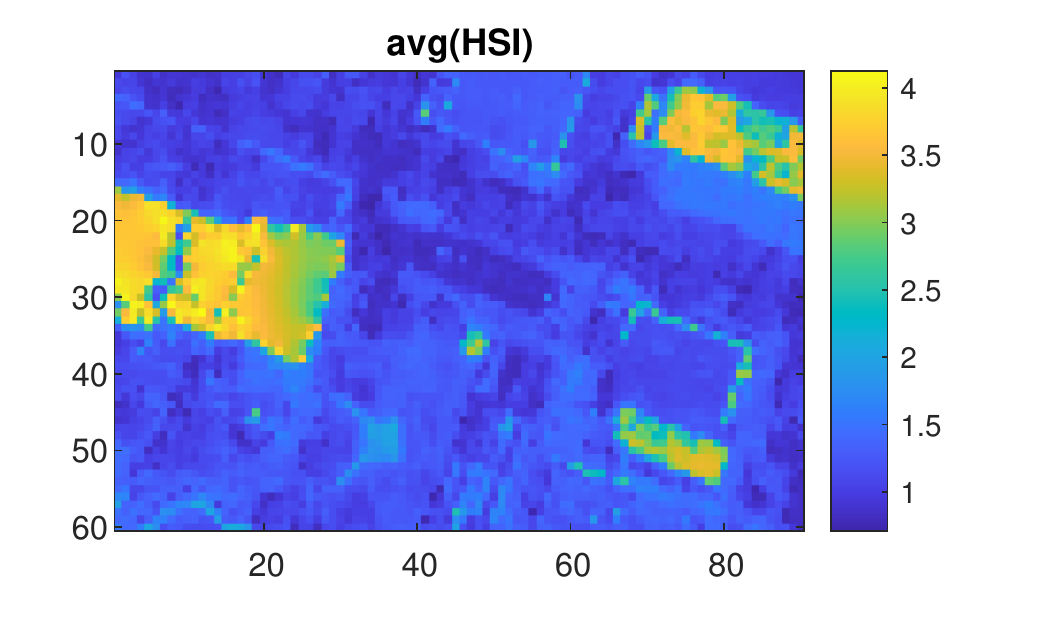}}

\subfloat[]{\includegraphics[width=0.33\textwidth]{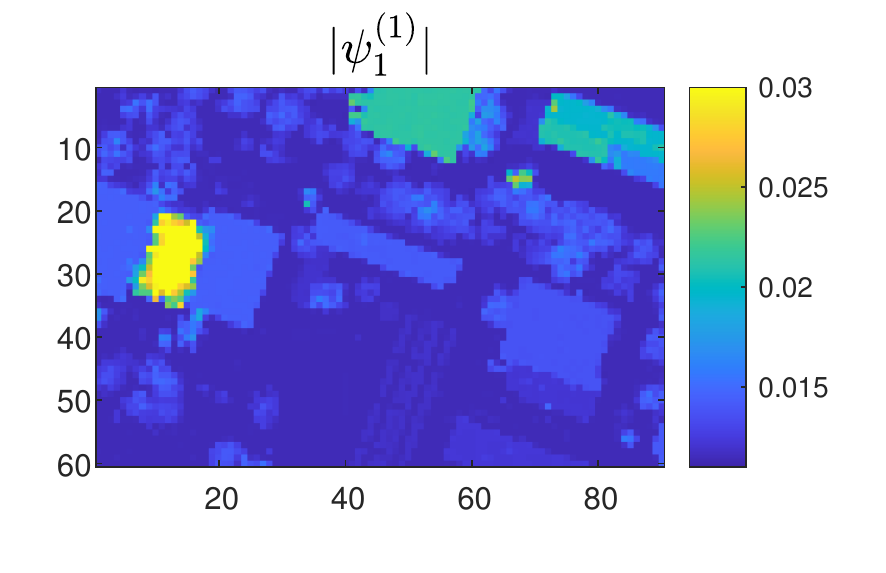}}
\subfloat[]{\includegraphics[width=0.33\textwidth]{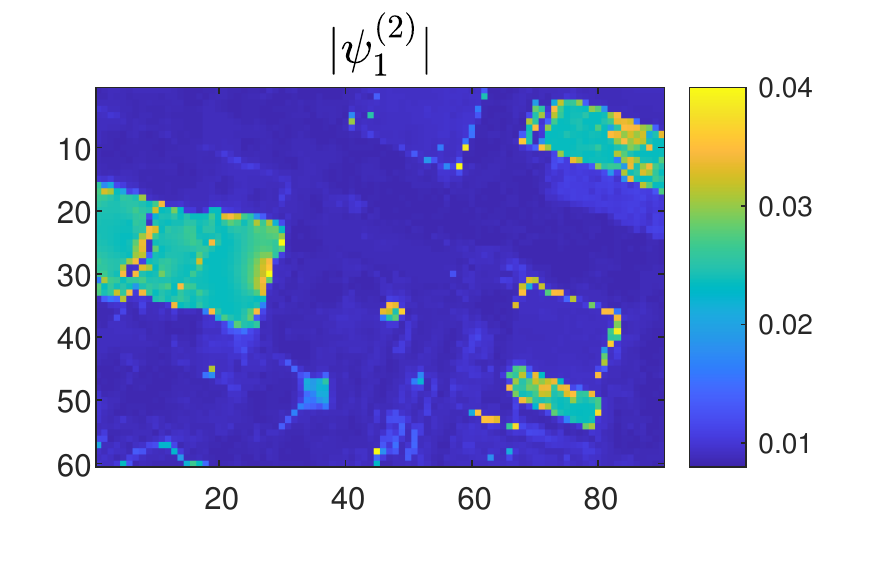}}

\subfloat[]{\includegraphics[width=0.33\textwidth]{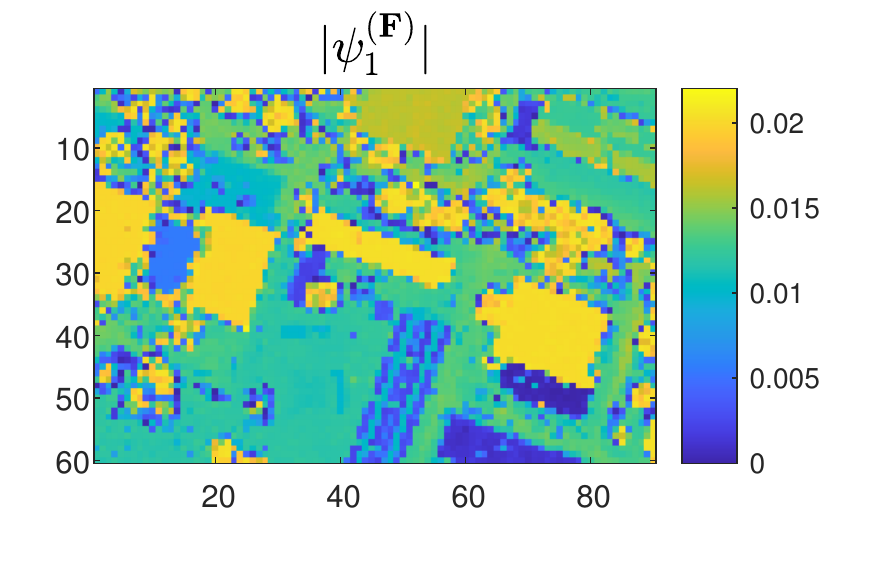}}
\subfloat[]{\includegraphics[width=0.33\textwidth]{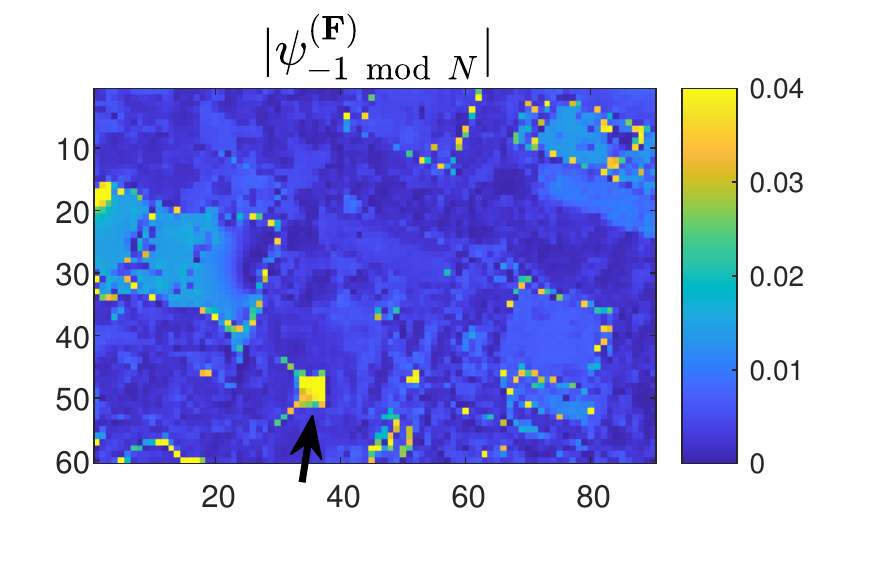}}
\subfloat[]{\includegraphics[width=0.33\textwidth]{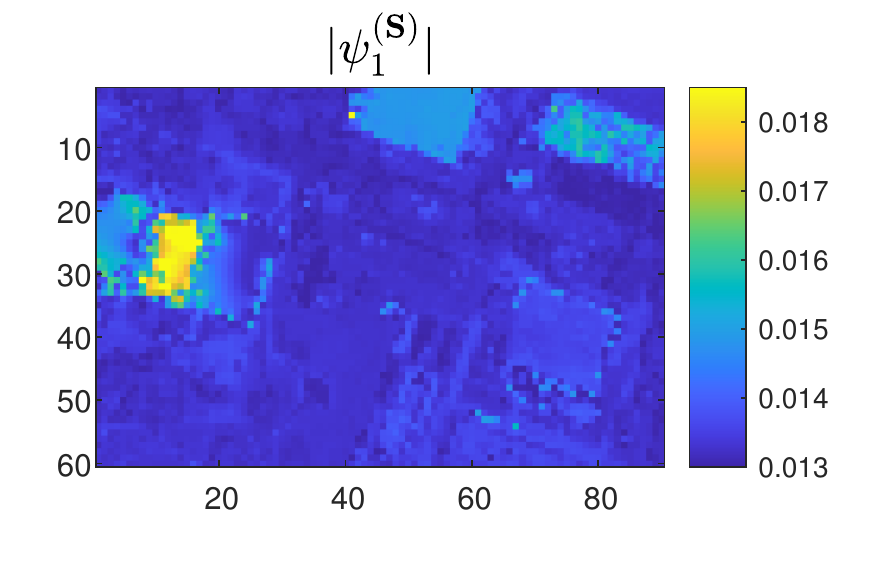}}
\caption{The two chosen image patches of (a) the LiDAR image and (b) the HSI image after preprocessing along with the leading eigenvectors of (c) $\mathbf{W}_1$ (LiDAR data), (d)  $\mathbf{W}_2$ (HSI data), (e) $\mathbf{F}$ corresponding to the largest positive eigenvalue, (f) $\mathbf{F}$ corresponding to the smallest negative eigenvalue and (g) $\mathbf{S}$.
\label{fig:HSI_LDR_2_allEV}}
\end{figure}

Note the dominant structures (with high absolute values) in the leading eigenvectors of the two modalities in Figure \ref{fig:HSI_LDR_2_allEV} (c) and (d). 
The dominant structures of the eigenvector representing the LiDAR image in plot (c) include buildings and trees, whereas in plot (d), which relates to the HSI image, only some of the building appear with high intensities and the trees are not clearly visible. 
This corresponds to the data presented in Figure \ref{fig:HSI_LDR_2_allEV} (a) and (b).
The leading eigenvector of $\mathbf{F}$ with a positive eigenvalue, presented in Figure \ref{fig:HSI_LDR_2_allEV} (e), captures buildings and trees that are expressed more dominantly in the LiDAR image compared with the HSI image. 
In addition, the structures that are dominant in both modalities appear to be less dominant in this plot. 
For example, the ``attenuated'' structures in plot (e) include the small rectangular roof part around pixels $(x,y)=(10,25)$ (where $x$ denotes the horizontal axis and $y$ denotes the vertical axis) and the building around pixels $(x,y)=(80,10)$.
Conversely, the leading eigenvector of $\mathbf{F}$ with a negative eigenvalue, presented in Figure \ref{fig:HSI_LDR_2_allEV} (f), significantly enhances a specific location, marked by a black arrow in this plot, that is clearly visible in the HSI image presented in Figure \ref{fig:HSI_LDR_2_allEV} (b) but barely visible in Figure \ref{fig:HSI_LDR_2_allEV} plot (a).
Note that this building is not represented by high pixel values in the raw HSI average image and therefore a simple subtraction between the two images will not lead to a similar emphasis of the building.

The leading eigenvector of $\mathbf{S}$, presented in Figure \ref{fig:HSI_LDR_2_allEV} (g), captures some combination of the structures that are dominant in both modalities.

To summarize this example, we showed that the operator $\mathbf{F}$ captures common components that are expressed strongly only by one of the modalities and that the sign of the eigenvalues of $\mathbf{F}$ indicates in which modality the component is stronger. 
In addition, we showed that the operator $\mathbf{S}$ captures some combination of the dominant components in both modalities.

%%%%%%%%%%%%%%%%%%%%%%%%%%%%%%%%%%%%%%%%%%%%%%%%%%%%%%%%%%%%%%
%%%%%%%%%%%%%%%%%%%%%%%%%%%%%%%%%%%%%%%%%%%%%%%%%%%%%%%%%%%%%%

%\section{Theoretical Justification\label{sec:analysis}}
\section{Spectral Analysis\label{sec:analysis}}

To provide theoretical justification to the proposed RMRA framework for spatiotemporal analysis presented in Section \ref{sec:rmra}, we analyze the operators $\mathbf{S}$ and $\mathbf{F}$ defined in \eqref{eq:Sr} and \eqref{eq:Ar} and show that they admit the desired properties. 
Specifically, we show that the operator $\mathbf{S}$ enhances common eigenvectors that are expressed similarly in two consecutive time frames in the sense that they have similar eigenvalues. In addition, we show that the operator $\mathbf{F}$ enhances common eigenvectors that are expressed differently in two consecutive time frames in the sense that they have different eigenvalues.
%The properties of these Riemannian composite operators are of independent interest.}

%%%%%%%%%%%%%%%%%%%%%%%%%%%%%%%%%%%%%%%%%%%%%%%%%%%%%%%%%%%%%%
%%%%%%%%%%%%%%%%%%%%%%%%%%%%%%%%%%%%%%%%%%%%%%%%%%%%%%%%%%%%%%

% \section{{Bi-modal Spectral Component Analysis}\label{sec:NewOps}}
% In this section we present the proposed framework for the analysis of multimodal data captured by two sources. 
% {We show that this framework enhances the similar components and expresses their dominance in each source.} 

%\subsection{Spectral Component Analysis for the Riemannian composite operators\label{sub:theo_two_modal}}

%\subsubsection{Definition of Common Components and the Continuous Space Model\label{sub:setup_two}}

% The operators $\mathbf{S}$ and $\mathbf{A}$ were defined in \eqref{eq:Sr} and \eqref{eq:Ar} based only on geometric considerations. 
% In this subsection, we show that these definitions lead to the desired operator properties, where $\mathbf{S}$ enhances similarly expressed common components and $\mathbf{A}$ enhances common components that are expressed significantly different in the two datasets.
In the following theoretical analysis we focus on two cases: (i) $\mathbf{W}_1$ and $\mathbf{W}_2$ have \emph{strictly} common components, i.e., some of the eigenvectors of the two matrices are identical, and (ii) $\mathbf{W}_1$ and $\mathbf{W}_2$ have \emph{weakly} common components, i.e. some of the eigenvectors of the two matrices differ by a small perturbation.

\subsection{Strictly Common Components\label{sub:ident_eig}}

Given that some of the eigenvectors of matrices $\mathbf{W}_1$ and $\mathbf{W}_2$ are identical, we show in the following that for these identical eigenvectors, the operator $\mathbf{S}$ enhances the eigenvectors that have similar dominant eigenvalues and the operator $\mathbf{F}$ enhances the eigenvectors that have significantly different eigenvalues.
%In this subsection we present a mathematical analysis for the two operators defined in \eqref{eq:Sr} and \eqref{eq:Ar}. We show that the new representations, which are constructed from the leading eigenvectors of the two operators, relate to eigenvectors of $\mathbf{W}_1$ and $\mathbf{W}_2$ that are either similarly expressed in both kernels (with similar eigenvalue intensities), for $\Psi^{(\mathbf{S})}$, or eigenvectors that are expressed significantly different in the two kernels (with highly different eigenvalues), for $\Psi^{(\mathbf{A})}$.
%In \cite{ok19} it was shown that the operator $\mathbf{S}$ recovers the common components in the two datasets. Moreover, an analysis of its eigenvectors revealed that common components that are similarly expressed in both datasets correspond to the largest eigenvalues of $\mathbf{S}$. We reiterate the theorem and its proof from \cite{ok19} for completeness:

We begin by reiterating a theorem from \cite{ok19} along with its proof, which shows that the eigenvectors that are similarly expressed in both matrices, $\mathbf{W}_1$ and $\mathbf{W}_2$, correspond to the largest eigenvalues of $\mathbf{S}$.
\begin{theorem}\label{theo:S_eigs}
Consider a vector $\psi$, which is an eigenvector of both $\mathbf{W}_1$ and $\mathbf{W}_2$ with possibly different eigenvalues: $\mathbf{W}_1\psi=\lambda^{(1)}\psi$ and $\mathbf{W}_2\psi=\lambda^{(2)}\psi$.
Then, $\psi$ is also an eigenvector of $\mathbf{S}$ with the corresponding eigenvalue:
\begin{equation}
\lambda^{(\mathbf{S})}=\sqrt{\lambda^{(1)}\lambda^{(2)}}    
\end{equation}
\end{theorem}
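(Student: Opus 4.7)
My plan is to prove the claim by directly evaluating the action of $\mathbf{S}$ on $\psi$, using the fact that $\psi$ is a common eigenvector of $\mathbf{W}_1$ and $\mathbf{W}_2$ and hence also of any matrix power or functional-calculus expression built from $\mathbf{W}_1$ alone or $\mathbf{W}_2$ alone. Concretely, since $\mathbf{W}_1$ is SPD with spectral decomposition $\mathbf{W}_1 = \sum_n \lambda_n^{(1)} \phi_n \phi_n^T$ (and $\psi$ is one of the $\phi_n$), we have $\mathbf{W}_1^{\alpha}\psi = (\lambda^{(1)})^{\alpha}\psi$ for every real $\alpha$; the analogous statement holds for $\mathbf{W}_2$.

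With this in hand I would evaluate $\mathbf{S}\psi$ by peeling the definition \eqref{eq:Sr} from the right. First, $\mathbf{W}_1^{-1/2}\psi = (\lambda^{(1)})^{-1/2}\psi$; next, $\mathbf{W}_2 \mathbf{W}_1^{-1/2}\psi = (\lambda^{(1)})^{-1/2}\lambda^{(2)}\psi$; one more application yields
\begin{equation}
\mathbf{W}_1^{-1/2}\mathbf{W}_2\mathbf{W}_1^{-1/2}\psi \;=\; \frac{\lambda^{(2)}}{\lambda^{(1)}}\,\psi.
\end{equation}

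The main thing to justify (and what I expect to be the only nontrivial step, though it is still elementary) is that $\psi$ remains an eigenvector after taking the operator square root of $\mathbf{W}_1^{-1/2}\mathbf{W}_2\mathbf{W}_1^{-1/2}$, with eigenvalue $\sqrt{\lambda^{(2)}/\lambda^{(1)}}$. This follows because $\mathbf{W}_1^{-1/2}\mathbf{W}_2\mathbf{W}_1^{-1/2}$ is SPD (a congruence of SPD matrices), and an SPD matrix and its unique SPD square root share the same eigenvectors, with eigenvalues related by positive square root; thus from the previous display we conclude $(\mathbf{W}_1^{-1/2}\mathbf{W}_2\mathbf{W}_1^{-1/2})^{1/2}\psi = \sqrt{\lambda^{(2)}/\lambda^{(1)}}\,\psi$.

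Finally I would apply $\mathbf{W}_1^{1/2}$ on the left, using $\mathbf{W}_1^{1/2}\psi = \sqrt{\lambda^{(1)}}\,\psi$, to obtain
\begin{equation}
\mathbf{S}\psi \;=\; \mathbf{W}_1^{1/2}\Bigl(\mathbf{W}_1^{-1/2}\mathbf{W}_2\mathbf{W}_1^{-1/2}\Bigr)^{1/2}\mathbf{W}_1^{1/2}\psi \;=\; \sqrt{\lambda^{(1)}\lambda^{(2)}}\;\psi,
\end{equation}
which is the desired identification of $\psi$ as an eigenvector of $\mathbf{S}$ with eigenvalue $\sqrt{\lambda^{(1)}\lambda^{(2)}}$. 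No obstacle of substance arises; the argument is essentially the observation that common eigenvectors pass through the entire functional-calculus definition of the Riemannian geometric mean, so the eigenvalue transforms by the scalar geometric mean, which is exactly the content of the theorem.
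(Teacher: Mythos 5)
Your argument is correct and follows essentially the same route as the paper: peel the definition of $\mathbf{S}$ from the right, observe that $\psi$ is an eigenvector of $\mathbf{W}_1^{-1/2}\mathbf{W}_2\mathbf{W}_1^{-1/2}$ with eigenvalue $\lambda^{(2)}/\lambda^{(1)}$, pass to the SPD square root, and multiply by $\mathbf{W}_1^{1/2}$. You simply make explicit the (correct) functional-calculus justification that the paper leaves implicit.
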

\begin{proof}
From \eqref{eq:Sr} we have:
\begin{align*}
\mathbf{S}\psi=&\,\mathbf{W}_1^{1/2}\left(\mathbf{W}_1^{-1/2}\mathbf{W}_2\mathbf{W}_1^{-1/2}\right)^{1/2}\mathbf{W}_1^{1/2}\psi`\\
=&\,\mathbf{W}_1^{1/2}\left(\mathbf{W}_1^{-1/2}\mathbf{W}_2\mathbf{W}_1^{-1/2}\right)^{1/2}\sqrt{\lambda^{(1)}}\psi\\
=&\,\mathbf{W}_1^{1/2}\sqrt{\lambda^{(2)}/\lambda^{(1)}}\sqrt{\lambda^{(1)}}\psi\\
=&\,\sqrt{\lambda^{(1)}\lambda^{(2)}}\psi
\end{align*}
where the transition before last is due to $\mathbf{W}_1^{-1/2}\mathbf{W}^{(2)}\mathbf{W}_1^{-1/2}\psi=(\lambda^{(2)}/\lambda^{(1)})\psi$.
\end{proof}
This result implies that strictly common components that are dominant and similarly expressed in both datasets (with similar large eigenvalues) are dominant in $\mathbf{S}$ (have a large eigenvalue $\lambda^{(\mathbf{S})}$), i.e. if $\lambda^{(1)}\approx\lambda^{(2)}$ then $\lambda^{(\mathbf{S})}\approx\lambda^{(1)},\lambda^{(2)}$.

We derive a similar theoretical analysis for the operator $\mathbf{F}$.
\begin{theorem}\label{theo:A_eigs}
Consider a vector $\psi$, which is an eigenvector of both $\mathbf{W}_1$ and $\mathbf{W}_2$ with possibly different eigenvalues: $\mathbf{W}_1\psi=\lambda^{(1)}\psi$ and $\mathbf{W}_2\psi=\lambda^{(2)}\psi$.
Then $\psi$ is also an eigenvector of $\mathbf{F}$ with the corresponding eigenvalue:
\begin{equation}
    \lambda^{(\mathbf{F})}=\frac{1}{2}\sqrt{\lambda^{(1)}\lambda^{(2)}}(\log(\lambda^{(1)})-\log(\lambda^{(2)}))
\end{equation}
\end{theorem}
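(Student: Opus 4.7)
The plan is to reduce the verification to a straightforward spectral computation by exploiting two facts: (i) Theorem 1 already tells us that $\psi$ is an eigenvector of $\mathbf{S}$ with eigenvalue $\lambda^{(\mathbf{S})}=\sqrt{\lambda^{(1)}\lambda^{(2)}}$, and (ii) every matrix function of a symmetric matrix acts on that matrix's eigenvectors by applying the scalar function to the corresponding eigenvalue. Since $\mathbf{F}=\mathbf{S}^{1/2}\log(\mathbf{S}^{-1/2}\mathbf{W}_1\mathbf{S}^{-1/2})\mathbf{S}^{1/2}$, it will suffice to push $\psi$ through each factor and keep track of the scalar that accumulates.

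First I would invoke Theorem 1 to conclude $\mathbf{S}\psi=\sqrt{\lambda^{(1)}\lambda^{(2)}}\,\psi$, which immediately gives $\mathbf{S}^{\pm 1/2}\psi=(\lambda^{(1)}\lambda^{(2)})^{\pm 1/4}\psi$ via the spectral calculus applied to the SPD matrix $\mathbf{S}$. Next I would compute
\begin{equation}
\mathbf{S}^{-1/2}\mathbf{W}_1\mathbf{S}^{-1/2}\psi
=\mathbf{S}^{-1/2}\mathbf{W}_1\,(\lambda^{(1)}\lambda^{(2)})^{-1/4}\psi
=(\lambda^{(1)}\lambda^{(2)})^{-1/2}\lambda^{(1)}\,\psi
=\sqrt{\lambda^{(1)}/\lambda^{(2)}}\,\psi.
\end{equation}
So $\psi$ is an eigenvector of the symmetric matrix $\mathbf{S}^{-1/2}\mathbf{W}_1\mathbf{S}^{-1/2}$ with eigenvalue $\sqrt{\lambda^{(1)}/\lambda^{(2)}}$. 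Applying the functional calculus (the matrix logarithm of a symmetric positive-definite matrix shares its eigenvectors and maps each eigenvalue to its scalar logarithm) yields
\begin{equation}
\log\!\bigl(\mathbf{S}^{-1/2}\mathbf{W}_1\mathbf{S}^{-1/2}\bigr)\psi
=\log\!\bigl(\sqrt{\lambda^{(1)}/\lambda^{(2)}}\bigr)\psi
=\tfrac{1}{2}\bigl(\log\lambda^{(1)}-\log\lambda^{(2)}\bigr)\psi.
\end{equation}

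Finally, multiplying on the left by $\mathbf{S}^{1/2}$ and using $\mathbf{S}^{1/2}\psi=(\lambda^{(1)}\lambda^{(2)})^{1/4}\psi$ once more gives
\begin{equation}
\mathbf{F}\psi
=(\lambda^{(1)}\lambda^{(2)})^{1/4}\cdot \tfrac{1}{2}\bigl(\log\lambda^{(1)}-\log\lambda^{(2)}\bigr)\cdot\mathbf{S}^{1/2}\psi
=\tfrac{1}{2}\sqrt{\lambda^{(1)}\lambda^{(2)}}\bigl(\log\lambda^{(1)}-\log\lambda^{(2)}\bigr)\psi,
\end{equation}
which is the claimed eigenvalue identity. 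There is no real obstacle here, since the argument is a chain of scalar substitutions; the only point deserving a one-line justification is that the functional calculus preserves common eigenvectors, which holds because $\mathbf{S}$ is SPD and $\mathbf{S}^{-1/2}\mathbf{W}_1\mathbf{S}^{-1/2}$ is symmetric (both are diagonalizable in an orthonormal basis containing $\psi$).
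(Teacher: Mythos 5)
Your proposal is correct and follows essentially the same route as the paper's own proof: both push $\psi$ through the three factors of $\mathbf{F}=\mathbf{S}^{1/2}\log(\mathbf{S}^{-1/2}\mathbf{W}_1\mathbf{S}^{-1/2})\mathbf{S}^{1/2}$, using Theorem \ref{theo:S_eigs} to get $\mathbf{S}^{\pm 1/2}\psi=(\lambda^{(1)}\lambda^{(2)})^{\pm 1/4}\psi$ and the spectral calculus to replace $\log$ of the inner matrix by the scalar $\tfrac12(\log\lambda^{(1)}-\log\lambda^{(2)})$. The only difference is that you spell out the intermediate eigenvalue identity $\mathbf{S}^{-1/2}\mathbf{W}_1\mathbf{S}^{-1/2}\psi=\sqrt{\lambda^{(1)}/\lambda^{(2)}}\,\psi$ as a displayed computation, whereas the paper states it as a remark after the chain of equalities.
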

\begin{proof}
From \eqref{eq:Ar} we have:
\begin{align*}
\mathbf{F}\psi=&\,\mathbf{S}^{1/2}\log\left(\mathbf{S}^{-1/2}\mathbf{W}_1\mathbf{S}^{-1/2}\right)\mathbf{S}^{1/2}\psi\\
=&\,\mathbf{S}^{1/2}\log\left(\mathbf{S}^{-1/2}\mathbf{W}_1\mathbf{S}^{-1/2}\right)(\lambda^{(1)}\lambda^{(2)})^{0.25}\psi\\
=&\,\mathbf{S}^{1/2}(0.5\log(\lambda^{(1)})-0.5\log(\lambda^{(2)}))(\lambda^{(1)}\lambda^{(2)})^{0.25}\psi\\
=&\,\frac{1}{2}\sqrt{\lambda^{(1)}\lambda^{(2)}}(\log(\lambda^{(1)})-\log(\lambda^{(2)}))\psi
\end{align*}
where the transition before last is due to $\mathbf{S}^{-1/2}\mathbf{W}_1\mathbf{S}^{-1/2}\psi=\sqrt{\lambda^{(1)}/\lambda^{(2)}}\psi$ and the application of $\log$ to the matrix multiplication, which is equivalent to applying $\log$ to its eigenvalues, leading to $\log(\sqrt{\lambda^{(1)}/\lambda^{(2)}})=0.5\log(\lambda^{(1)})-0.5\log(\lambda^{(2)})$. 
\end{proof}
This result indicates that the strictly common components of the two datasets are also expressed by $\mathbf{F}$ and that their order is determined by their relative expression in each dataset.
For example, if $\psi$ is an eigenvector of both $\mathbf{W}_1$ and $\mathbf{W}_2$ and corresponds to equal eigenvalues, $\lambda^{(1)}=\lambda^{(2)}$, then this component is part of the null space of $\mathbf{F}$;
if $\lambda^{(1)}\neq\lambda^{(2)}$, then $\psi$ corresponds to a nonzero eigenvalue in $\mathbf{F}$. 
Note that $\lambda^{(\mathbf{F})}$ depends also on the multiplication by $\sqrt{\lambda^{(1)}\lambda^{(2)}}$.

Another notable result of Theorem \ref{theo:A_eigs} is that the sign of the eigenvalues of $\mathbf{F}$ indicates in which dataset their corresponding eigenvector is more dominant. 
For example, if $\psi$ is an eigenvector of both $\mathbf{W}_1$ and $\mathbf{W}_2$ that has a large corresponding eigenvalue in $\mathbf{W}_1$ but a small eigenvalue in $\mathbf{W}_2$ ($\lambda^{(1)}\gg\lambda^{(2)}$), then the corresponding eigenvalue in $\mathbf{F}$ is large and positive. %\footnote{Assuming $0<\lambda^{(1)},\lambda^{(2)}<1$, which is the case when using the kernel proposed in \eqref{eq:SPDKern}}. 
Conversely, if $\psi$ is more dominant in $\mathbf{W}_2$, then its corresponding eigenvalue in $\mathbf{F}$ is large (in absolute value) and negative.

An example depicting these properties is presented in Subsection \ref{subsub:ills_exmp}, which demonstrates that the eigenvalues of the operators $\mathbf{S}$ and $\mathbf{F}$ are indeed equal to the expected values based on Theorem \ref{theo:S_eigs} and Theorem \ref{theo:A_eigs}.

%%%%%%%%%%%%%%%%%%%%%%%%%%%%%%%%%%%%%%%%%%%%%%%%%%%%%%%%%%%%%%

\subsection{Weakly Common Components}

{
To further demonstrate the power of the operators $\mathbf{S}$ and $\mathbf{F}$, we provide stability analysis by investigating how a small variation of the common eigenvector affects the results. For this purpose, we make use of the concept of a pseudo-spectrum \cite{trefethen2005spectra}. While  pseudo-spectra is typically used to provide an analytic framework for investigating non-normal matrices and operators, here we apply it to symmetric matrices for the purpose of analysis of nearly but non-common eigenvectors.
We begin by recalling three equivalent definitions of the $\epsilon$-pseudo-spectrum as presented in \cite{trefethen2005spectra}.

\begin{definition}[Pseudo-spectrum]\label{def:pseudo}
Given a matrix $\mathbf{M}\in\mathbb{R}^{N\times N}$, the following definitions of the $\epsilon$-pseudo-spectrum are equivalent 
for a small $\epsilon>0$:
\begin{enumerate}
    \item $\sigma_\epsilon(\mathbf{M}) = \left\lbrace\lambda\in\mathbb{R}:\left\Vert(\lambda\mathrm{I}-\mathbf{M})^{-1}\right\Vert\geq\epsilon^{-1}\right\rbrace$
    \item $\sigma_\epsilon(\mathbf{M}) = \left\lbrace\lambda\in\mathbb{R}:\lambda\in\sigma(\mathbf{M}+\mathbf{E})\text{ for a }\mathbf{E}\text{ with }\left\Vert\mathbf{E}\right\Vert\leq\epsilon\right\rbrace$
    \item $\sigma_\epsilon(\mathbf{M}) = \left\lbrace\lambda\in\mathbb{R}:\exists v\in\mathbb{R}^N\text{ with }\left\Vert v\right\Vert_2=1\text{ s.t. }\left\Vert(\mathbf{M}-\lambda\mathrm{I})v\right\Vert_2\leq\epsilon\right\rbrace$
\end{enumerate}
where $\sigma(\mathbf{M})$ denotes the set of eigenvalues of $\mathbf{M}$, $\mathrm{I}$ denotes the identity matrix, $\left\Vert\cdot\right\Vert_2$ denotes the $\ell_2$ norm, and $\left\Vert\cdot\right\Vert$ denotes the induced operator norm. Moreover, we term a vector $v$ that adheres to definition 3 an $\epsilon$-pseudo-eigenvector.
\end{definition}

The following theorem is the counterpart of Theorem \ref{theo:S_eigs} for the case where the eigenvector is slightly perturbed.

\begin{theorem}\label{prop:pseudo_sa}

Suppose there exists an eigenpair $\lambda^{(k)}$ and $\psi^{(k)}$ of $\mathbf{W}_k$ for $k=1,2$ so that $\psi^{(1)}=\psi^{(2)}+\psi_{\epsilon_{\mathbf{S}}}$, where $\left\Vert \psi_{\epsilon_{\mathbf{S}}}\right\Vert_2\leq \frac{\sqrt{\lambda^{(2)}}}{\tilde{\lambda}^{(2)}_{max}\sqrt{\lambda^{(1)}}}\epsilon_{\mathbf{S}}$ for a small $\epsilon_{\mathbf{S}}>0$, where $\tilde{\lambda}^{(2)}_{max} = \Vert\mathbf{W}_2-\lambda^{(2)}\mathrm{I}\Vert$. Then, we have
\begin{equation}
\sqrt{\lambda^{(1)}\lambda^{(2)}}\in \sigma_{\epsilon_{\mathbf{S}}}(\mathbf{S})\,.\label{pseudospectrum of S}
\end{equation}
Specifically, we have 
\[
\left\Vert(\mathbf{S}-\sqrt{\lambda^{(1)}\lambda^{(2)}}\mathrm{I})\psi^{(1)}\right\Vert_2\leq\epsilon_{\mathbf{S}}\,.
\]
and $\psi^{(1)}$ is a corresponding $\epsilon_{\mathbf{S}}$-pseudo-eigenvector of $\mathbf{S}$.
\end{theorem}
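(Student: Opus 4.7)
The plan is to invoke characterization (3) of the $\epsilon$-pseudo-spectrum in Definition \ref{def:pseudo}: it suffices to exhibit a unit witness vector that makes $\|(\mathbf{S} - \sqrt{\lambda^{(1)}\lambda^{(2)}}\mathrm{I})v\|_2 \le \epsilon_{\mathbf{S}}$. The natural candidate is $\psi^{(1)}$ itself, since it is an \emph{exact} eigenvector of $\mathbf{W}_1$ and only slightly perturbed away from an eigenvector of $\mathbf{W}_2$. The whole proof should then read as a quantitative version of the proof of Theorem \ref{theo:S_eigs}, carrying the perturbation through each step.

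First, I would translate the closeness $\psi^{(1)} = \psi^{(2)} + \psi_{\epsilon_{\mathbf{S}}}$ into an approximate eigen-relation for $\mathbf{W}_2$. Applying $\mathbf{W}_2$ to both sides and using $\mathbf{W}_2\psi^{(2)} = \lambda^{(2)}\psi^{(2)}$ gives
\begin{equation*}
\mathbf{W}_2\psi^{(1)} = \lambda^{(2)}\psi^{(1)} + (\mathbf{W}_2 - \lambda^{(2)}\mathrm{I})\psi_{\epsilon_{\mathbf{S}}},
\end{equation*}
so the residual of $\psi^{(1)}$ as an eigenvector of $\mathbf{W}_2$ has norm at most $\tilde{\lambda}^{(2)}_{\max}\|\psi_{\epsilon_{\mathbf{S}}}\|_2$, and by the hypothesis on $\|\psi_{\epsilon_{\mathbf{S}}}\|_2$ this is at most $\sqrt{\lambda^{(2)}/\lambda^{(1)}}\,\epsilon_{\mathbf{S}}$.

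Next, I would follow the chain of manipulations in Theorem \ref{theo:S_eigs}, but replace each exact eigen-identity by the perturbed version. Writing $\mathbf{A} := \mathbf{W}_1^{-1/2}\mathbf{W}_2\mathbf{W}_1^{-1/2}$ and $\mu := \lambda^{(2)}/\lambda^{(1)}$, the identity $\mathbf{W}_1^{1/2}\psi^{(1)} = \sqrt{\lambda^{(1)}}\psi^{(1)}$ (still exact) lets me factor
\begin{equation*}
(\mathbf{S} - \sqrt{\lambda^{(1)}\lambda^{(2)}}\mathrm{I})\psi^{(1)} = \sqrt{\lambda^{(1)}}\,\mathbf{W}_1^{1/2}\bigl(\mathbf{A}^{1/2} - \sqrt{\mu}\,\mathrm{I}\bigr)\psi^{(1)},
\end{equation*}
which reduces the problem to controlling $(\mathbf{A}^{1/2} - \sqrt{\mu}\,\mathrm{I})\psi^{(1)}$. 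Substituting the approximate eigen-relation for $\mathbf{W}_2$ from the previous step into $\mathbf{A}\psi^{(1)}$ immediately yields the residual
\begin{equation*}
(\mathbf{A} - \mu\mathrm{I})\psi^{(1)} = \tfrac{1}{\sqrt{\lambda^{(1)}}}\mathbf{W}_1^{-1/2}(\mathbf{W}_2 - \lambda^{(2)}\mathrm{I})\psi_{\epsilon_{\mathbf{S}}},
\end{equation*}
so $\psi^{(1)}$ is a controllable approximate eigenvector of the SPD matrix $\mathbf{A}$ with eigenvalue $\mu$.

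The central technical step is to pass from this residual on $\mathbf{A}$ to a residual on $\mathbf{A}^{1/2}$. Since $\mathbf{A}^{1/2}$ and $\sqrt{\mu}\mathrm{I}$ commute, I would exploit the factorization $\mathbf{A} - \mu\mathrm{I} = (\mathbf{A}^{1/2} + \sqrt{\mu}\mathrm{I})(\mathbf{A}^{1/2} - \sqrt{\mu}\mathrm{I})$ and invert the SPD factor $\mathbf{A}^{1/2} + \sqrt{\mu}\mathrm{I}$, whose operator norm is at most $1/\sqrt{\mu}$ because $\mathbf{A}^{1/2}$ is positive. Combined with the outer factor $\sqrt{\lambda^{(1)}}\mathbf{W}_1^{1/2}$ and with $\tilde{\lambda}^{(2)}_{\max}$ absorbed by the hypothesis, the scalars $\sqrt{\lambda^{(1)}}$, $1/\sqrt{\mu}$, and $\sqrt{\lambda^{(2)}/\lambda^{(1)}}$ should conspire to deliver exactly $\epsilon_{\mathbf{S}}$. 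The hard part, I expect, is the accompanying $\mathbf{W}_1^{\pm 1/2}$ bookkeeping around $(\mathbf{A}^{1/2} + \sqrt{\mu}\mathrm{I})^{-1}$: this similarity is not unitary, so na\"ive operator-norm estimates would introduce an extraneous $\sqrt{\kappa(\mathbf{W}_1)}$ factor, and avoiding it appears to be precisely what the unusual $\sqrt{\lambda^{(2)}}/(\tilde{\lambda}^{(2)}_{\max}\sqrt{\lambda^{(1)}})$ weighting in the hypothesis is calibrated for. The non-commutativity of the matrix square root with $\mathbf{W}_1^{1/2}$ is the principal source of friction and must be navigated with care.
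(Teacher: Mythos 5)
Your route is essentially the paper's. You use the same witness $\psi^{(1)}$, the same approximate eigen-relation $\mathbf{W}_2\psi^{(1)}=\lambda^{(2)}\psi^{(1)}+(\mathbf{W}_2-\lambda^{(2)}\mathrm{I})\psi_{\epsilon_{\mathbf{S}}}$, the same reduction to a residual on the square root, and the same factorization $\mathbf{A}-\mu\mathrm{I}=(\mathbf{A}^{1/2}+\sqrt{\mu}\,\mathrm{I})(\mathbf{A}^{1/2}-\sqrt{\mu}\,\mathrm{I})$ followed by inversion of the positive factor. The only cosmetic difference is that the paper uses the equivalent form $\mathbf{S}=(\mathbf{W}_2\mathbf{W}_1^{-1})^{1/2}\mathbf{W}_1$ from Proposition~\ref{prop_app:sa_equiv_forms} and works directly with the non-symmetric $(\mathbf{W}_2\mathbf{W}_1^{-1})^{1/2}$, while you work with the symmetrized $\mathbf{A}=\mathbf{W}_1^{-1/2}\mathbf{W}_2\mathbf{W}_1^{-1/2}$.

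However, you stop short of completing the norm chain, and the speculation in your last paragraph is off target. The hypothesis bound $\Vert\psi_{\epsilon_{\mathbf{S}}}\Vert_2\leq \sqrt{\lambda^{(2)}}/(\tilde{\lambda}^{(2)}_{\max}\sqrt{\lambda^{(1)}})\,\epsilon_{\mathbf{S}}$ is not calibrated to absorb a $\sqrt{\kappa(\mathbf{W}_1)}$ factor; it is calibrated to cancel $\tilde{\lambda}^{(2)}_{\max}$ (from $\Vert\mathbf{W}_2-\lambda^{(2)}\mathrm{I}\Vert$) together with the scalars $\sqrt{\lambda^{(1)}}$ and $\sqrt{\lambda^{(2)}}$ that arise when the residual is pushed through $\mathbf{W}_1^{1/2}$ and the factor $(\cdot)^{-1}$ is bounded by $1/\sqrt{\mu}$, $\mu=\lambda^{(2)}/\lambda^{(1)}$. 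The bookkeeping you were worried about can be dissolved by regrouping: $\mathbf{W}_1^{1/2}(\mathbf{A}^{1/2}+\sqrt{\mu}\,\mathrm{I})^{-1}\mathbf{W}_1^{-1/2}=\big((\mathbf{W}_2\mathbf{W}_1^{-1})^{1/2}+\sqrt{\mu}\,\mathrm{I}\big)^{-1}$, since the square roots are related by the same similarity as the base matrices; so no loose $\mathbf{W}_1^{\pm 1/2}$ factors ever appear on the outside. That said, your instinct that something delicate is happening is not wrong: the paper closes the argument with $\Vert((\mathbf{W}_2\mathbf{W}_1^{-1})^{1/2}+\sqrt{\mu}\,\mathrm{I})^{-1}\Vert\leq \big(\sigma_{\min}((\mathbf{W}_2\mathbf{W}_1^{-1})^{1/2})+\sqrt{\mu}\big)^{-1}$ (with $\sigma_{\min}$ the minimum eigenvalue), which is clean for symmetric matrices but, as stated, is being applied to the non-symmetric $(\mathbf{W}_2\mathbf{W}_1^{-1})^{1/2}$. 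If you want a fully rigorous completion in your symmetrized frame, you would need to justify exactly this resolvent estimate (or its conjugated twin); simply citing $\Vert(\mathbf{A}^{1/2}+\sqrt{\mu}\,\mathrm{I})^{-1}\Vert\leq 1/\sqrt{\mu}$ and then applying $\mathbf{W}_1^{\pm 1/2}$ naively would indeed introduce the condition-number inflation you feared, so the regrouping above is the step you were missing.
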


Informally, this theorem implies that when $\psi^{(1)}$ is a slight perturbation of $\psi^{(2)}$, then $\psi^{(1)}$ is ``almost'' an eigenvector of $\mathbf{S}$ with a corresponding eigenvalue $\sqrt{\lambda^{(1)}\lambda^{(2)}}$.
Equivalently, $\psi^{(2)}$ can also be shown to be an $\epsilon_{\mathbf{S}}$-pseudo-eigenvector of $\mathbf{S}$ with the same corresponding eigenvalue, which suggests that the operator $\mathbf{S}$ is ``stable'' to finite perturbations.
We remark that since $\|\mathbf{W}_2\|=1$, we have that $\tilde{\lambda}^{(2)}_{max}=\max(1-\lambda^{(2)}, \lambda^{(2)})\in[0.5,1)$, guaranteeing that the perturbation of the eigenvector is small.
We also remark that our numerical study shows that the bound for the $\epsilon$-pseudo-eigenvalues and $\epsilon$-pseudo-eigenvectors of $\mathbf{S}$ is tight.

\begin{proof}
By Proposition \ref{prop_app:sa_equiv_forms} (see Appendix \ref{app:add_state}), we have
\begin{equation}
\mathbf{S}=\mathbf{W}_1^{1/2}\left(\mathbf{W}_1^{-1/2}\mathbf{W}_2\mathbf{W}_1^{-1/2}\right)^{1/2}\mathbf{W}_1^{1/2}=(\mathbf{W}_2\mathbf{W}_1^{-1})^{1/2}\mathbf{W}_1.\label{eq:equivexp}
\end{equation}
Since $\psi^{(1)}$ is an eigenvector of $\mathbf{W}_1$ with an eigenvalue $\lambda^{(1)}$, we have 
\begin{align}
    \mathbf{S}\psi^{(1)} =  \left(\mathbf{W}_2\mathbf{W}_1^{-1}\right)^{1/2}\mathbf{W}_1\psi^{(1)}
    = \lambda^{(1)}\left(\mathbf{W}_2\mathbf{W}_1^{-1}\right)^{1/2}\psi^{(1)}\label{eq:pseudo_s_psi}.
\end{align}
Therefore, it is sufficient to show that $\psi^{(1)}$ is an $\epsilon$-pseudo-eigenvector of $\left(\mathbf{W}_2\mathbf{W}_1^{-1}\right)^{1/2}$. 
By a direct expansion, we have
\begin{align}
\mathbf{W}_2\mathbf{W}_1^{-1}\psi^{(1)}  = & \, \frac{1}{\lambda^{(1)}}\mathbf{W}_2 \psi^{(1)}
 = \frac{1}{\lambda^{(1)}}\mathbf{W}_2 (\psi^{(2)}+\psi_{\epsilon_{\mathbf{S}}})\nonumber\\
 = & \, \frac{\lambda^{(2)}}{\lambda^{(1)}}\psi^{(2)}+\frac{1}{\lambda^{(1)}}\mathbf{W}_2 \psi_{\epsilon_{\mathbf{S}}} =  \frac{\lambda^{(2)}}{\lambda^{(1)}}\psi^{(1)}+\frac{1}{\lambda^{(1)}}(\mathbf{W}_2-\lambda^{(2)}\mathrm{I}) \psi_{\epsilon_{\mathbf{S}}}\,,\label{eq:pseudo_s_1}
\end{align}
where the last transition is obtained by replacing $\psi^{(2)}$ with $\psi^{(1)}-\psi_{\epsilon_{\mathbf{S}}}$.
By reorganizing the elements in \eqref{eq:pseudo_s_1}, we have
\begin{align}
&\left(\mathbf{W}_2\mathbf{W}_1^{-1}-\frac{\lambda^{(2)}}{\lambda^{(1)}}\mathrm{I}\right)\psi^{(1)} = \frac{1}{\lambda^{(1)}}(\mathbf{W}_2-\lambda^{(2)}\mathrm{I}) \psi_{\epsilon_{\mathbf{S}}}\,,
\end{align}
and applying the $\ell_2$ norm leads to:
\begin{align}
&\left\Vert\left(\mathbf{W}_2\mathbf{W}_1^{-1}-\frac{\lambda^{(2)}}{\lambda^{(1)}}\mathrm{I}\right)\psi^{(1)}\right\Vert_2  =  \left\Vert\frac{1}{\lambda^{(1)}}(\mathbf{W}_2-\lambda^{(2)}\mathrm{I}) \psi_{\epsilon_{\mathbf{S}}}\right\Vert_2
 \leq  \frac{1}{\lambda^{(1)}}\left\Vert \mathbf{W}_2-\lambda^{(2)}\mathrm{I}\right\Vert\left\Vert \psi_{\epsilon_{\mathbf{S}}}\right\Vert_2\label{eq:pseudo_s_2}\\
&\qquad\qquad \leq  \,\frac{1}{\lambda^{(1)}}\left\Vert\mathbf{W}_2-\lambda^{(2)}\mathrm{I}\right\Vert\frac{\sqrt{\lambda^{(2)}}}{\tilde{\lambda}^{(2)}_{max}\sqrt{\lambda^{(1)}}}\epsilon_{\mathbf{S}} =  \frac{\tilde{\lambda}^{(2)}_{max}}{\lambda^{(1)}}\frac{\sqrt{\lambda^{(2)}}}{\tilde{\lambda}^{(2)}_{max}\sqrt{\lambda^{(1)}}}\epsilon_{\mathbf{S}}=\frac{\sqrt{\lambda^{(2)}}}{\lambda^{(1)}\sqrt{\lambda^{(1)}}}\epsilon_{\mathbf{S}}\,.\nonumber
\end{align}
This derivation shows that $\psi^{(1)}$ is a pseudo-eigenvector of $\mathbf{W}_2\mathbf{W}_1^{-1}$ with a pseudo-eigenvalue $\frac{\lambda^{(2)}}{\lambda^{(1)}}$, i.e. $\left\Vert\left(\mathbf{W}_2\mathbf{W}_1^{-1}-\frac{\lambda^{(2)}}{\lambda^{(1)}}\mathrm{I}\right)\psi^{(1)}\right\Vert_2\leq\frac{\sqrt{\lambda^{(2)}}}{\lambda^{(1)}\sqrt{\lambda^{(1)}}}\epsilon_{\mathbf{S}}$.
{Thus, by definition (see Proposition \ref{prop_app:pseudo_explicit} in Appendix \ref{app:add_state})}, there exists a matrix $\mathbf{E}$ such that $(\mathbf{W}_2\mathbf{W}_1^{-1}+\mathbf{E})\psi^{(1)}=\frac{\lambda^{(2)}}{\lambda^{(1)}}\psi^{(1)}$ and $\left\Vert\mathbf{E}\right\Vert\leq\frac{\sqrt{\lambda^{(2)}}}{\lambda^{(1)}\sqrt{\lambda^{(1)}}}\epsilon_{\mathbf{S}}$. 
Therefore, we can write the following:
\begin{align}
\mathbf{E}\psi^{(1)}  = &\, -\left(\mathbf{W}_2\mathbf{W}_1^{-1}-\frac{\lambda^{(2)}}{\lambda^{(1)}}\mathrm{I}\right)\psi^{(1)}\label{eq:pseudo_s_3}\\
 = &\, -\left((\mathbf{W}_2\mathbf{W}_1^{-1})^{1/2}+\sqrt{\frac{\lambda^{(2)}}{\lambda^{(1)}}}\mathrm{I}\right)\left((\mathbf{W}_2\mathbf{W}_1^{-1})^{1/2}-\sqrt{\frac{\lambda^{(2)}}{\lambda^{(1)}}}\mathrm{I}\right)\psi^{(1)}.\nonumber
\end{align}
Since $(\mathbf{W}_2\mathbf{W}_1^{-1})^{1/2}+\sqrt{\lambda^{(2)}/\lambda^{(1)}}\mathrm{I}$ is positive definite, \eqref{eq:pseudo_s_3} leads to
\begin{equation}
\left((\mathbf{W}_2\mathbf{W}_1^{-1})^{1/2}-\sqrt{\frac{\lambda^{(2)}}{\lambda^{(1)}}}\mathrm{I}\right)\psi^{(1)}= -    \left((\mathbf{W}_2\mathbf{W}_1^{-1})^{1/2}+\sqrt{\frac{\lambda^{(2)}}{\lambda^{(1)}}}\mathrm{I}\right)^{-1}\mathbf{E}\psi^{(1)}. 
\label{eq:pseudo_s_sqrt_eq}
\end{equation}
With the above preparation, we have 
\begin{align}
\left(\mathbf{S}-\sqrt{\lambda^{(1)}\lambda^{(2)}}\mathrm{I}\right)\psi^{(1)} \underset{\eqref{eq:pseudo_s_psi}}{=} & \lambda^{(1)}(\mathbf{W}_2\mathbf{W}_1^{-1})^{1/2}\psi^{(1)}-\sqrt{\lambda^{(1)}\lambda^{(2)}}\psi^{(1)} \nonumber \\ 
= \,& \lambda^{(1)}\left((\mathbf{W}_2\mathbf{W}_1^{-1})^{1/2}-\sqrt{\frac{\lambda^{(2)}}{\lambda^{(1)}}}\mathrm{I}\right)\psi^{(1)} \nonumber \\
\underset{\eqref{eq:pseudo_s_sqrt_eq}}{=} & -\lambda^{(1)}\left((\mathbf{W}_2\mathbf{W}_1^{-1})^{1/2}+\sqrt{\frac{\lambda^{(2)}}{\lambda^{(1)}}}\mathrm{I}\right)^{-1}\mathbf{E}\psi^{(1)}.\label{eq:pseudo_s_eq}
\end{align}
Taking the norm gives
\begin{align}
\left\Vert \left(\mathbf{S}-\sqrt{\lambda^{(1)}\lambda^{(2)}}\mathrm{I}\right)\psi^{(1)} \right\Vert_2 \le \, &\lambda^{(1)}\left\Vert\left((\mathbf{W}_2\mathbf{W}_1^{-1})^{1/2}+\sqrt{\frac{\lambda^{(2)}}{\lambda^{(1)}}}\mathrm{I}\right)^{-1}\right\Vert\left\Vert\mathbf{E}\right\Vert\\
\leq \,&  \frac{\sqrt{\lambda^{(2)}/\lambda^{(1)}}}{\sigma_{\min}\left((\mathbf{W}_2\mathbf{W}_1^{-1})^{1/2}\right)+\sqrt{\lambda^{(2)}/\lambda^{(1)}}}\epsilon_{\mathbf{S}} \nonumber \\
\leq \,& \frac{\sqrt{\lambda^{(2)}/\lambda^{(1)}}}{\sqrt{\lambda^{(2)}/\lambda^{(1)}}}\epsilon_{\mathbf{S}}=\epsilon_{\mathbf{S}} \nonumber, \label{eq:pseudo_s_sqrt_norm}
\end{align}
where $\sigma_{\min}$ denotes the minimum eigenvalue. 
Thus, $\sqrt{\lambda^{(1)}\lambda^{(2)}}$ is an $\epsilon_{\mathbf{S}}$-pseudo-eigenvalue of $\mathbf{S}$, where $\psi^{(1)}$ is a corresponding $\epsilon_{\mathbf{S}}$-pseudo-eigenvector.
\end{proof}

Note that the above proof shows that $\psi^{(1)}$ is a pseudo-eigenvector of $\mathbf{S}$, when $\mathbf{S}$ is defined as the midpoint of the geodesic curve connecting $\mathbf{W}_1$ and $\mathbf{W}_2$ (by setting $p=0.5$).
However, due to the decomposition in \eqref{eq:pseudo_s_3}, the proof is not compatible with definitions of $\mathbf{S}$ at other points $p \in (0,1)$ along the geodesic path.
For such cases, a different proof is required, specifically, without using the algebraic relationship in \eqref{eq:pseudo_s_3} that leads to \eqref{eq:pseudo_s_sqrt_eq}. In the following statement, which is the counterpart of Theorem \ref{theo:A_eigs} for the case where the eigenvector is not strictly common, we control the ``pseudo'' part by a straightforward perturbation argument.

\begin{theorem}\label{prop:pseudo_apart}
For $k=1,2$, consider the eigendecomposition $\mathbf{W}_k=\mathbf{U}_k \mathbf{L}_k \mathbf{U}_k^\top\in \mathbb{R}^{N\times N}$, where $\mathbf{L}_k:=\text{diag}(\lambda_1^{(k)},\ldots,\lambda_N^{(k)})$ so that $\lambda_1^{(1)}\geq\ldots\geq\lambda_N^{(1)}$ and $\mathbf{U}_k:=\begin{bmatrix}\psi_1^{(k)}&\ldots&\psi_N^{(k)}\end{bmatrix}\in \mathcal{V}_{N,N}$. 
Assume the above eigendecomposition satisfies $\mathbf{U}_2=\mathbf{U}_1+\epsilon \mathbf{A}$, where $\|\mathbf{A}\|=1$, $\epsilon>0$ is a small constant, and $c^{-1}\leq \ell_i:= \lambda_i^{(2)}/\lambda_i^{(1)}\leq c$ for some constant $c\geq1$ for all $i=1,\ldots,N$. For any $i=1,\ldots,N$, denote the spectral gap $\gamma_i:=\min_{k,\,\ell_k\neq \ell_i}|\ell_i-\ell_k|$.

Fix $j$. Then, for the $j$-th eigenpair of $\mathbf{W}_1$, when $\epsilon$ is sufficiently small, we have
\[
\left\Vert\left(\mathbf{F}-0.5\sqrt{\lambda_j^{(1)}\lambda_j^{(2)}}\log\left(\frac{\lambda_j^{(1)}}{\lambda_j^{(2)}}\right)\mathrm{I}\right)\psi_j^{(1)}\right\Vert_2=O(\epsilon)\,,
\]
where the implied constant depends on 
$\frac{\sqrt{c}\ln c}{\min_i\left(\gamma_i\sqrt{\lambda_i^{(1)}}\right)}$.
%$\frac{\ln c}{\min_i\left(\gamma_i\sqrt{\lambda_i^{(1)}}\right)\sqrt{\lambda_j^{(1)}}}$

%$\frac{\ln c}{\gamma_j\sqrt{\lambda_j^{(1)}}}\|\mathbf{A}e_j\|$.
\end{theorem}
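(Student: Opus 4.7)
My plan is to reduce the weakly-common case to the strictly-common case handled by Theorem \ref{theo:A_eigs} through a first-order perturbation argument. The key construction is an auxiliary ``aligned'' matrix $\tilde{\mathbf{W}}_2 := \mathbf{U}_1 \mathbf{L}_2 \mathbf{U}_1^{\top}$ that preserves the eigenvalues $\lambda_i^{(2)}$ of $\mathbf{W}_2$ but uses the eigenbasis of $\mathbf{W}_1$. Expanding $\mathbf{U}_2=\mathbf{U}_1+\epsilon\mathbf{A}$ gives
\[
\mathbf{W}_2 - \tilde{\mathbf{W}}_2 = \epsilon\bigl(\mathbf{A}\mathbf{L}_2\mathbf{U}_1^{\top} + \mathbf{U}_1\mathbf{L}_2\mathbf{A}^{\top}\bigr) + \epsilon^2\,\mathbf{A}\mathbf{L}_2\mathbf{A}^{\top},
\]
so $\|\mathbf{W}_2 - \tilde{\mathbf{W}}_2\| = O(\epsilon)$ with implied constant proportional to $\|\mathbf{L}_2\|\le c\,\lambda_1^{(1)}$. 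By construction $\mathbf{W}_1$ and $\tilde{\mathbf{W}}_2$ share the eigenbasis $\mathbf{U}_1$, so the reference operators $\tilde{\mathbf{S}} := \mathbf{W}_1 \# \tilde{\mathbf{W}}_2$ and $\tilde{\mathbf{F}} := \mathrm{Log}_{\tilde{\mathbf{S}}}(\mathbf{W}_1)$ fall under the scope of Theorem \ref{theo:A_eigs}, yielding the \emph{exact} identity $\tilde{\mathbf{F}}\psi_j^{(1)} = c_j\psi_j^{(1)}$, where $c_j := 0.5\sqrt{\lambda_j^{(1)}\lambda_j^{(2)}}\log(\lambda_j^{(1)}/\lambda_j^{(2)})$ is precisely the target eigenvalue in the statement.

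With this reduction, the triangle inequality
\[
\bigl\|(\mathbf{F}-c_j\mathrm{I})\psi_j^{(1)}\bigr\|_2 \le \bigl\|(\mathbf{F}-\tilde{\mathbf{F}})\psi_j^{(1)}\bigr\|_2 + \bigl\|(\tilde{\mathbf{F}}-c_j\mathrm{I})\psi_j^{(1)}\bigr\|_2
\]
together with the vanishing of the second term reduces the problem to bounding $\|(\mathbf{F}-\tilde{\mathbf{F}})\psi_j^{(1)}\|_2$. I would then propagate the $O(\epsilon)$ perturbation of $\mathbf{W}_2$ through each operation in the formula $\mathbf{F}=\mathbf{S}^{1/2}\log(\mathbf{S}^{-1/2}\mathbf{W}_1\mathbf{S}^{-1/2})\mathbf{S}^{1/2}$: a congruence by $\mathbf{W}_1^{-1/2}$, amplified by $\|\mathbf{W}_1^{-1}\|\le 1/\min_i\lambda_i^{(1)}$; the matrix square root, which is operator-Lipschitz on positive matrices with constant controlled by the smallest eigenvalue of the argument (here bounded below by $1/\sqrt{c}$); a further congruence to form $\mathbf{S}^{-1/2}\mathbf{W}_1\mathbf{S}^{-1/2}$; and finally the matrix logarithm on positive matrices with spectrum in $[1/\sqrt{c},\sqrt{c}]$, producing a factor of order $\sqrt{c}\ln c$. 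Composing these Lipschitz estimates yields $\|\mathbf{F}-\tilde{\mathbf{F}}\|=O(\epsilon)$, and a fortiori the required bound on $\|(\mathbf{F}-\tilde{\mathbf{F}})\psi_j^{(1)}\|_2$.

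The spectral-gap factor $\gamma_i$ in the implied constant is recovered by refining this chain through the Daletskii--Krein representation of the Fr\'echet derivatives of $\sqrt{\cdot}$ and $\log(\cdot)$ at the unperturbed matrices. In the basis $\mathbf{U}_1$, the Fr\'echet derivative of $\log(\cdot)$ at $\tilde{\mathbf{S}}^{-1/2}\mathbf{W}_1\tilde{\mathbf{S}}^{-1/2}=\mathbf{U}_1\mathrm{diag}(1/\sqrt{\ell_i})\mathbf{U}_1^{\top}$ has $(i,k)$-entry proportional to the divided difference $(\log\ell_k-\log\ell_i)/(1/\sqrt{\ell_i}-1/\sqrt{\ell_k})$. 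Projecting against $\psi_j^{(1)}$ and conjugating by $\mathbf{S}^{1/2}$ on both sides, the diagonal contribution reproduces the zeroth-order term $c_j\psi_j^{(1)}$, while the off-diagonal contributions of the first-order correction are controlled by the separation of distinct $\ell_i$'s, namely by $\gamma_i$. The surrounding $\mathbf{S}^{1/2}$ congruences introduce the $\sqrt{\lambda_i^{(1)}}$ scaling, leading to the stated constant $\sqrt{c}\ln c/\min_i(\gamma_i\sqrt{\lambda_i^{(1)}})$.

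The main obstacle is preserving this sharp $\gamma_i$-dependence through the chain of perturbations of non-commuting matrix functions. A naive operator-norm chain readily delivers the correct $O(\epsilon)$ rate but with a cruder constant; extracting the advertised dependence on $\gamma_i$ requires carefully matching the Daletskii--Krein expansions of $\sqrt{\cdot}$ and $\log(\cdot)$ in a common eigenbasis and resumming the diagonal contributions so that they cancel $c_j\psi_j^{(1)}$ exactly to zeroth order, leaving only off-diagonal first-order corrections of the desired magnitude.
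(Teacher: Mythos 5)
Your proposal takes a genuinely different route from the paper, and the high-level idea is sound: you replace $\mathbf{W}_2$ by the exactly-aligned $\tilde{\mathbf{W}}_2=\mathbf{U}_1\mathbf{L}_2\mathbf{U}_1^\top$, apply Theorem~\ref{theo:A_eigs} to get $\tilde{\mathbf{F}}\psi_j^{(1)}=c_j\psi_j^{(1)}$ exactly, and then control $\|\mathbf{F}-\tilde{\mathbf{F}}\|$ through the operations defining $\mathbf{F}$. The paper instead never introduces an auxiliary matrix; it expands $\psi_j^{(1)}$ in the genuine eigenbasis $\{v_i\}$ of $\mathbf{W}_1^{-1/2}\mathbf{W}_2\mathbf{W}_1^{-1/2}$ and tracks the coefficients $\alpha_{ji}$ by first-order eigenvector perturbation theory. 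Your reduction is conceptually cleaner and would indeed give the $O(\epsilon)$ rate via the Lipschitz chain, but notice that the constants produced by that chain involve inverse powers of $\lambda_N^{(1)}$ (the congruences by $\mathbf{W}_1^{-1/2}$ and $\mathbf{S}^{\pm 1/2}$ each contribute a factor $\|\mathbf{W}_1^{-1}\|$ or $\|\mathbf{S}^{-1}\|$) — qualitatively different from, and in the regime of small $\lambda_N^{(1)}$ considerably worse than, the stated $\frac{\sqrt{c}\ln c}{\min_i(\gamma_i\sqrt{\lambda_i^{(1)}})}$.

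The genuine gap is in your final paragraph: the Daletskii--Krein refinement cannot recover the $\gamma_i$-dependence. The first divided differences $f^{[1]}(\mu_i,\mu_k)=\frac{f(\mu_i)-f(\mu_k)}{\mu_i-\mu_k}$ of the smooth, operator-Lipschitz functions $\log(\cdot)$, $\sqrt{\cdot}$, and $(\cdot)^{-1/2}$ on the relevant spectral intervals are uniformly \emph{bounded} by the maximum derivative on the range (e.g.\ by $\sqrt{c}$ for $\log$ at arguments in $[c^{-1/2},c^{1/2}]$); they do not blow up as $\mu_i\to\mu_k$, so no $1/\gamma_i$ factor can emerge from them. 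In the paper's proof the $\gamma_i$ is not a divided-difference phenomenon at all: it comes from the standard first-order perturbation of the \emph{eigenvectors} $v_i=\psi_i^{(1)}+\epsilon\sum_{k\neq i}\frac{\langle\psi_k^{(1)},\mathbf{B}\psi_i^{(1)}\rangle}{\ell_i-\ell_k}\psi_k^{(1)}+O(\epsilon^2)$, which feeds into the coefficients $\alpha_{ji}$; that mechanism is precisely what your auxiliary-matrix construction is designed to bypass, since $\tilde{\mathbf{S}},\tilde{\mathbf{F}}$ share the eigenbasis $\mathbf{U}_1$ by fiat. So your route proves the $O(\epsilon)$ rate with a different implied constant, but the claim that it reproduces the stated constant is not substantiated, and the sketched divided-difference argument for it would not succeed.
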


\begin{proof}  

By Proposition \ref{prop_app:sa_equiv_forms} (see Appendix \ref{app:add_state}), we rewrite the operator $\mathbf{F}=\mathbf{S}^{1/2}\log\left(\mathbf{S}^{-1/2}\mathbf{W}_1\mathbf{S}^{-1/2}\right)\mathbf{S}^{1/2}$ as $\mathbf{F} = \log\left(\mathbf{W}_1\mathbf{S}^{-1}\right)\mathbf{S}$. 
Denote 
\[
\hat{\lambda}_{\mathbf{F}}=0.5\sqrt{\lambda_j^{(1)}\lambda_j^{(2)}}\log\left(\frac{\lambda_j^{(1)}}{\lambda_j^{(2)}}\right)\,.
\]
{By Theorem \ref{prop:pseudo_sa}, there exists $\mathbf{E}_{\mathbf{S}}$ with a sufficiently small norm, such that
\[
    \mathbf{S}\psi_j^{(1)} = \left(\sqrt{\lambda_j^{(1)}\lambda_j^{(2)}}\mathrm{I}-\mathbf{E}_{\mathbf{S}}\right)\psi_j^{(1)}, \
\]
and therefore, we have}
\begin{align}
\left(\mathbf{F}-\hat{\lambda}_{\mathbf{F}}\mathrm{I}\right)\psi_j^{(1)} = &\, \left(\log\left(\mathbf{W}_1\mathbf{S}^{-1}\right)\mathbf{S}-\hat{\lambda}_{\mathbf{F}}\mathrm{I}\right)\psi_j^{(1)}
\nonumber\\
=&\,\sqrt{\lambda_j^{(1)}\lambda_j^{(2)}}\left(\log\left(\mathbf{W}_1\mathbf{S}^{-1}\right)-\frac{\hat{\lambda}_{\mathbf{F}}}{\sqrt{\lambda_j^{(1)}\lambda_j^{(2)}}}\mathrm{I}\right)\psi_j^{(1)}-\log\left(\mathbf{W}_1\mathbf{S}^{-1}\right)\mathbf{E}_{\mathbf{S}}\psi_j^{(1)}.
\label{eq:pseudo_a_1}
\end{align}

Since both $\mathbf{W}_1$ and $\mathbf{W}_2$ are positive definite, they are invertible and also $\left(\mathbf{W}_2\mathbf{W}_1^{-1}\right)^{1/2}$ is invertible. 
Therefore, by \eqref{eq:equivexp}, $\mathbf{W}_1\mathbf{S}^{-1}=\mathbf{W}_1\mathbf{W}_1^{-1}\left(\mathbf{W}_2\mathbf{W}_1^{-1}\right)^{-1/2}=\left(\mathbf{W}_2\mathbf{W}_1^{-1}\right)^{-1/2}$. 
Substituting this relationship into \eqref{eq:pseudo_a_1} yields
\begin{align}
\left(\mathbf{F}-\hat{\lambda}_{\mathbf{F}}\mathrm{I}\right)\psi^{(1)}_j
= - \frac{\sqrt{\lambda_j^{(1)}\lambda_j^{(2)}}}{2}
\left(\log\left(\mathbf{W}_2\mathbf{W}_1^{-1}\right)-\log\left(\frac{\lambda_j^{(2)}}{\lambda_j^{(1)}}\right)\mathrm{I}\right)\psi_j^{(1)}-\frac{1}{2}\log\left(\mathbf{W}_2\mathbf{W}_1^{-1}\right)\mathbf{E}_{\mathbf{S}}\psi_j^{(1)}\,. \label{eq:pseudo_a_2}
\end{align}
Now we control the right hand side term by term. 
Recall that for any analytic function $f$ over an open set in $\mathbb{R}$ that contains the spectrum of $\mathbf{W}_2\mathbf{W}_1^{-1}$, we can define $f(\mathbf{W}_2\mathbf{W}_1^{-1})$.
Since $\mathbf{W}_2\mathbf{W}_1^{-1}$ and $\mathbf{W}_1^{-1/2}\mathbf{W}_2 \mathbf{W}_1^{-1/2}$ are similar, we have
\begin{equation}
f(\mathbf{W}_2\mathbf{W}_1^{-1})=\mathbf{W}_1^{1/2}f\left(\mathbf{W}_1^{-1/2}\mathbf{W}_2 \mathbf{W}_1^{-1/2}\right)\mathbf{W}_1^{-1/2}\,,\label{eq:pseudo_a_6}
\end{equation}
and hence
\begin{align}
f\left(\mathbf{W}_1^{-1/2}\mathbf{W}_2\mathbf{W}_1^{-1/2}\right)\psi_j^{(1)}= \mathbf{W}_1^{-1/2}f(\mathbf{W}_2\mathbf{W}_1^{-1})\mathbf{W}_1^{1/2}\psi_j^{(1)}=\sqrt{\lambda_j^{(1)}}\mathbf{W}_1^{-1/2}f(\mathbf{W}_2\mathbf{W}_1^{-1})\psi_j^{(1)}.\label{eq:pseudo_a_6q}
\end{align}

Let $\mu_i$ and $v_i$ denote the eigenvalues and eigenvectors of the matrix $\mathbf{W}_1^{-1/2}\mathbf{W}_2\mathbf{W}_1^{-1/2}$, respectively, for $i=1,\dots,N$.  %
Since $\{\psi_j^{(1)}\}_{j=1}^N$ and $\{v_j\}_{j=1}^N$ are both orthonormal bases of $\mathbb{R}^N$, we have $\psi_j^{(1)}=\sum_i \alpha_{ji} v_i$, where $\alpha_{ji}\in\mathbb{R}$ and $\sum_i \alpha_{ji}^2=1$ for all $j$.
%we have $\psi_j^{(1)}=\sum_i \alpha_{ji} v_i$ and $v_j=\sum_i \beta_{ji} \psi_i^{(1)}$, where $\alpha_{ji},\beta_{ji}\in\mathbb{R}$, $\sum_i \alpha_{ji}^2=1$ and $\sum_i \beta_{ji}^2=1$ for all $j$. 
By \eqref{eq:pseudo_a_6q}, we have 

\begin{align}
&\frac{\sqrt{\lambda_j^{(1)}\lambda_j^{(2)}}}{2}\left(\log(\mathbf{W}_2\mathbf{W}_1^{-1})-\log\left(\frac{\lambda_j^{(2)}}{\lambda_j^{(1)}}\right)\mathrm{I}\right)\psi_j^{(1)}\nonumber\\
=&\,\frac{\sqrt{\lambda_j^{(2)}}}{2}\mathbf{W}_1^{1/2}\left(\log(\mathbf{W}_1^{-1/2}\mathbf{W}_2\mathbf{W}_1^{-1/2})-\log\left(\frac{\lambda_j^{(2)}}{\lambda_j^{(1)}}\right)\mathrm{I}\right)\psi_j^{(1)}\nonumber\\
=&\,\frac{\sqrt{\lambda_j^{(2)}}}{2}\mathbf{W}_1^{1/2}\sum_{i=1}^N\left(\log(\mathbf{W}_1^{-1/2}\mathbf{W}_2\mathbf{W}_1^{-1/2})-\log\left(\frac{\lambda_j^{(2)}}{\lambda_j^{(1)}}\right)\mathrm{I}\right)\alpha_{ji}v_i.
\end{align}
Using the fact that 
\begin{eqnarray}
 \left(\log\left(\mathbf{W}_1^{-1/2}\mathbf{W}_2\mathbf{W}_1^{-1/2}\right)-\log\left(\frac{\lambda_j^{(2)}}{\lambda_j^{(1)}}\right)\mathrm{I}\right)v_i = \left(\log\mu_i-\log\left(\frac{\lambda_j^{(2)}}{\lambda_j^{(1)}}\right)\right)v_i\,,\label{eq:pseudo_a_5}
\end{eqnarray}
yields
\begin{align}
&\frac{\sqrt{\lambda_j^{(2)}}}{2}\mathbf{W}_1^{1/2}\sum_{i=1}^N\left(\log(\mathbf{W}_1^{-1/2}\mathbf{W}_2\mathbf{W}_1^{-1/2})-\log\left(\frac{\lambda_j^{(2)}}{\lambda_j^{(1)}}\right)\mathrm{I}\right)\alpha_{ji}v_i\nonumber\\
=&\,\frac{\sqrt{\lambda_j^{(2)}}}{2}\mathbf{W}_1^{1/2}\sum_{i=1}^N\left(\log\mu_i-\log\left(\frac{\lambda_j^{(2)}}{\lambda_j^{(1)}}\right)\right)\alpha_{ji}v_i\label{eq to control 00}.%\nonumber\\
% =&\,\frac{\sqrt{\lambda_j^{(2)}}}{2}\sum_{i=1}^N\left(\log\mu_i-\log\left(\frac{\lambda_j^{(2)}}{\lambda_j^{(1)}}\right)\right)\alpha_{ji}\sum_{k=1}^N \beta_{ik}\sqrt{\lambda_{k}^{(1)}}\psi^{(1)}_k\nonumber\\
% =&\,\frac{\sqrt{\lambda_j^{(2)}}}{2}\sum_{i=1}^N\left(\log\mu_i-\log\left(\frac{\lambda_j^{(2)}}{\lambda_j^{(1)}}\right)\right)\alpha_{ji}\sum_{k=1}^N \beta_{ik}\sqrt{\lambda_{k}^{(1)}}\sum_{l=1}^N\alpha_{kl}v_l\nonumber\\
% =&\,\frac{\sqrt{\lambda_j^{(2)}}}{2}\sum_{i=1}^N\left(\log\mu_i-\log\left(\frac{\lambda_j^{(2)}}{\lambda_j^{(1)}}\right)\right)\alpha_{ji}\sqrt{\lambda_i^{(1)}}v_i\label{eq to control 00}\,,
\end{align}
% where we use the fact that
% $\sum_{k=1}^N \beta_{lk}\sqrt{\lambda_{k}^{(1)}}\sum_{l=1}^N\alpha_{kl}v_l=\sum_{l=1}^N\left(\sum_{k=1}^N \sqrt{\lambda_{k}^{(1)}}\beta_{ik}\alpha_{kl}\right)v_l=\sqrt{\lambda_i^{(1)}}v_i$.
%
Therefore, the squared $L^2$ norm of the first term in the right hand side of \eqref{eq:pseudo_a_2} becomes
\begin{align}
&\left\| \frac{\sqrt{\lambda_j^{(1)}\lambda_j^{(2)}}}{2}\left(\log(\mathbf{W}_2\mathbf{W}_1^{-1})-\log\left(\frac{\lambda_j^{(2)}}{\lambda_j^{(1)}}\right)\mathrm{I}\right)\psi_j^{(1)}\right\|^2_2\nonumber\\
=&\left\|\frac{\sqrt{\lambda_j^{(2)}}}{2}\mathbf{W}_1^{1/2}\sum_{i=1}^N\left(\log\mu_i-\log\left(\frac{\lambda_j^{(2)}}{\lambda_j^{(1)}}\right)\right)\alpha_{ji}v_i\right\|^2_2
\leq \frac{\lambda_j^{(2)}}{4}\left\|\mathbf{W}_1^{1/2}\right\|^2\left\|\sum_{i=1}^N\left(\log\mu_i-\log\left(\frac{\lambda_j^{(2)}}{\lambda_j^{(1)}}\right)\right)\alpha_{ji}v_i\right\|^2_2\nonumber\\
=&\frac{\lambda_j^{(2)}}{4}  \sum_{i=1}^N\alpha_{ji}^2 %\lambda_i^{(1)}
\left(\log\mu_i-\log\left(\frac{\lambda_j^{(2)}}{\lambda_j^{(1)}}\right)\right)^2\, ,\label{eq to control 1}
\end{align}
where we use the fact that $\{v_j\}$ form an orthonormal basis and that the operator norm $\left\|\mathbf{W}_1^{1/2}\right\|^2=1$, since $\mathbf{W}_1$ is normalized.

Next, as in \eqref{eq:pseudo_s_eq}, we set
\begin{align}
\mathbf{E}_{\mathbf{S}}\psi_j^{(1)}=%-
\lambda_j^{(1)}\left((\mathbf{W}_2\mathbf{W}_1^{-1})^{1/2}+\sqrt{\frac{\lambda_j^{(2)}}{\lambda_j^{(1)}}}\mathrm{I}\right)^{-1}\mathbf{E}\psi_j^{(1)} 
\underset{\eqref{eq:pseudo_s_sqrt_eq}}{=}-\lambda_j^{(1)}\left((\mathbf{W}_2\mathbf{W}_1^{-1})^{1/2}-\sqrt{\frac{\lambda_j^{(2)}}{\lambda_j^{(1)}}}\mathrm{I}\right)\psi_j^{(1)}
\end{align}
for some $\mathbf{E}$ with a sufficiently small norm.
Since $f(x)=\log(x)\sqrt{x}$ is analytic over an open set that contains the spectrum of $\mathbf{W}_1\mathbf{W}_2^{-1}$, by the same argument as that for \eqref{eq to control 00}, we have 
\begin{align}
\log\left(\mathbf{W}_2\mathbf{W}_1^{-1}\right)\mathbf{E}_{\mathbf{S}}\psi_j^{(1)}&\,=-\lambda_j^{(1)}\log\left(\mathbf{W}_2\mathbf{W}_1^{-1}\right)\left((\mathbf{W}_2\mathbf{W}_1^{-1})^{1/2}-\sqrt{\frac{\lambda_j^{(2)}}{\lambda_j^{(1)}}}\mathrm{I}\right)\psi_j^{(1)}\nonumber\\
&\,=-\sqrt{\lambda_j^{(1)}}  \mathbf{W}_1^{1/2}\sum_{i=1}^N\alpha_{ji}\log(\mu_i)\left(\sqrt{\mu_i}-\sqrt{\frac{\lambda_j^{(2)}}{\lambda_j^{(1)}}}\right)v_i\,,\nonumber%\\
%&\,=\textcolor{blue}{-\sqrt{\lambda_j^{(1)}} \sum_{i=1}^N\alpha_{ji} %\sqrt{\lambda_i^{(1)}}
%\log(\mu_i)\left(\sqrt{\mu_i}-\sqrt{\frac{\lambda_j^{(2)}}{\lambda_j^{(1)}}}\right)v_i}\,,\nonumber
\end{align}
and hence the squared $L^2$ norm of the second term in the right hand side of \eqref{eq:pseudo_a_2} becomes
\begin{align}
\left\|\log\left(\mathbf{W}_2\mathbf{W}_1^{-1}\right)\mathbf{E}_{\mathbf{S}}\psi_j^{(1)}\right\|_2^2 \leq \lambda_j^{(1)} \sum_{i=1}^N  %\lambda_i^{(1)}
\alpha^2_{ji}(\log \mu_i)^2\left(\sqrt{\mu_i}-\sqrt{\frac{\lambda_j^{(2)}}{\lambda_j^{(1)}}}\right)^2\,,\label{eq to control 2}
\end{align}
where we again use the fact that $\{v_j\}$ form an orthonormal basis and that the operator norm $\left\|\mathbf{W}_1^{1/2}\right\|^2=1$.
To finish the proof, we control $\alpha_{ji}$ and the relationship between $\mu_i$ and $\frac{\lambda_j^{(2)}}{\lambda_j^{(1)}}$ in \eqref{eq to control 1} and \eqref{eq to control 2} using matrix perturbation theory.
%by the assumption about the eigendecomposition of $\mathbf{W}_1$ and $\mathbf{W}_2$. 
By a direct expansion, we have
\begin{align}
\mathbf{W}_1^{-1/2}\mathbf{W}_2\mathbf{W}_1^{-1/2}=\mathbf{U}_1 \mathbf{L}_2 \mathbf{L}_1^{-1} \mathbf{U}_1^\top+\epsilon \mathbf{B}\,,
\end{align}
where $\mathbf{B}=\mathbf{U}_1 \mathbf{L}_1^{-1/2} \mathbf{U}_1^\top(\mathbf{A}\mathbf{L}_2\mathbf{U}_1^\top+\mathbf{U}_1\mathbf{L}_2\mathbf{A}^\top)\mathbf{U}_1\mathbf{L}_1^{-1/2}\mathbf{U}_1^\top+\epsilon \mathbf{U}_1\mathbf{L}_1^{-1/2}\mathbf{U}_1^\top \mathbf{A} \mathbf{L}_2 \mathbf{A}^\top \mathbf{U}_1 \mathbf{L}_1^{-1/2}\mathbf{U}_1^\top$. 
To simplify the notation, we assume that the diagonal entries of $\mathbf{L}_2\mathbf{L}_1^{-1}=\text{diag}\left(\frac{\lambda_1^{(2)}}{\lambda_1^{(1)}},\ldots,\frac{\lambda_N^{(2)}}{\lambda_N^{(1)}}\right)=\text{diag}\left(\ell_1,\ldots,\ell_N\right)$ are all distinct, or the following argument could be carried out with eigenprojections. Thus, by a standard perturbation argument, when $\epsilon$ is sufficiently small, we have 
\begin{equation}
v_i=\psi_i^{(1)}+\epsilon\sum_{k\neq i}\frac{\langle \psi_k^{(1)}, \mathbf{B}\psi_i^{(1)}\rangle}{\lambda_i^{(2)}/\lambda_i^{(1)}-\lambda_k^{(2)}/\lambda_k^{(1)}}\psi_k^{(1)}+O(\epsilon^2),\ \ 
\mu_i=\frac{\lambda_i^{(2)}}{\lambda_i^{(1)}}+\epsilon \langle \psi_i^{(1)}, \mathbf{B}\psi_i^{(1)}\rangle +O(\epsilon^2)\label{relationship vi and psii}
\end{equation}
for each $i=1,\ldots,N$.
Note that
\[
I_N=\mathbf{U}_2^\top\mathbf{U}_2=(\mathbf{U}_1+\epsilon\mathbf{A})^\top(\mathbf{U}_1+\epsilon\mathbf{A})=I_N+\epsilon(\mathbf{A}^\top\mathbf{U}_1+\mathbf{U}_1^\top\mathbf{A}) +\epsilon^2 \mathbf{A}^\top \mathbf{A}\,,
\]
so we have
\begin{equation}
\mathbf{A}^\top\mathbf{U}_1=-\mathbf{U}_1^\top\mathbf{A} -\epsilon \mathbf{A}^\top \mathbf{A}\,.\label{AU=-UA-eps AA}
\end{equation}
Thus, by a direct expansion with the definition of $\mathbf{B}$, we have 
\begin{align}
\langle \psi_k^{(1)}, \mathbf{B}\psi_i^{(1)}\rangle
=&\,(\lambda_i^{(1)}\lambda_k^{(1)})^{-1/2}e_k^\top \mathbf{U}_1^\top(\mathbf{A}\mathbf{L}_2\mathbf{U}_1^\top+\mathbf{U}_1\mathbf{L}_2\mathbf{A}^\top)\mathbf{U}_1e_i +O(\epsilon)\nonumber\\
=&\, (\lambda_i^{(1)}\lambda_k^{(1)})^{-1/2} (\lambda_i^{(2)}e_k^\top \mathbf{U}_1^\top\mathbf{A} e_i + \lambda_k^{(2)}e_k^\top \mathbf{A}^\top\mathbf{U}_1 e_i)
% (\lambda_k^{(2)}e_k^\top \mathbf{A}\mathbf{U}_1^\top e_i+\lambda_i^{(2)}e_k^\top \mathbf{U}_1\mathbf{A}^\top e_i)
+O(\epsilon)\nonumber\\
=&\,(\lambda_i^{(1)}\lambda_k^{(1)})^{-1/2}\left((\lambda_i^{(2)}-\lambda_k^{(2)})e_k^\top\mathbf{U}_1^\top\mathbf{A}e_i-\epsilon \lambda_k^{(2)} e_k\mathbf{A}^\top\mathbf{A}e_i\right)+(\lambda_i^{(1)}\lambda_k^{(1)})^{-1/2}O(\epsilon)\nonumber\\
=&\,(\lambda_i^{(1)}\lambda_k^{(1)})^{-1/2}(\lambda_i^{(2)}-\lambda_k^{(2)})e_k^\top\mathbf{U}_1^\top\mathbf{A}e_i+O(\epsilon)\,,\nonumber
%=&\,(\lambda_i^{(1)}\lambda_k^{(1)})^{-1/2}(\lambda_i^{(2)}-\lambda_k^{(2)})e_k^\top (\mathbf{U}_1-\epsilon \lambda_k^{(2)} \mathbf{A})^\top \mathbf{A}e_i\,,\nonumber
\end{align}
where the equality before last %last equality 
comes from \eqref{AU=-UA-eps AA}, the last equality is due to $|e_k^\top \mathbf{A}^\top \mathbf{A}e_i|\leq 1$, since $\|A\|=1$, and the constant in this derivation depends on $(\lambda_i^{(1)}\lambda_k^{(1)})^{-1/2}$.
%Note that since $\|A\|=1$, we have $|e_k^\top \mathbf{A}^\top \mathbf{A}e_i|\leq 1$.
Thus, for the $j$-th eigenpair we are concerned with, we have $\mu_j=\frac{\lambda_j^{(2)}}{\lambda_j^{(1)}}+O(\epsilon^2)$, since $\langle \psi_j^{(1)}, \mathbf{B}\psi_j^{(1)}\rangle=O(\epsilon)$.
For the eigenvector, when $k\neq i$,
we have
\begin{align}
\left|\frac{\langle \psi_k^{(1)}, \mathbf{B}\psi_i^{(1)}\rangle}{\lambda_i^{(2)}/\lambda_i^{(1)}-\lambda_k^{(2)}/\lambda_k^{(1)}}\right|
\leq &\,\frac{(\lambda_i^{(1)}\lambda_k^{(1)})^{-1/2}\left|\lambda_i^{(2)}-\lambda_k^{(2)}\right|}{\left|\lambda_i^{(2)}/\lambda_i^{(1)}-\lambda_k^{(2)}/\lambda_k^{(1)}\right|}\left|e_k^\top \mathbf{U}_1^\top\mathbf{A}e_i\right|+O(\epsilon)\nonumber\\
\leq &\,\frac{1}{\gamma_i} \frac{\left|(\lambda_i^{(2)}-\lambda_k^{(2)})\right|}{\sqrt{\lambda_i^{(1)}\lambda_k^{(1)}}} \left|e_k^\top \mathbf{U}_1^\top\mathbf{A}e_i\right|+O(\epsilon)%\nonumber\\
%= &\,\textcolor{blue}{\frac{1}{\gamma_i} \left|  \sqrt{\ell_i \frac{\lambda_i^{(2)}}{\lambda_k^{(1)}}} - \sqrt{\ell_k\frac{\lambda_k^{(2)}}{\lambda_i^{(1)}}} \right|\left|e_k^\top \mathbf{U}_1^\top\mathbf{A}e_i\right|+O(\epsilon) }\nonumber \\
%\leq &\,\textcolor{blue}{\frac{c}{\gamma_i} \left(\sqrt{\frac{\lambda_i^{(2)}}{\lambda_k^{(1)}}} + \sqrt{\frac{\lambda_k^{(2)}}{\lambda_i^{(1)}}} \right)\left|e_k^\top \mathbf{U}_1^\top\mathbf{A}e_i\right|+O(\epsilon)}
\nonumber\,,
\end{align}
where the constant depends on $\frac{1}{\sqrt{\lambda_i^{(1)}\lambda_k^{(1)}} \gamma_i}$.
%where note that the right hand side is finite by the rough bound $\left|\frac{(\lambda_i^{(1)}\lambda_k^{(1)})^{-1/2}(\lambda_i^{(2)}-\lambda_k^{(2)})}{\lambda_i^{(2)}/\lambda_i^{(1)}-\lambda_k^{(2)}/\lambda_k^{(1)}}\right|\leq \frac{2c}{\gamma_i} \sqrt{\frac{\lambda_i^{(2)}}{\lambda_k^{(1)}}}$.

By \eqref{relationship vi and psii} we have for $j\neq i$:
\begin{align}
	\alpha_{ji} &= \langle \psi_j^{(1)}, v_i \rangle = \langle \psi_j^{(1)}, \psi_i^{(1)}+\epsilon\sum_{k\neq i}\frac{\langle \psi_k^{(1)}, \mathbf{B}\psi_i^{(1)}\rangle}{\lambda_i^{(2)}/\lambda_i^{(1)}-\lambda_k^{(2)}/\lambda_k^{(1)}}\psi_k^{(1)}+O(\epsilon^2) \rangle\nonumber \\
	&= \epsilon\sum_{k\neq i}\frac{\langle \psi_k^{(1)}, \mathbf{B}\psi_i^{(1)}\rangle}{\lambda_i^{(2)}/\lambda_i^{(1)}-\lambda_k^{(2)}/\lambda_k^{(1)}}\langle \psi_j^{(1)}, \psi_k^{(1)}\rangle +O(\epsilon^2)\nonumber \\
	&= \epsilon \frac{\langle \psi_j^{(1)}, \mathbf{B}\psi_i^{(1)}\rangle}{\lambda_i^{(2)}/\lambda_i^{(1)}-\lambda_j^{(2)}/\lambda_j^{(1)}} +O(\epsilon^2)\, .\nonumber
\end{align}
Combining this with the inequality above, we have for $j\neq i$:
\begin{align}
	\left| \alpha_{ji} \right| \leq  \epsilon
	\frac{1}{\gamma_i} \frac{\left|(\lambda_i^{(2)}-\lambda_j^{(2)})\right|}{\sqrt{\lambda_i^{(1)}\lambda_j^{(1)}}}
	%\frac{c}{\gamma_i} \left(\sqrt{\frac{\lambda_i^{(2)}}{\lambda_j^{(1)}}} + \sqrt{\frac{\lambda_j^{(2)}}{\lambda_i^{(1)}}} \right)
	\left|e_j^\top\mathbf{U}_1^\top\mathbf{A}e_i \right| + O(\epsilon^2)\, ,
\end{align}
where the constant depends on $\frac{1}{\sqrt{\lambda_i^{(1)}\lambda_j^{(1)}} \gamma_i}$.
%As a result, by the relationship between $v_i$ and $\psi^{(1)}_i$ shown in \eqref{relationship vi and psii}, when $\epsilon$ is sufficiently small, $|\alpha_{ji}|\leq\epsilon \frac{(\lambda_i^{(1)}\lambda_j^{(1)})^{-1/2}\left|\lambda_i^{(2)}-\lambda_j^{(2)}\right|}{\left|\lambda_i^{(2)}/\lambda_i^{(1)}-\lambda_j^{(2)}/\lambda_j^{(1)}\right|}\left|e_i^\top (\mathbf{U}_1-\epsilon \lambda_j^{(2)} \mathbf{A})^\top \mathbf{A}e_j\right|+O(\epsilon^2)$ for $i\neq j$, and $|\alpha_{jj}|\leq 2$. 

We thus have for \eqref{eq to control 1}:
\begin{align}
 &\frac{\lambda_j^{(2)}}{4}\sum_{i=1}^N\alpha_{ji}^2 %\lambda_i^{(1)}
 \left(\log\mu_i-\log\left(\frac{\lambda_j^{(2)}}{\lambda_j^{(1)}}\right)\right)^2\nonumber\\
=&\,\frac{\lambda_j^{(2)}}{4}\alpha_{jj}^2 %\lambda_j^{(1)}
\left(\log\mu_j-\log\left(\frac{\lambda_j^{(2)}}{\lambda_j^{(1)}}\right)\right)^2+ \frac{\lambda_j^{(2)}}{4}\sum_{i\neq j}\alpha_{ji}^2 %\lambda_i^{(1)}
\left(\log\mu_i-\log\left(\frac{\lambda_j^{(2)}}{\lambda_j^{(1)}}\right)\right)^2%\nonumber\\
%\leq &\,\frac{2\lambda_j^{(1)}}{\mu_j^2} \left|\mu_j-\frac{\lambda_j^{(2)}}{\lambda_j^{(1)}}\right|^2+ 5(\ln c)^2\sum_{i\neq j}\alpha_{ji}^2\lambda_i^{(1)}\nonumber\\
%\leq &\,8c^2\lambda_j^{(1)} \left|\mu_j-\frac{\lambda_j^{(2)}}{\lambda_j^{(1)}}\right|^2+ \frac{20(\ln c)^2\epsilon^2}{\lambda_j^{(1)}}\sum_{i\neq j} \frac{\left|e_i^\top (\mathbf{U}_1-\epsilon \lambda_j^{(2)} \mathbf{A})^\top \mathbf{A}e_j\right|^2}{\left|\lambda_i^{(2)}/\lambda_i^{(1)}-\lambda_j^{(2)}/\lambda_j^{(1)}\right|^2} 
=O(\epsilon^2)\,,\label{final bound part 1}
\end{align}
where the constant depends on $\frac{c(\ln c)^2}{\min_i\{\gamma_i^2\lambda_i^{(1)}\}}$.

This is because the first term is $O(\epsilon^2)$ by
\begin{align}
	\alpha_{jj}^2 \left(\log\mu_j-\log\left(\frac{\lambda_j^{(2)}}{\lambda_j^{(1)}}\right)\right)^2 = &\, \alpha_{jj}^2 \left(\epsilon \langle \psi_j^{(1)}, \mathbf{B}\psi_j^{(1)}\rangle \frac{\lambda_j^{(1)}}{\lambda_j^{(2)}}\right)^2 + O(\epsilon^2) \\
	= &\, \epsilon^2 \alpha_{jj}^2 \left( \frac{\lambda_j^{(1)}}{\lambda_j^{(2)}} \right)^2 (\lambda_j^{(1)}\lambda_j^{(1)})^{-1} \left((\lambda_j^{(2)}-\lambda_j^{(2)})e_j^\top\mathbf{U}_1^\top\mathbf{A}e_j+O(\epsilon)\right) ^2 + O(\epsilon^2)
\end{align}
using Taylor expansion $\log (x+h)=\log(x)+h/x+O(h^2)$ for $x>0$ and sufficiently small $h$.
In addition, noting that $\left(\log\mu_i-\log\left(\frac{\lambda_j^{(2)}}{\lambda_j^{(1)}}\right)\right)^2\leq 5(\ln c)^2$ we have that the second term is bounded by:
\begin{align}
	\frac{\lambda_j^{(2)}}{4}\sum_{i\neq j}\alpha_{ji}^2 \left(\log\mu_i-\log\left(\frac{\lambda_j^{(2)}}{\lambda_j^{(1)}}\right)\right)^2 & \leq \frac{\lambda_j^{(2)}}{4}5(\ln c)^2 \sum_{i\neq j}\alpha_{ji}^2 \nonumber \\
	& \leq \frac{5\lambda_j^{(2)}}{4}(\ln c)^2 \sum_{i\neq j} \epsilon^2 
	\frac{1}{\gamma_i^2} \frac{(\lambda_i^{(2)}-\lambda_j^{(2)})^2}{\lambda_i^{(1)}\lambda_j^{(1)}}
	%\frac{c^2}{\gamma_i^2} \left(\sqrt{\frac{\lambda_i^{(2)}}{\lambda_j^{(1)}}} + \sqrt{\frac{\lambda_j^{(2)}}{\lambda_i^{(1)}}} \right)^2
	\left|e_j^\top\mathbf{U}_1^\top\mathbf{A}e_i \right|^2 \nonumber\\
	%& = 5\epsilon^2c^2(\ln c)^2 \sum_{i\neq j} \frac{1}{\gamma_i^2} \left(\sqrt{\frac{\lambda_i^{(2)}}{\lambda_j^{(1)}}} + \sqrt{\frac{\lambda_j^{(2)}}{\lambda_i^{(1)}}}\right)^2\left|e_j^\top\mathbf{U}_1^\top\mathbf{A}e_i \right|^2
	& \leq \frac{5}{4}\epsilon^2(\ln c)^2\frac{\lambda_j^{(2)}}{\lambda_j^{(1)}} \sum_{i\neq j} 
	\frac{1}{\gamma_i^2} \frac{(\lambda_i^{(2)}-\lambda_j^{(2)})^2}{\lambda_i^{(1)}}\left|e_j^\top\mathbf{U}_1^\top\mathbf{A}e_i \right|^2 \, .
\end{align}
Continuing with a few coarse steps:
\begin{align}
    \frac{\lambda_j^{(2)}}{4}\sum_{i\neq j}\alpha_{ji}^2 \left(\log\mu_i-\log\left(\frac{\lambda_j^{(2)}}{\lambda_j^{(1)}}\right)\right)^2 & \leq \frac{5}{4}\epsilon^2\frac{c(\ln c)^2}{\min_i\{\gamma_i^2\lambda_i^{(1)}\}} \sum_{i\neq j} 
	\left|e_j^\top\mathbf{U}_1^\top\mathbf{A}e_i \right|^2\nonumber\\
	& \leq \frac{5}{2}\epsilon^2\frac{c(\ln c)^2}{\min_i\{\gamma_i^2\lambda_i^{(1)}\}}
\end{align}
where $\left|\lambda_i^{(2)}-\lambda_j^{(2)}\right|^2\leq 1$ due to the normalization of $\mathbf{W}_1$ and $\mathbf{W}_2$ and $\sum_{i\neq j} \left|e_j^\top\mathbf{U}_1^\top\mathbf{A}e_i \right|^2 \leq 2 \| \mathbf{A} e_j \|^2 \leq 2$ due to $\| \mathbf{A}\|=1$.

%where we use the rough bound $\left|\lambda_i^{(2)}-\lambda_j^{(2)}\right|^2\leq 4$, the fact that by Taylor expansion $\log (x+h)=\log(x)+h/x+O(h^2)$ for $x>0$ and sufficiently small $h$, and the fact that $\mu_i\geq c^{-1}/2$ for all $i$  by the assumption of $\mathbf{L}_1\mathbf{L}_2^{-1}$ and the rough bound that $\left(\log\mu_i-\log\left(\frac{\lambda_j^{(2)}}{\lambda_j^{(1)}}\right)\right)^2\leq 5(\ln c)^2$. 
%Note that the implied constant in \eqref{final bound part 1} depends on $\frac{(\ln c)^2}{\gamma_j^2\lambda_j^{(1)}}\|\mathbf{A}e_j\|^2$ since $\sum_{i\neq j} \left|e_i^\top (\mathbf{U}_1-\epsilon \lambda_j^{(2)} \mathbf{A})^\top \mathbf{A}e_j\right|^2\leq 2\|\mathbf{A}e_j\|^2+2\epsilon^2\|\mathbf{A}^\top \mathbf{A}e_j\|^2\leq 3 \|\mathbf{A}e_j\|^2$.
%
Similarly, it can be shown for \eqref{eq to control 2} that
\[
\sum_{i=1}^N  \alpha^2_{ji}(\log \mu_i)^2\left(\sqrt{\mu_i}-\sqrt{\frac{\lambda_j^{(2)}}{\lambda_j^{(1)}}}\right)^2=O(\epsilon^2)\,,
\]
and the proof is concluded.
\end{proof}

% \begin{remark}
% Note that \eqref{eq to control 1} and \eqref{eq to control 2} only count on the assumption in Theorem \ref{prop:pseudo_sa}. Without the assumption of the relationship between $\mathbf{W}_1$ and $\mathbf{W}_2$, we cannot control $\alpha_{ji}$ and $\lambda^{(2)}_i/\lambda^{(1)}_i$, and the bound of $\left\Vert\left(\mathbf{F}-0.5\sqrt{\lambda_j^{(1)}\lambda_j^{(2)}}\log\left(\frac{\lambda_j^{(1)}}{\lambda_j^{(2)}}\right)\mathrm{I}\right)\psi_j^{(1)}\right\Vert_2$ can be arbitrarily large. 
% \end{remark}
\begin{remark}
Note that the implied constant $\frac{\sqrt{c}\ln c}{\min_i\left(\gamma_i\sqrt{\lambda_i^{(1)}}\right)}$ might be large, narrowing the scope of Theorem \ref{prop:pseudo_apart}. Particularly in our context, the matrix $\mathbf{W}_1$ (and $\mathbf{W}_2$) tends to be close to low rank, for which $\min_i\left(\gamma_i\sqrt{\lambda_i^{(1)}}\right)$ is small.
\end{remark}
\begin{remark}
Empirically, we observe that $\psi_j^{(1)}=\sum_i \alpha_{ji} v_i \simeq \sum_{i \sim j} \alpha_{ji} v_i$, i.e., only a small number of expansion coefficients $\alpha_{ji}$ are non-negligible, for which $\lambda_i^{(1)}$ is close to $\lambda_j^{(1)}$. 
Therefore, in practice, the implied constant depends on $1/\min_{i \sim j}\left(\gamma_i\sqrt{\lambda_i^{(1)}}\right)$.
Since we are usually interested in principal components $\psi_j^{(1)}$ (i.e., with large $\lambda_j^{(1)}$), the implied constant is typically sufficiently large. 
\end{remark}
}

\section{Conclusions\label{sec:conc}}

% TODO: should we iterate here that we use Riemannian geometry of SPD matrices for the composition of diffusion operators?

In this work, we introduce a new multi-resolution analysis of temporal high-dimensional data with an underlying time-varying manifold structure. Our analysis is based on the definition of two new composite operators that represent the relation of two aligned datasets jointly sampled from two diffeomorphic manifolds in terms of their spectral components. Specifically, we showed that these operators not only recover but also distinguish different types of common spectral components of the underlying manifolds and that each operator emphasizes different properties. 
One operator was shown to emphasize common components that are similarly expressed in the two manifolds, and the other operator was shown to emphasize the common components that are expressed with significantly different eigenvalues.
In the context of spatiotemporal data analysis, the application of the new operators is analogous to low-pass and high-pass filters. Therefore, by applying them in multiple resolutions, we devise a wavelet-like analysis framework.
We demonstrated this framework on a dynamical system describing a transitory double-gyre flow, showing that such a framework can be used for the analysis of non-stationary multivariate time-series.

In addition to spatiotemporal analysis, we showed that the new composite operators may be useful for multimodal data analysis as well. Specifically, we showed application to remote sensing, demonstrating the recovery of meaningful properties expressed by  different sensing modalities.

In the future, we plan to extend the definition of the operators $\mathbf{S}$ and $\mathbf{F}$ from two to more time frames (datasets). 
In addition, since our analysis results in a large number of vectors representing the common components at different scales and time-points, we plan to develop compact representations of these components, which may lead to improved, more conclusive results for highly non-stationary time-series.

% TODO: keep the following?
Finally, we remark that in our model, we represent each sample by an undirected weighted graph, and then, analyze the temporal sequence of graphs. Another interesting future work would be to investigate our Riemannian composite operators in the context of graph neural networks (GNNs) and graph convolutional networks (GCNs) \cite{scarselli2008graph,kipf2016semi,bronstein2017geometric}.

\bibliographystyle{abbrv}
\bibliography{papers}

%%%%%%%%%%%%%%%%%%%%%%%%%%%%%%%%%%%%%%%%%%%%%%%%%%%%%%%%%%%%%%
%%%%%%%%%%%%%%%%%%%%%%%%%%%%%%%%%%%%%%%%%%%%%%%%%%%%%%%%%%%%%%
\clearpage
\begin{appendix}
\section{Additional Statements\label{app:add_state}}

The following equivalent forms of operators $\mathbf{S}$ and $\mathbf{F}$ are used in proofs of the theorems.

\begin{proposition}[Equivalent Forms of the Operators \textbf{S} and \textbf{F}]\label{prop_app:sa_equiv_forms}
        We have
        \[
        \mathbf{S} = \mathbf{W}_1^{1/2}\left(\mathbf{W}_1^{-1/2}\mathbf{W}_2\mathbf{W}_1^{-1/2}\right)^{1/2}\mathbf{W}_1^{1/2} = \left(\mathbf{W}_2\mathbf{W}_1^{-1}\right)^{1/2}\mathbf{W}_1
        \]
    and
        \[
        \mathbf{F} = \mathbf{S}^{1/2}\log\left(\mathbf{S}^{-1/2}\mathbf{W}_1\mathbf{S}^{-1/2}\right)\mathbf{S}^{1/2} = \log\left(\mathbf{W}_1\mathbf{S}^{-1}\right)\mathbf{S}\,.
        \]
\end{proposition}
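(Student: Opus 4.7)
The plan is to reduce both identities to the standard fact that analytic functions of matrices commute with similarity transformations: if $\mathbf{A} = \mathbf{P}\mathbf{B}\mathbf{P}^{-1}$ and $f$ is analytic on a neighborhood of the spectrum of $\mathbf{B}$, then $f(\mathbf{A}) = \mathbf{P} f(\mathbf{B}) \mathbf{P}^{-1}$. Both equivalences follow by picking the right conjugating matrix and the right function ($x^{1/2}$ for $\mathbf{S}$, $\log x$ for $\mathbf{F}$) and then rearranging one factor. The only thing to check is that the matrices whose square root or logarithm we take have spectra in the domain of those functions, which holds because $\mathbf{W}_1$, $\mathbf{W}_2$, and $\mathbf{S}$ are SPD, so the congruences $\mathbf{W}_1^{-1/2}\mathbf{W}_2\mathbf{W}_1^{-1/2}$ and $\mathbf{S}^{-1/2}\mathbf{W}_1\mathbf{S}^{-1/2}$ are SPD, and their similar counterparts $\mathbf{W}_2\mathbf{W}_1^{-1}$ and $\mathbf{W}_1\mathbf{S}^{-1}$ have positive spectra.

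First I would handle $\mathbf{S}$. The observation is that
\[
\mathbf{W}_2\mathbf{W}_1^{-1} \;=\; \mathbf{W}_1^{1/2}\bigl(\mathbf{W}_1^{-1/2}\mathbf{W}_2\mathbf{W}_1^{-1/2}\bigr)\mathbf{W}_1^{-1/2},
\]
so applying $f(x) = x^{1/2}$ via the similarity principle gives
\[
(\mathbf{W}_2\mathbf{W}_1^{-1})^{1/2} \;=\; \mathbf{W}_1^{1/2}\bigl(\mathbf{W}_1^{-1/2}\mathbf{W}_2\mathbf{W}_1^{-1/2}\bigr)^{1/2}\mathbf{W}_1^{-1/2}.
\]
Right-multiplying both sides by $\mathbf{W}_1$ absorbs the trailing $\mathbf{W}_1^{-1/2}\mathbf{W}_1 = \mathbf{W}_1^{1/2}$ and recovers the original definition of $\mathbf{S}$, yielding $(\mathbf{W}_2\mathbf{W}_1^{-1})^{1/2}\mathbf{W}_1 = \mathbf{S}$.

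Next I would repeat the same pattern for $\mathbf{F}$, now with $\mathbf{S}$ playing the role of $\mathbf{W}_1$. From the similarity
\[
\mathbf{W}_1\mathbf{S}^{-1} \;=\; \mathbf{S}^{1/2}\bigl(\mathbf{S}^{-1/2}\mathbf{W}_1\mathbf{S}^{-1/2}\bigr)\mathbf{S}^{-1/2},
\]
and the fact that $\mathbf{S}^{-1/2}\mathbf{W}_1\mathbf{S}^{-1/2}$ is SPD so $\log$ is well-defined in a neighborhood of its spectrum, we get
\[
\log(\mathbf{W}_1\mathbf{S}^{-1}) \;=\; \mathbf{S}^{1/2}\log\bigl(\mathbf{S}^{-1/2}\mathbf{W}_1\mathbf{S}^{-1/2}\bigr)\mathbf{S}^{-1/2}.
\]
Right-multiplying by $\mathbf{S}$ gives $\log(\mathbf{W}_1\mathbf{S}^{-1})\mathbf{S} = \mathbf{S}^{1/2}\log(\mathbf{S}^{-1/2}\mathbf{W}_1\mathbf{S}^{-1/2})\mathbf{S}^{1/2} = \mathbf{F}$.

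There is no real obstacle here; the proposition is essentially a bookkeeping identity. The only subtlety worth flagging explicitly is that although $\mathbf{W}_2\mathbf{W}_1^{-1}$ and $\mathbf{W}_1\mathbf{S}^{-1}$ are not themselves symmetric, they are similar to SPD matrices via the $\mathbf{W}_1^{1/2}$ and $\mathbf{S}^{1/2}$ conjugations above, so their spectra are positive and real. This justifies unambiguously defining the principal square root and logarithm on these nonsymmetric products, either via the holomorphic functional calculus or by defining $f(\mathbf{M}) := \mathbf{P} f(\mathbf{B}) \mathbf{P}^{-1}$ directly, which is exactly the content of the identities being proved.
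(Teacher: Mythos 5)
Your proof is correct and takes essentially the same route as the paper: both rest on the similarity $\mathbf{W}_2\mathbf{W}_1^{-1}=\mathbf{W}_1^{1/2}\bigl(\mathbf{W}_1^{-1/2}\mathbf{W}_2\mathbf{W}_1^{-1/2}\bigr)\mathbf{W}_1^{-1/2}$ (and the analogous one with $\mathbf{S}$), push the function $x^{1/2}$ or $\log x$ through the conjugation, and absorb a trailing factor. The only difference is cosmetic: the paper spells out the intermediate step by writing the eigendecomposition explicitly, whereas you invoke the general functional-calculus fact directly; your remark that the nonsymmetric products have positive spectra because they are similar to SPD matrices is a welcome explicit justification that the paper leaves implicit.
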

\begin{proof}
To show the claim for $\mathbf{S}$, define the following:
    \begin{align}
    \mathbf{M} = \mathbf{W}_1^{-1/2}\mathbf{W}_2\mathbf{W}_1^{-1/2}\ \ \mbox{ and } \ \ %\label{eq_app:s_equiv_1}\\
    \tilde{\mathbf{M}} =  \mathbf{W}_2\mathbf{W}_1^{-1}.\label{eq_app:s_equiv_2}
    \end{align}
    Since $\mathbf{W}_1$ and $\mathbf{W}_2$ are positive definite, the matrix $\mathbf{M}$ is positive definite, and hence $\mathbf{M}$ and $\tilde{\mathbf{M}}$ are similar via $\mathbf{M}=\mathbf{W}_1^{-1/2}\tilde{\mathbf{M}}\mathbf{W}_1^{1/2}$.
    Denote the eigenvalue and eigenvector matrices of $\mathbf{M}$ by $\mathbf{\Lambda}^{(\mathbf{M})}$ and $\mathbf{V}^{(\mathbf{M})}$ respectively.
    Therefore, 
    the eigenvalue matrix of $\tilde{\mathbf{M}}$ is $\mathbf{\Lambda}^{(\tilde{\mathbf{M}})}=\mathbf{\Lambda}^{(\mathbf{M})}$, and the right and left eigenvectors are $\mathbf{V}_R^{(\tilde{\mathbf{M}})}=\mathbf{W}_1^{1/2}\mathbf{V}^{(\mathbf{M})}$ and $\mathbf{V}_L^{(\tilde{\mathbf{M}})}=\mathbf{W}_1^{-1/2}\mathbf{V}^{(\mathbf{M})}$ respectively.
    Thus, we have
    \begin{align}
    \mathbf{M}^{1/2} =&\, \mathbf{V}^{(\mathbf{M})}\left(\mathbf{\Lambda}^{(\mathbf{M})}\right)^{1/2}\left(\mathbf{V}^{(\mathbf{M})}\right)^T\label{eq_app:s_equiv_sqrt1}\\
    \tilde{\mathbf{M}}^{1/2}  = &\, \mathbf{W}_1^{1/2}\mathbf{V}^{(\mathbf{M})}\left(\mathbf{\Lambda}^{(\mathbf{M})}\right)^{1/2}\left(\mathbf{V}^{(\mathbf{M})}\right)^T\mathbf{W}_1^{-1/2}\nonumber
    \end{align}
    and hence $\tilde{\mathbf{M}}^{1/2}=\mathbf{W}_1^{1/2}\mathbf{M}^{1/2}\mathbf{W}_1^{-1/2}$.
    As a result, we have
        \[
        \mathbf{S}=\mathbf{W}_1^{1/2}\mathbf{M}^{1/2}\mathbf{W}_1^{1/2}=\mathbf{W}_1^{1/2}\mathbf{M}^{1/2}\mathbf{W}_1^{-1/2}\mathbf{W}_1=\tilde{\mathbf{M}}^{1/2}\mathbf{W}_1\,,
        \]
        which shows the claim.
    
    The proof for the claim for the operator $\mathbf{F}$ is similar. Define 
    \begin{align}
    \mathbf{N} := \mathbf{S}^{-1/2}\mathbf{W}_1\mathbf{S}^{-1/2}\ \ \mbox{ and } 
    \ \
    \tilde{\mathbf{N}} := \mathbf{W}_1\mathbf{S}^{-1}.\label{eq_app:a_equiv_2}
    \end{align}
    Since $\mathbf{S}$ is positive definite, $\mathbf{N}$ and $\tilde{\mathbf{N}}$ are similar and their eigenvalues and eigenvectors are related by $\mathbf{\Lambda}^{(\tilde{\mathbf{N}})}=\mathbf{\Lambda}^{(\mathbf{N})}$, $\mathbf{V}_R^{(\tilde{\mathbf{N}})}=\mathbf{S}^{1/2}\mathbf{V}^{(\mathbf{N})}$ and $\mathbf{V}_L^{(\tilde{\mathbf{N}})}=\mathbf{S}^{-1/2}\mathbf{V}^{(\mathbf{N})}$.
    The matrix logarithm of $\mathbf{N}$ and $\tilde{\mathbf{N}}$ can then be expressed by:
    \begin{align}
    \log(\mathbf{N}) = & \,\mathbf{V}^{(\mathbf{N})}\log\left(\mathbf{\Lambda}^{(\mathbf{N})}\right)\left(\mathbf{V}^{(\mathbf{N})}\right)^T\label{eq_app:a_equiv_sqrt1}\\
    \log(\tilde{\mathbf{N}}) = & \,\mathbf{S}^{1/2}\mathbf{V}^{(\mathbf{N})}\log\left(\mathbf{\Lambda}^{(\mathbf{N})}\right)\left(\mathbf{V}^{(\mathbf{N})}\right)^T\mathbf{S}^{-1/2},\nonumber
    \end{align}
    Based on these expressions the relationship between the logarithm of the two matrices is $\log(\tilde{\mathbf{N}})=\mathbf{S}^{1/2}\log(\mathbf{N})\mathbf{S}^{-1/2}$.
    The proof can now be concluded by 
\[
\mathbf{F}=\mathbf{S}^{1/2}\log(\mathbf{N})\mathbf{S}^{1/2}=\mathbf{S}^{1/2}\log(\mathbf{N})\mathbf{S}^{-1/2}\mathbf{S}=\log(\tilde{\mathbf{N}})\mathbf{S}\,.
\]
\end{proof}

Next, for completeness, we explicitly show that the equivalence between definitions 2 and 3 of the pseudo-spectrum in Definition \ref{def:pseudo} includes a shared pseudo-eigenvector.

\begin{proposition}\label{prop_app:pseudo_explicit}
Consider $\mathbf{M}\in\mathbb{R}^{N \times N}$ and a small $\epsilon>0$.
If $v\in\mathbb{R}^N$ with $\|v\|_2=1$ s.t. $\|(\mathbf{M}-\lambda \mathbf{I})v\|_2 \le \epsilon$ for $\lambda \in \mathbb{R}$, then there exists $\mathbf{E}\in\mathbb{R}^{N \times N}$ with $\|\mathbf{E}\|\le \epsilon$ s.t. $(\mathbf{M} + \mathbf{E})v = \lambda v$.
\end{proposition}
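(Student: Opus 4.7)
The plan is to construct $\mathbf{E}$ explicitly as a rank-one perturbation that exactly corrects the residual of $v$ with respect to the candidate eigenvalue $\lambda$. First, I would define the residual vector
\[
r := \lambda v - \mathbf{M}v,
\]
so that the hypothesis reads $\|r\|_2 \le \epsilon$. The goal then reduces to finding a matrix $\mathbf{E}$ satisfying $\mathbf{E}v = r$ with $\|\mathbf{E}\| \le \epsilon$, because then $(\mathbf{M}+\mathbf{E})v = \mathbf{M}v + r = \lambda v$ as required.

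The natural candidate is the rank-one matrix $\mathbf{E} := r\, v^{T}$. Using $\|v\|_2 = 1$, I would verify directly that $\mathbf{E}v = r\,(v^{T}v) = r$, which gives the eigenvalue equation. For the norm, I would invoke the standard fact that for any outer product $ab^{T}$ the induced operator ($\ell_2$) norm equals $\|a\|_2\|b\|_2$, so
\[
\|\mathbf{E}\| = \|r\|_2 \|v\|_2 = \|r\|_2 \le \epsilon.
\]
This establishes both required properties simultaneously.

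There is no real obstacle in this argument; the only thing worth flagging is that the choice of $\mathbf{E}$ is not unique (any matrix sending $v$ to $r$ with small norm would do), but the rank-one construction $r v^{T}$ is the most economical and achieves the tightest possible bound. I would conclude by remarking that this is why the three formulations of the $\epsilon$-pseudo-spectrum in Definition \ref{def:pseudo} are equivalent: an $\epsilon$-pseudo-eigenvector with respect to $\lambda$ can always be promoted to a genuine eigenvector of a nearby perturbation $\mathbf{M}+\mathbf{E}$ of $\mathbf{M}$, and in fact with the \emph{same} vector $v$, which is precisely what is needed for the application in the proof of Theorem \ref{prop:pseudo_sa}.
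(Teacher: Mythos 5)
Your proposal is correct and is in fact identical to the paper's proof: the paper's operator $\mathbf{B}u = -\langle u, v\rangle(\mathbf{M}-\lambda\mathbf{I})v$ is precisely the rank-one matrix $rv^{T}$ you construct, just written in functional rather than outer-product notation. Both arguments then conclude with the same norm bound $\|rv^T\|=\|r\|_2\|v\|_2\le\epsilon$.
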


\begin{proof}
Define the following rank one operator:
\[
    \mathbf{B}u = -\langle u , v \rangle (\mathbf{M} - \lambda \mathbf{I})v,
\]
where $u\in\mathbf{R}^N$. Then, we have that $\|\mathbf{B}\|\le \epsilon$ and $(\mathbf{M} + \mathbf{B})v = \lambda v$.
\end{proof}

\section{3-Dimensional Tori Example\label{sub:4d_torus}}
In this subsection we consider datasets of samples from two 3-dimensional tori. 
Using these datasets we demonstrate that the operators $\mathbf{S}$ and $\mathbf{F}$ indeed recover the similarly expressed common components and the differently expressed common components, respectively.
In addition, in Subsection \ref{subsub:3d_tori_unique}, we demonstrate that these operators still recover the common components, even when modality-specific unique structures exist.

\subsection{With Common Components Only\label{subsub:3d_tori_common}}
Consider two 3-dimensional tori in two observation spaces, denoted by $\mathcal{O}_1$ and $\mathcal{O}_2$. 
Both tori are obtained by sampling the product of three $\mathcal{S}^1$ manifolds that differ in scaling, which are embedded into two ambient spaces:
\begin{eqnarray}
\mathcal{O}_1 & = f^{(1)}_1\left(\mathcal{S}^{1}\right)\times f^{(2)}_1\left(\mathcal{S}^{1}\right)\times\mathcal{S}^{1} & \mapsto\mathcal{X}_1\label{eq:3d_tori_common_prob_form1}\\
\mathcal{O}_2 & = f^{(1)}_2\left(\mathcal{S}^{1}\right)\times f^{(2)}_2\left(\mathcal{S}^{1}\right)\times\mathcal{S}^{1} & \mapsto\mathcal{X}_2\label{eq:3d_tori_common_prob_form2}
\end{eqnarray}
where $f^{(k)}_\ell:\mathcal{S}^{1}\rightarrow\mathcal{S}^1$, $\nabla f^{(k)}_\ell\vert_x=\alpha^{(k)}_\ell$ $\forall x\in\mathcal{S}^{1}$ and $\alpha^{(1)}_1=\alpha^{(2)}_2$, $\alpha^{(2)}_1=\alpha^{(1)}_2$.
We assume that the embedding into the ambient space preserves the order of the $\mathcal{S}^1$ manifolds and therefore, these scale differences can be represented by switching two of the main angles in the parameterization of the two tori.
The samples in the ambient spaces can be explicitly described by the following embedding in 4D:
\begin{eqnarray}
\mathcal{X}_1\ni\mathbf{x}_1[i] = \begin{Bmatrix*}[l]
x_\ell[i] & = & (\tilde{R}+(R+r\cos(\theta_2[i]))\cos(\theta_1[i]))\cos(\theta_3[i])\\ 
y_\ell[i] & = & (\tilde{R}+(R+r\cos(\theta_2[i]))\cos(\theta_1[i]))\sin(\theta_3[i])\\
z_\ell[i] & = & (R+r\cos(\theta_2[i]))\sin(\theta_1[i])\\
w_\ell[i] & = & r\sin(\theta_2[i])
\end{Bmatrix*}\label{eq:4d_param1}\\
\mathcal{X}_2\ni\mathbf{x}_2[i] = \begin{Bmatrix*}[l]
x_\ell[i] & = & (\tilde{R}+(R+r\cos(\theta_1[i]))\cos(\theta_2[i]))\cos(\theta_3[i])\\ 
y_\ell[i] & = & (\tilde{R}+(R+r\cos(\theta_1[i]))\cos(\theta_2[i]))\sin(\theta_3[i])\\
z_\ell[i] & = & (R+r\cos(\theta_1[i]))\sin(\theta_2[i])\\
w_\ell[i] & = & r\sin(\theta_1[i])
\end{Bmatrix*}\label{eq:4d_param2}
\end{eqnarray}
where $\theta_1[i],\theta_2[i],\theta_3[i]\in [0,2\pi]$, $r=2$, $R=7$ and $\tilde{R}=15$.
In this setting, the radii $R$ and $r$ are related to the scale parameters $\alpha_1^{(1)}=\alpha_2^{(2)}$ and $\alpha_1^{(2)}=\alpha_2^{(1)}$ that define the diffeomorphisms $f^{(k)}_\ell$ in \eqref{eq:3d_tori_common_prob_form1} and \eqref{eq:3d_tori_common_prob_form2}.

We sample $N=2000$ points from each torus, $\{\mathbf{x}_\ell[i]\}_{i=1}^N$, $\mathbf{x}_\ell[i]\in\mathbb{R}^4$, $\ell=1,2$, with point correspondence between the two tori, which is obtained through correspondence of the samples of $\theta_1$, $\theta_2$ and $\theta_3$. 

The data is visualized in Figure \ref{fig:Tori_data_3d_common_theta} where projections of the two tori are colored according to the $3$ different angles, $\cos(\theta_1)$ in (a), $\cos(\theta_2)$ in (b) and $\cos(\theta_3)$ in (c).
This figure presents projections of the tori to the following 3-dimensional spaces: `xyz', `xyw' and `yzw'.
\begin{figure}[bhtp!]
\centering
\subfloat[]{\includegraphics[width=0.35\textwidth]{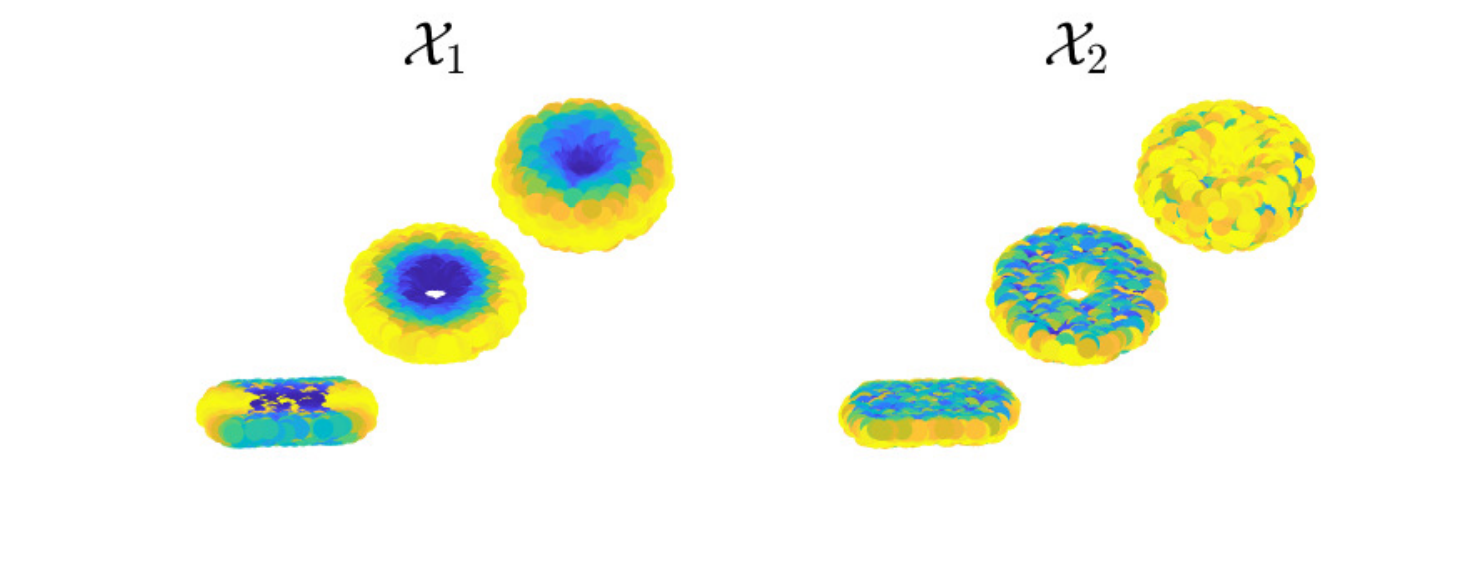}}
\subfloat[]{\includegraphics[width=0.35\textwidth]{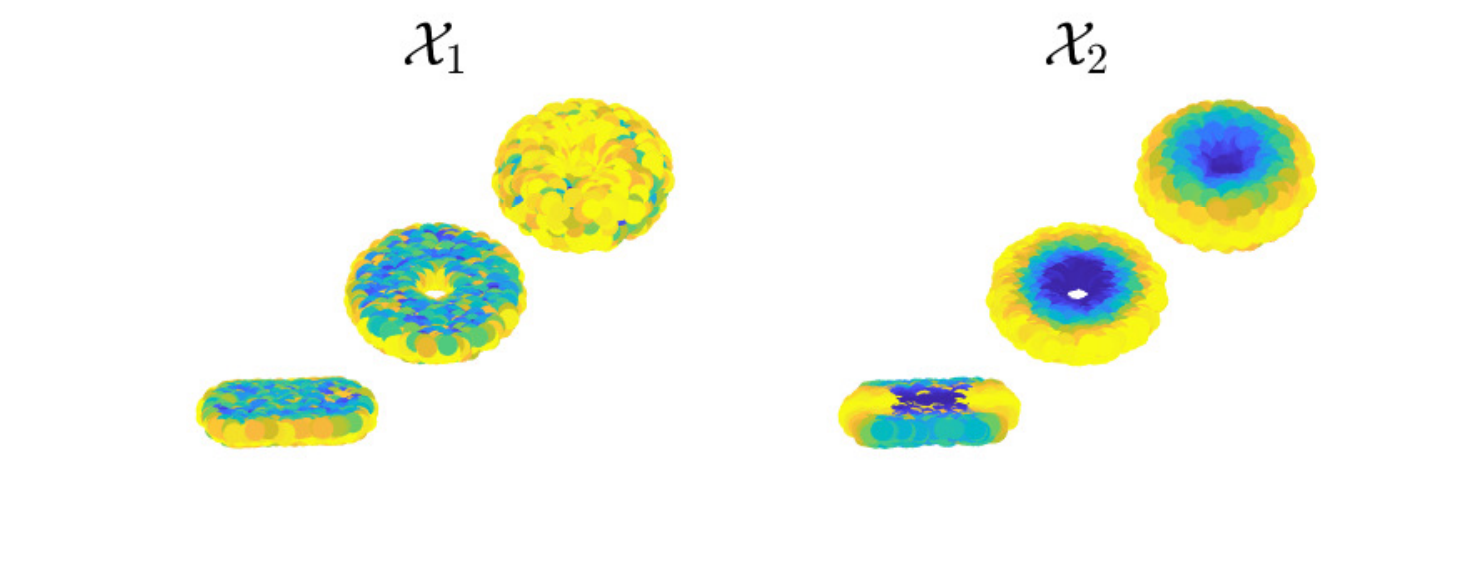}}
\subfloat[]{\includegraphics[width=0.35\textwidth]{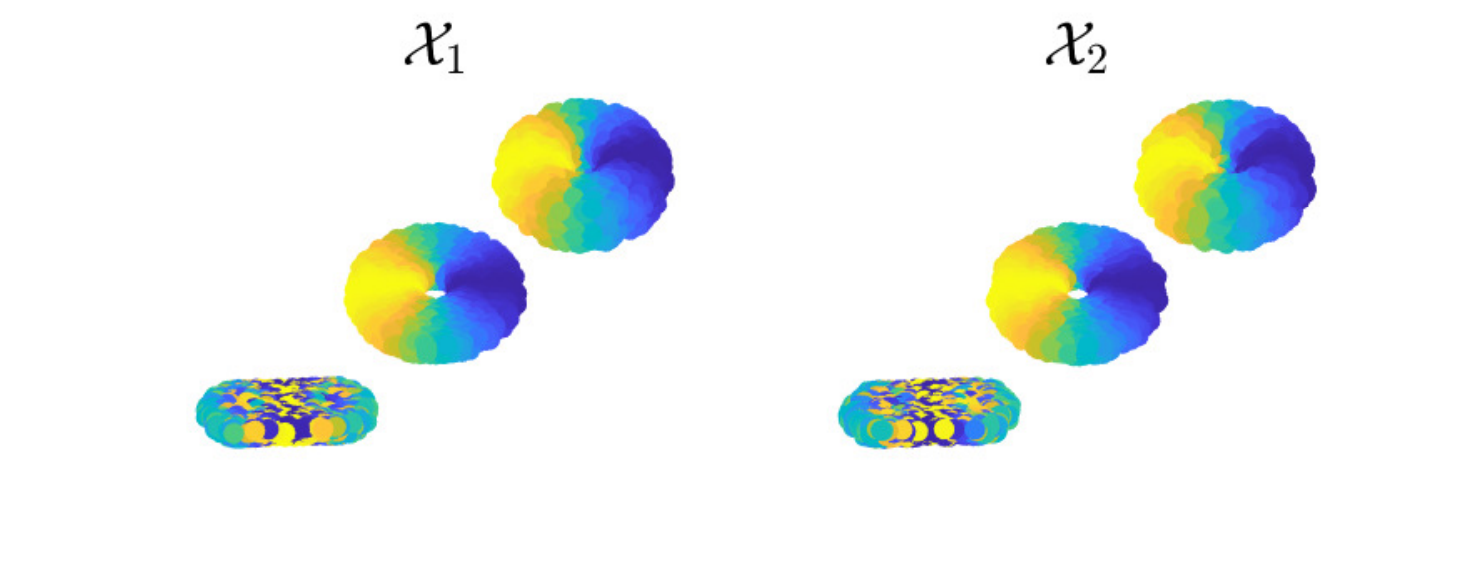}}
\caption[Two 3D tori with angles $\theta_1$, $\theta_2$, $\theta_3$]{Two 3D tori with angles $\theta_1$, $\theta_2$, $\theta_3$. Angles $\theta_1$ and $\theta_2$ are switched between the tori and angles $\theta_3$ is similar. 
Both tori are colored according to (a) $\cos(\theta_1)$, (b) $\cos(\theta_2)$ and (c) $\cos(\theta_3)$.\label{fig:Tori_data_3d_common_theta}}
\end{figure}
In this example, the two modalities, represented by $\mathcal{X}_1$ and $\mathcal{X}_2$, includes only common components, some of which are differently expressed and some are similarly expressed. 
Specifically, $\theta_3$ is related to the similarly expressed common components, as depicted by Figure \ref{fig:Tori_data_3d_common_theta} (c), and $\theta_1$ and $\theta_2$ are related to the differently expressed common components, as depicted by Figure \ref{fig:Tori_data_3d_common_theta} (a) and (b).

We apply Algorithm \ref{alg:SA_implementation} to the samples of the two 3-dimensional tori with $\sigma_\ell=\mathrm{median}(d(\mathbf{x}_\ell[i],\mathbf{x}_\ell[j]))$, $i,j=1,...,N$, $\ell=1,2$, and compute the eigenvectors of the kernels $\mathbf{W}_1$, $\mathbf{W}_2$, as well as the operators $\mathbf{S}$ and $\mathbf{F}$ and their eigenvectors.
These eigenvectors compose the embedding of each operator.
The eigenvectors are denoted in the following figures by $\psi^{(1)}_n$, $\psi^{(2)}_n$, $\psi^{(\mathbf{S})}_n$ and $\psi^{(\mathbf{F})}_n$, respectively, and are ordered according to a decreasing magnitude of their corresponding eigenvalues, denoted by $\lambda^{(1)}_n$, $\lambda^{(2)}_n$, $\lambda^{(\mathbf{S})}_n$ and $\lambda^{(\mathbf{F})}_n$, respectively. 

To demonstrate that the operator $\mathbf{S}$ indeed emphasizes the similarly expressed common components and that $\mathbf{F}$ emphasizes the differently expressed common components, we compare their eigenvectors with the eigenvectors of $\mathbf{W}_1$ and $\mathbf{W}_2$ and with the angles $\theta_1$, $\theta_2$ and $\theta_3$ parameterizing the tori.
Note that the sign of the eigenvalues of $\mathbf{F}$ is meaningful and provides information on the source of the dominant difference components, as a demonstrated by Theorem \ref{theo:A_eigs}. 
Therefore, in order to clearly distinguish between the largest positive and smallest negative eigenvalues, an eigenvector of $\mathbf{F}$ that correspond to the $k$th largest (positive) eigenvalue is denoted by $\psi^{(\mathbf{F})}_k$ and an eigenvector that corresponds to the $\ell$th smallest negative eigenvalue is denoted by $\psi^{\mathbf{(A)}}_{-\ell\ \mathrm{mod}\ N}$, throughout this section.

Figure \ref{fig:Tori_3d_all_common_SA} presents the two tori, top and bottom in Figure \ref{fig:Tori_3d_all_common_SA} (a) and in Figure \ref{fig:Tori_3d_all_common_SA} (b), colored according to these eigenvectors and the torus angles.
The vector that each torus was colored by is denoted in the title of each plot. 
In Figure \ref{fig:Tori_3d_all_common_SA} (a), the tori are colored according to the two leading eigenvectors of $\mathbf{S}$, the leading eigenvector of $\mathbf{W}_1$ (top) and $\mathbf{W}_2$ (bottom) and the cosine of $\theta_3$, from left to right.
In Figure \ref{fig:Tori_3d_all_common_SA} (b), the tori are colored according to two leading eigenvectors of $\mathbf{F}$ (corresponding to positive and negative eigenvalues), eigenvector number $7$ of $\mathbf{W}_1$ (top) and $\mathbf{W}_2$ (bottom), the cosine of $\theta_1$ and the cosine of $\theta_2$, from left to right.
% The eigenvector of $\mathbf{A}$ that corresponds to the $\ell$th smallest negative eigenvalue is denoted in this figure and throughout this section by $\psi^{\mathbf{(A)}}_{-\ell\ \mathrm{mod}\ N}$.
\begin{figure}[bhtp!]
\centering
\subfloat[]{\includegraphics[width=0.8\textwidth]{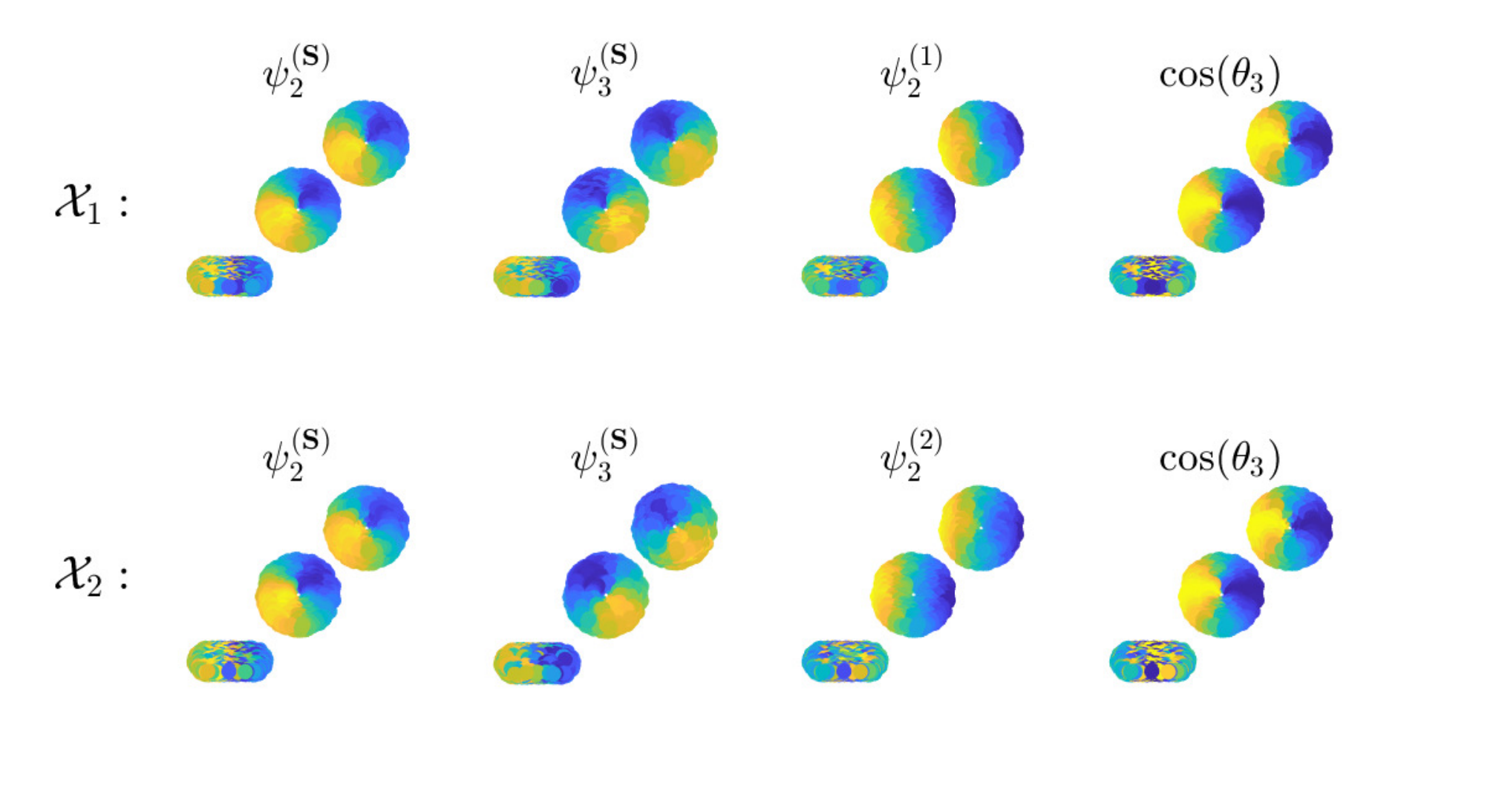}}

\subfloat[]{\includegraphics[width=1\textwidth]{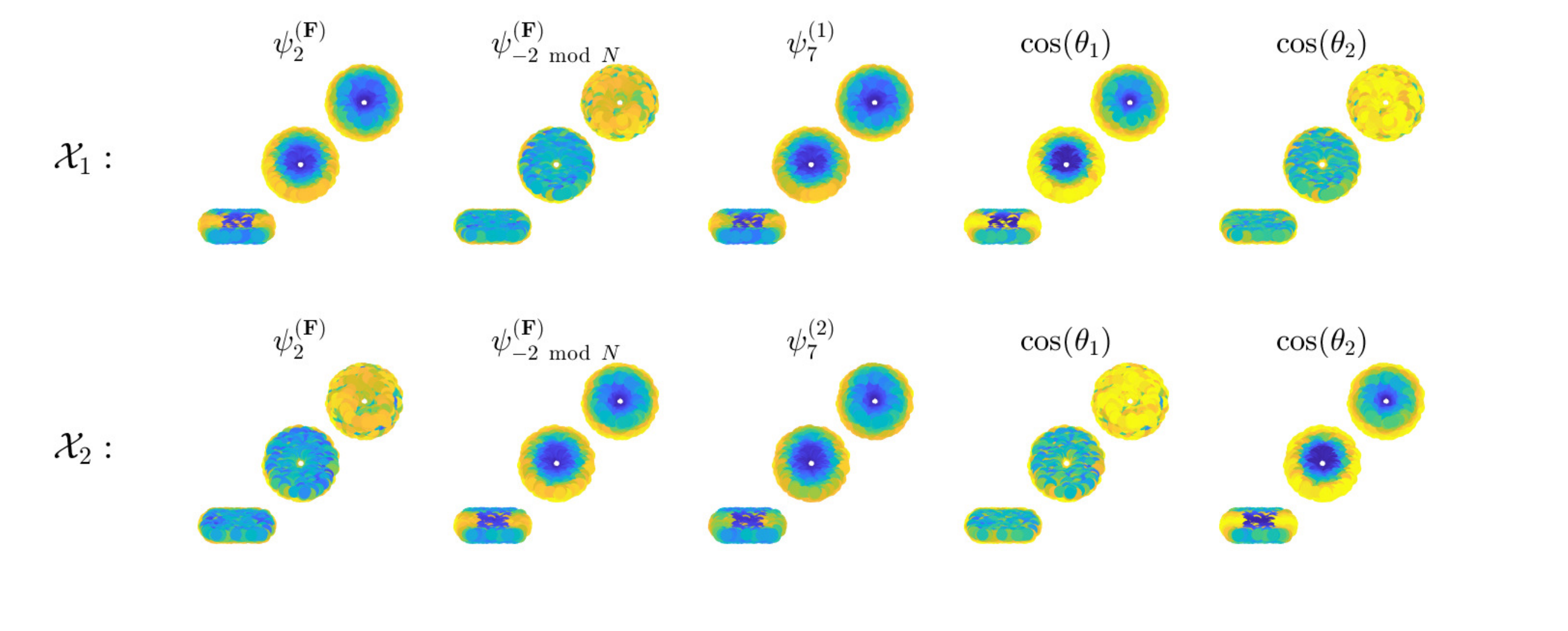}}
\caption{The two tori colored according to eigenvectors of the operators and kernels and according to the torus angles.
(a) Eigenvectors and angle that are captured by the operator $\mathbf{S}$, (b) eigenvectors and angles that are captured by the operator $\mathbf{F}$.
\label{fig:Tori_3d_all_common_SA}}
\end{figure}

Figure \ref{fig:Tori_3d_all_common_SA} (a) depicts that the operator $\mathbf{S}$ captures the similarly expressed common component, i.e. $\theta_3$, the angle that is related to the $\mathcal{S}^1$ manifold that does not undergo any transformation between the two observation spaces.
This is indicated by similarity of the torus colors (up to some rotation), when comparing the coloring according to the eigenvectors of $\mathbf{S}$ with the coloring according to $\cos(\theta_3)$.
In addition, the torus colors are also very similar when colored according to the second eigenvectors of $\mathbf{W}_1$ and $\mathbf{W}_2$. 
This implies that $\theta_3$ is highly expressed by both kernels $\mathbf{W}_1$ and $\mathbf{W}_2$, meaning that it is a dominant component in both tori.

Note that the first eigenvectors of $\mathbf{W}_1$ and $\mathbf{W}_2$ (and in this example also of $\mathbf{S}$) are related to the point density on each torus, which is of less interest in this example, and were omitted therefor.

In contrast, Figure \ref{fig:Tori_3d_all_common_SA} (b) depicts that the operator $\mathbf{F}$ captures the differently expressed common components, i.e. $\theta_1$ and $\theta_2$, the angles that are related to $\mathcal{S}^1$ manifolds that undergo transformations between the two observation spaces.
This is indicated by the similarity in the torus colors, when comparing the coloring according to the eigenvectors of $\mathbf{F}$ with the coloring according to $\cos(\theta_1)$ and $\cos(\theta_2)$.
Note that this figure also nicely demonstrates the connection between the sign of the eigenvalues of $\mathbf{F}$ and the source of the dominant component.
For example, the angle $\theta_1$ is a more dominant component in torus $\mathcal{X}_1$, i.e. its corresponding $\mathcal{S}^1$ manifold has a larger radius compared with $\mathcal{X}_2$, and this angle is captured by an eigenvector of $\mathbf{F}$ that corresponds to a positive eigenvalue.
In contrast, the eigenvector that corresponds to a negative eigenvalue of $\mathbf{F}$ captures the angle $\theta_2$, which is a more dominant component in torus $\mathcal{X}_2$.

Note that in kernels $\mathbf{W}_1$ and $\mathbf{W}_2$ the angles $\theta_1$ and $\theta_2$ are expressed only in less dominant eigenvectors (corresponding to much smaller eigenvalues).
Specifically, $\theta_1$ and $\theta_2$ are first captured by $\mathbf{W}_1$ only in eigenvector number $6$ (and $7$ as presented in the figure) and eigenvector number $17$, respectively, and their location in $\mathbf{W}_2$ is similar but reversed.
Therefore, this demonstrates that the operator $\mathbf{F}$ significantly emphasized these differently expressed common components.
We note in addition, that the eigenvectors of $\mathbf{F}$ that correspond to the first (largest) positive and negative eigenvalues were highly related to $\theta_1$ and $\theta_2$ as well, specifically, they captured $\sin(\theta_1)$ and $\sin(\theta_2)$, and were omitted for brevity.

%%%%%%%%%%%%%%%%%%%%%%%%%%%%%%%%%%%%%%%%%%%%%%%%%%%%%%%%%%%%%%%%

\subsection{With Unique Components\label{subsub:3d_tori_unique}}
Consider now a slightly different setting of the two $3$D tori, in which the third $\mathcal{S}^1$ manifold undergoes two different diffeomorphisms, which lead to a unique structure in each observation space.
The formulation for this setting is given by:
\begin{eqnarray}
\mathcal{O}_1 & = f^{(1)}_1\left(\mathcal{S}^{1}\right)\times f^{(2)}_1\left(\mathcal{S}^{1}\right)\times\mathcal{N}_1 & \mapsto\mathcal{X}_1\\
\mathcal{O}_2 & = f^{(1)}_2\left(\mathcal{S}^{1}\right)\times f^{(2)}_2\left(\mathcal{S}^{1}\right)\times\mathcal{N}_2 & \mapsto\mathcal{X}_2
\end{eqnarray}
where $\mathcal{N}_1$ and $\mathcal{N}_2$ denote two different unique structures, $f^{(k)}_\ell:\mathcal{S}^{1}\rightarrow\mathcal{S}^1$, $\nabla f^{(k)}_\ell\vert_x=\alpha^{(k)}_\ell$ $\forall x\in\mathcal{S}^{1}$ and $\alpha^{(1)}_1=\alpha^{(2)}_2$, $\alpha^{(2)}_1=\alpha^{(1)}_2$.
We assume that the embedding into the ambient spaces express these unique structures as different permutations of the angel corresponding to this $\mathcal{S}^1$ manifold.
In addition, similarly to Subsection \ref{subsub:3d_tori_common}, we assume that the scale differences between the first two manifolds can be represented by switching two of the main angles in the parameterization of the two tori.
In this case, the samples in the ambient spaces can be  explicitly described by the following embedding in 4D:
\begin{eqnarray}
\mathcal{X}_1\ni\mathbf{x}_1[i] = \begin{Bmatrix*}[l]
x_\ell[i] & = & (\tilde{R}+(R+r\cos(\theta_2[i]))\cos(\theta_1[i]))\cos(\theta_3[i])\\ 
y_\ell[i] & = & (\tilde{R}+(R+r\cos(\theta_2[i]))\cos(\theta_1[i]))\sin(\theta_3[i])\\
z_\ell[i] & = & (R+r\cos(\theta_2[i]))\sin(\theta_1[i])\\
w_\ell[i] & = & r\sin(\theta_2[i])
\end{Bmatrix*}\\%\label{eq:4d_param1}\\
\mathcal{X}_2\ni\mathbf{x}_2[i] = \begin{Bmatrix*}[l]
x_\ell[i] & = & (\tilde{R}+(R+r\cos(\theta_1[i]))\cos(\theta_2[i]))\cos(\theta_4[i])\\ 
y_\ell[i] & = & (\tilde{R}+(R+r\cos(\theta_1[i]))\cos(\theta_2[i]))\sin(\theta_4[i])\\
z_\ell[i] & = & (R+r\cos(\theta_1[i]))\sin(\theta_2[i])\\
w_\ell[i] & = & r\sin(\theta_1[i])
\end{Bmatrix*}\\%\label{eq:4d_param2}
\end{eqnarray}
where $\theta_1[i],\theta_2[i],\theta_3[i],\theta_4[i]\in [0,2\pi]$, $r=2$, $R=7$ and $\tilde{R}=15$.
Note that the difference between the parameterization in this setting compared with the setting in Subsection \ref{subsub:3d_tori_common} is the angle $\theta_4$, which differs from $\theta_3$.
These two angles are visualized in Figure \ref{fig:Tori_data_3d_unique_theta}, which presents projections of the tori to the following 3-dimensional spaces: `xyz', `xyw' and `yzw'.
\begin{figure}[bhtp!]
\centering
\subfloat[]{\includegraphics[width=0.5\textwidth]{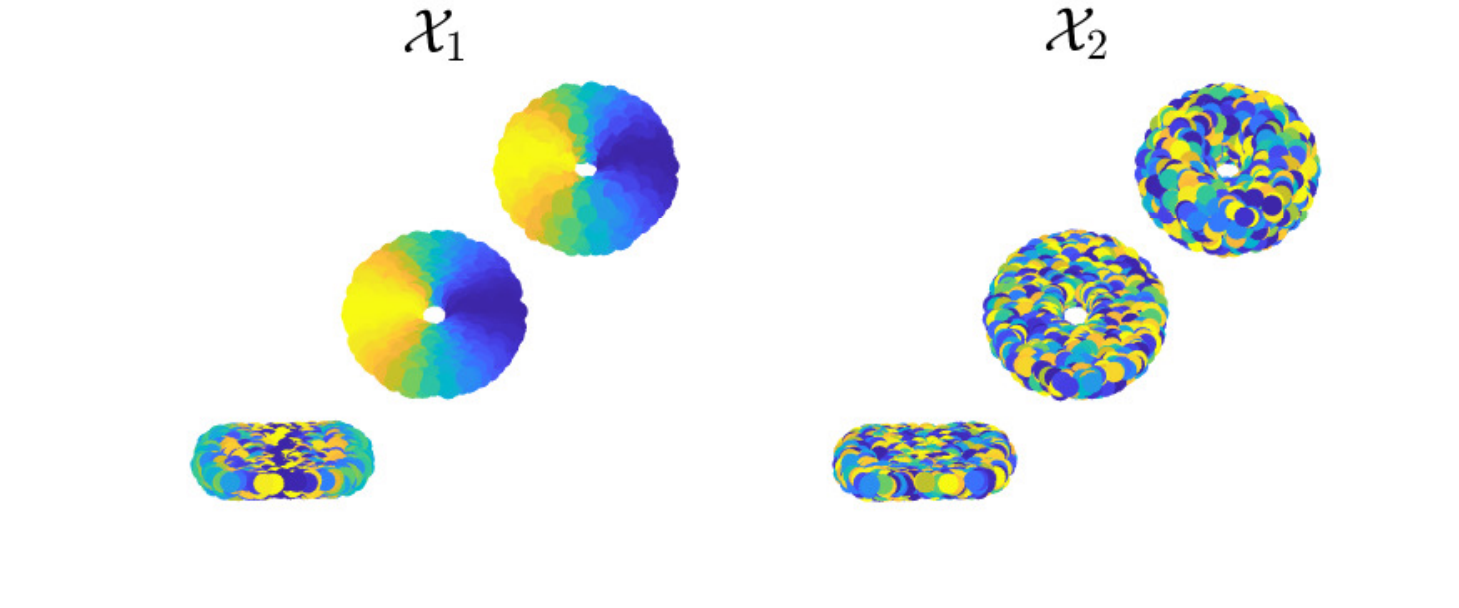}}
\subfloat[]{\includegraphics[width=0.5\textwidth]{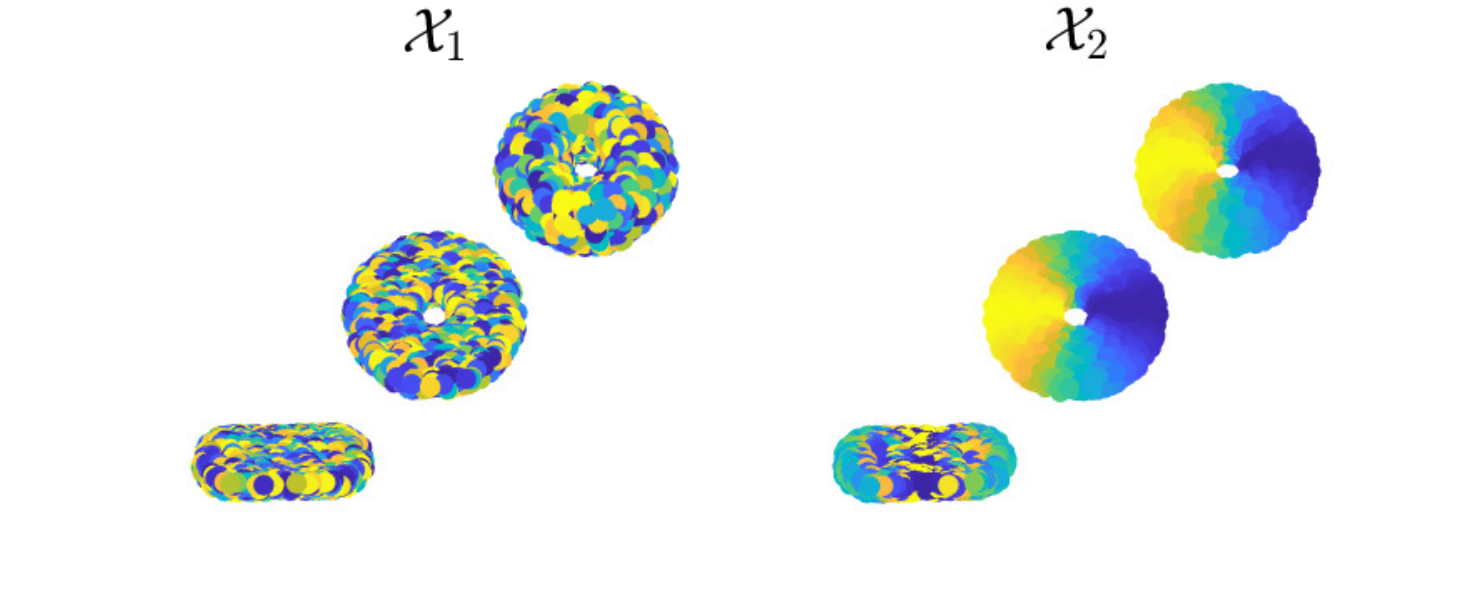}}
\caption{Two 3D tori colored according to (a) $\cos(\theta_3)$ and (b) $\cos(\theta_4)$. Angles $\theta_3$ and $\theta_4$ are unique to each torus.\label{fig:Tori_data_3d_unique_theta}}
\end{figure}
This figure demonstrate that $\theta_3$ is unique to $\mathcal{X}_1$ and that $\theta_4$ is unique to $\mathcal{X}_2$, since coloring $\mathcal{X}_2$ according to $\theta_3$ and $\mathcal{X}_1$ according to $\theta_4$ leads to random coloring, as presented in the right plot of Figure \ref{fig:Tori_data_3d_unique_theta} (a) and the left plot of Figure \ref{fig:Tori_data_3d_unique_theta} (b), respectively.

In summary, the two modalities, represented by $\mathcal{X}_1$ and $\mathcal{X}_2$, include common components that are differently expressed, which are related to $\theta_1$ and $\theta_2$ (similarly to the previous setting), and a unique component, which is related to $\theta_3$ and $\theta_4$.

We sample $N=2000$ points from each torus similarly to Subsection \ref{subsub:3d_tori_common} and apply Algorithm \ref{alg:SA_implementation} with $\sigma_\ell=\mathrm{median}(d(\mathbf{x}_\ell[i],\mathbf{x}_\ell[j]))$, $i,j=1,\dots,N$, $\ell=1,2$.
We then compute the eigenvectors of the operators $\mathbf{S}$ and $\mathbf{F}$.

Figure \ref{fig:Tori_3d_unique_SA} presents the two tori (top and bottom) colored according to the two leading eigenvectors of $\mathbf{F}$ (corresponding to positive and negative eigenvalues), the leading eigenvector of $\mathbf{S}$, the sine of $\theta_1$ and the sine of $\theta_2$, from left to right.
Note that this figure presents the sine of the angles compared with a different eigenvector of $\mathbf{F}$ as a complementary example to Figure \ref{fig:Tori_3d_all_common_SA}, since the two leading eigenvectors of $\mathbf{F}$ (with either negative or positive eigenvalues) capture the sine and cosine of the differently expressed angles.
\begin{figure}[bhtp!]
\centering
\includegraphics[width=1\textwidth]{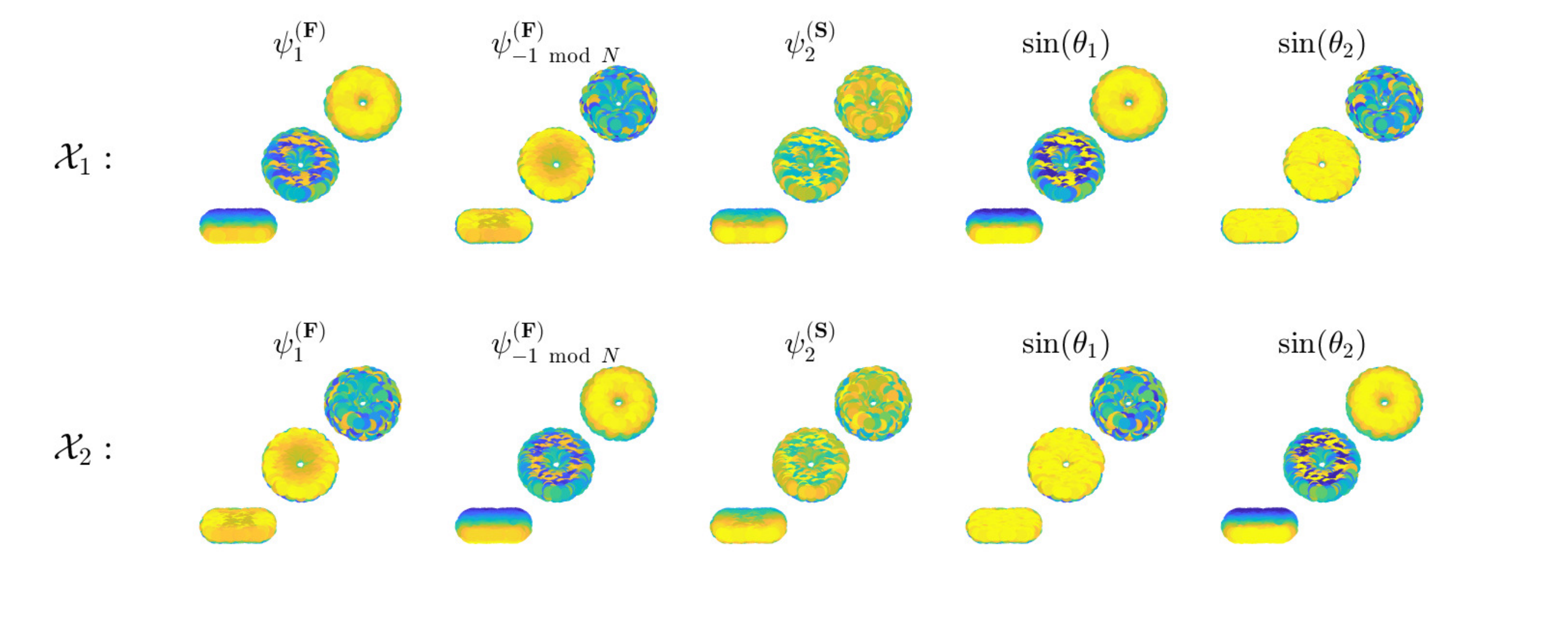} 
\caption{The two tori (top and bottom) colored according to the eigenvectors of $\mathbf{F}$ and $\mathbf{S}$ and according to $\sin(\theta_1)$ and $\sin(\theta_2)$.\label{fig:Tori_3d_unique_SA}}
\end{figure}

Figure \ref{fig:Tori_3d_unique_SA} depicts that the operator $\mathbf{F}$ captures the differently expressed common components, i.e. $\theta_1$ and $\theta_2$.
This is indicated by the similarity in the torus colors, when comparing the coloring according to the eigenvectors of $\mathbf{F}$ with the coloring according to $\sin(\theta_1)$ and $\sin(\theta_2)$.
Note the connection between the sign of the eigenvalues of $\mathbf{F}$ and the source of the dominant component, as depicted by the two left-most and two right-most plots in this figure.

In addition, in this setting, the operator $\mathbf{S}$ recovers the same common components, $\theta_1$ and $\theta_2$, since there are no similarly expressed common components.
However, the coloring of the middle plot in Figure \ref{fig:Tori_3d_unique_SA} depicts that the leading eigenvector of $\mathbf{S}$ is not identical to either one of the angles.
This is due to rotations of eigenvector subspaces of $\mathbf{S}$, which stem from the symmetry of the two tori problem.
Due to the symmetry, the angles $\theta_1$ and $\theta_2$ are expressed by the eigenvectors of both kernels, $\mathbf{W}_1$ and $\mathbf{W}_2$, but in a different complementary order.
For example, if $\theta_1$ appears in eigenvector number $6$ of $\mathbf{W}_1$ and eigenvector number $18$ of $\mathbf{W}_2$, then $\theta_2$ will appear in eigenvector number $18$ of $\mathbf{W}_1$ and eigenvector number $6$ of $\mathbf{W}_2$, and with similar complementary eigenvalues.
Since according to Theorem \ref{theo:S_eigs} the eigenvalues of $\mathbf{S}$ are given by $\sqrt{\lambda^{(1)}\lambda^{(2)}}$, where $\lambda^{(1)}$ and $\lambda^{(2)}$ correspond to common eigenvectors, the eigenvector of $\mathbf{S}$ that captures $\theta_1$ and the eigenvector of $\mathbf{S}$ that captures $\theta_2$ have the same eigenvalue, with a multiplicity of $2$.
This may lead to rotation of the eigenvectors in the eigenspace of eigenvalue $\sqrt{\lambda^{(1)}\lambda^{(2)}}$.
As a result, the computed eigenvectors correspond to a combination of $\theta_1$ and $\theta_2$, as depicted by the middle plot in Figure \ref{fig:Tori_3d_unique_SA}.
Moreover, note that this explains why the middle-top and middle-bottom plots appear to be colored similarly.

Finally, note that in this example, the leading eigenvectors of $\mathbf{W}_1$ and $\mathbf{W}_2$ capture angles $\theta_3$ and $\theta_4$, respectively, and that angles $\theta_1$ and $\theta_2$ first appear only in eigenvectors number $6$ and $18$ of $\mathbf{W}_1$, respectively, and similarly for $\mathbf{W}_2$ but in reversed order.
This demonstrates that even in the presence of unique structures in each torus, the operators $\mathbf{S}$ and $\mathbf{F}$ successfully recovers the common components, and that these common components are significantly enhanced compared with the other unique components.

\end{appendix}

\end{document}